\documentclass[12pt]{colt2016}

\usepackage{amsfonts,bbm,amsmath,color}
\usepackage{array, booktabs, multirow, calc}
\usepackage{microtype,times}
\usepackage{colortbl}

\newcommand{\R}{\mathbb{R}}

\newcommand{\X}{\mathcal{X}}
\newcommand{\err}{\mathrm{err}}

\renewcommand{\vec}[1]{\boldsymbol{#1}}
\newcommand{\E}{\mathop{\mathbb{E}}}

\newcommand{\Z}{\mathcal{Z}}

\newcommand{\Y}{\mathcal{Y}}

\newcommand{\Hyp}{\mathcal{H}}

\newcommand{\Prob}{\mathbb{P}}

\newcommand{\VCH}{\mathrm{VC}_{\Hyp}}

\definecolor{missing}{rgb}{.9,.9,.9}

\newenvironment{sproof}{\par\noindent{\bfseries\upshape
  Proof sketch\ }}{\hfill\BlackBox\\[2mm]}

\title{Minimax Lower Bounds for Realizable Transductive Classification}

\coltauthor{\Name{Ilya Tolstikhin}\\
  \Email{ilya@tuebingen.mpg.de}\\
  \addr  Max Planck Institute for Intelligent Systems, T\"ubingen, Germany
  \AND
  \Name{David Lopez-Paz}\\
  \Email{david@lopezpaz.org}\\
  \addr Facebook AI Research, Paris, France\thanks{A major part of this
  work was done while DLP was at the Max Planck Insitute for Intelligent
  Systems, T\"ubingen.}
}

\begin{document}

\maketitle

\thispagestyle{empty}

\begin{abstract}%
Transductive learning considers a {training set} of $m$ labeled samples and a
{test set} of $u$ unlabeled samples, with the goal of best labeling that
particular test set. 
Conversely, inductive learning considers a training set of $m$ labeled samples
drawn iid from $P(X,Y)$, with the goal of best labeling any future samples 
drawn iid from $P(X)$.
This comparison suggests that transduction is a much easier type of inference
than induction, but is this really the case?
This paper provides a negative answer to this question, by proving the first
known minimax lower bounds for transductive, realizable, binary classification.
Our lower bounds show that $m$ should be at least $\Omega(d/\epsilon +
\log(1/\delta)/\epsilon)$ when $\epsilon$-learning a concept class $\Hyp$ of
finite VC-dimension $d<\infty$ with confidence $1-\delta$, for all $m \leq u$. 
This result draws three important conclusions.
First, general transduction is as hard as general induction, 
since both problems have $\Omega(d/m)$ minimax values.
Second, the use of unlabeled data does not help general
transduction, since supervised learning algorithms such as ERM and \citep{H15}
match our transductive lower bounds while ignoring the unlabeled test set.
Third, our transductive lower bounds imply lower bounds for semi-supervised
learning, which add to the important discussion about the role of unlabeled data 
in machine learning.
\end{abstract}

\begin{keywords}
transductive learning, realizable learning, binary classification, minimax lower bounds
\end{keywords}

\section{Introduction}
\emph{Transductive learning} \citep{Vap98} considers two sets of data: a
\emph{training set} containing $m$ labeled samples, and an \emph{unlabeled set}
containing $u$ unlabeled samples. Using these two sets, the goal of
transductive learning is to produce a classifier that best labels the $u$
samples in the unlabeled set. Transductive learning contrasts \emph{inductive
learning}, which is given $m$ labeled samples drawn iid from some probability
distribution $P(X,Y)$, and aims to produce a classifier that best labels any future
unlabeled samples drawn iid from $P(X)$. 

Transductive learning is a natural choice for learning problems where the
locations of the test samples are known at training time.  For instance,
consider the task of predicting where a particular person is named during one
thousand hours of speech. Because of time, financial, or technical constraints,
it may be feasible to manually label only a small fraction of the speech
frames, to be used as training set. Since the speech frames for both
training and test samples are known, this would be a learning problem well
suited for transduction.  More generally, transductive learning has found a
wide and diverse range of successful applications, including text
categorization, image colorization, image compression, image segmentation,
reconstruction of protein interaction networks, speech tagging, and statistical
machine translation; all of these discussed and referenced in \citep[Section
1.2]{P08}. For further discussions on transductive learning, see
\citep[Chapters 6, 24, 25]{ChaSchZie06}.

The previous paragraphs reveal that transduction is reasoning from known
training examples to known test examples, while induction is reasoning from
known training examples to unknown test examples. Such comparison suggests that
transduction is a much easier type of inference than induction. However, the
literature provides no rigorous mathematical justification for this statement.
The main contribution of this paper is to provide a negative answer. To this
end, we prove the first known minimax lower bounds on transductive, realizable,
binary classification when $m\leq u$. Our proofs are inspired by
their counterparts in inductive learning \citep{DGL96}, which rely on the worst
case analysis of binary classification and the probabilistic method.
Our results draw three important consequences.
First, we conclude that general transduction is as hard as general induction,
since both problems exhibit $\Omega(d/m)$ minimax values.
Second, we realize that the use of unlabeled data does not help general
transductive learning, since supervised learning algorithms such as empirical
risk minimization and the algorithm of \citet{H15} match our
transductive lower bounds while ignoring the unlabeled test set.
Third, we use our transductive lower bounds to derive lower bounds for
semi-supervised learning, and relate them to the impossibility results of
\citet{BD08} and \citet{scholkopf12anti}. Therefore, our results add to the
important discussion about the role of unlabeled data in machine learning.

The rest of this paper is organized as follows.
Section~\ref{sec:formal-definition} reviews the two settings of transductive
learning that we will study in this paper, and reviews prior literature
concerning their learning theoretical guarantees.
Section~\ref{sec:main-results} presents our main contribution: the first known
minimax lower bounds for transductive, realizable, binary classification.
Section~\ref{sec:consequences} discusses the consequences of our lower bounds.
Finally, Section~\ref{sec:conclusion} closes our exposition with a summary
about the state-of-affairs in the theory of transductive binary classification.
For future reference, Table~\ref{table:results} summarizes all the
contributions contained in this paper.

\begin{table}[ht!]
\centering
\resizebox{\linewidth}{!}{
\renewcommand{\arraystretch}{1.2}
\begin{tabular}{|l|l|c|c|}
  \cline{3-4}
  \multicolumn{2}{c|}{} & Transductive S. I & Transductive S. II \\ \hline
  \multirow{4}{*}{ERM upper bound}
    & \multirow{2}{*}{with probability $\geq 1-\delta$}
      & $O\left(\frac{\VCH\log(N) + \log\frac{1}{\delta}}{m}\right)$
      & $O\left(\frac{\VCH\log(m) + \log\frac{1}{\delta}}{m}\right)$\\
      && {\small Theorem~\ref{thm:erm-upper-setting1}}
      &  {\small Theorem~\ref{thm:erm-upper-setting2-better}} \\\cline{2-4}
    & \multirow{2}{*}{in expectation}
      & $O\left(\frac{\VCH\log(N)}{m}\right)$
      & $O\left(\frac{\VCH\log(m)}{m}\right)$\\
      && {\small Theorem~\ref{thm:erm-upper-setting1}}
      &  {\small Theorem~\ref{thm:erm-upper-expect-setting2}}\\\cline{1-4}
      \multirow{4}{*}{\cite{H15} upper bound}
    & \multirow{2}{*}{with probability $\geq 1-\delta$}
      & \multirow{2}{*}{---} 
      & $O\left(\frac{\VCH + \log\frac{1}{\delta}}{m}\right)$\\
      && 
      &  {\small Theorem~\ref{thm:erm-upper-setting2-better}} \\\cline{2-4}
    & \multirow{2}{*}{in expectation}
      & \multirow{2}{*}{---} 
      & $O\left(\frac{\VCH}{m}\right)$\\
      && 
      &  {\small Theorem~\ref{thm:erm-upper-expect-setting2}}\\\cline{1-4}
  \multirow{4}{*}{Minimax lower bound}
    & \multirow{2}{*}{in probability}
      &  $\Omega\left( \frac{\VCH + \log\frac{1}{\delta}}{m}\right)$
      &  $\Omega\left( \frac{\VCH + \log\frac{1}{\delta}}{m}\right)$ \\
      && {\small Corollary~\ref{cor:sample-complexity-setting1}}
      &  {\small Corollary~\ref{cor:sample-complexity-setting2}}\\\cline{2-4}
    & \multirow{2}{*}{in expectation}
      &  $\Omega\left( \frac{\VCH}{m}\right)$
      &  $\Omega\left( \frac{\VCH}{m}\right)$\\
      && {\small Theorem~\ref{thm:minimax_exp}}
      &  {\small Theorem~\ref{thm:lower-exp-setting2}} \\\cline{1-4}
  \multicolumn{2}{|c|}{ERM gap} & $O(\log N )$  & $O(\log m)$ \\\hline
  \multicolumn{2}{|c|}{\cite{H15} gap} & ---
  & $O(1)$ \\\hline
\end{tabular}
}
\caption{Upper and lower bounds for transductive, realizable, binary
classification.  All the results are original contributions, except for
Theorem~\ref{thm:erm-upper-setting1}.}
\label{table:results}
\end{table}

\section{Formal problem definition and assumptions}\label{sec:formal-definition}
Transductive learning algorithms are given a \emph{training set}\footnote{The
sets presented in this paper are treated as \emph{ordered multisets}.} $\Z_m :=
\{(X_i,Y_i)\}_{i=1}^m{\subseteq \X\times \{0,1\}}$ and an \emph{unlabeled set}
$\X_u :=\{X_i\}_{i=m+1}^{{m+u}}{\subseteq \X}$, where {$\X$ is an input space}.
Here, the unlabeled set is constructed from some unknown \emph{test set} $\Z_u
:= \{(X_i,Y_i)\}_{i=m+1}^{{m+u}}$, that is, $\X_u = \{X : (X,Y) \in \Z_u\}$.
Given a set $\Hyp$ of classifiers mapping $\X$ to $\{0,1\}$, the training set
$\Z_m$, and the unlabeled set~$\X_u$, the goal of transductive learning is to
choose a function $h_m=h_m(\Z_m, \X_u)\in \Hyp$ which best predicts labels for
the unlabeled set $\X_u$, as measured by
\[
\err(h_m, \Z_u) :=
\frac{1}{u}\sum_{(x,y)\in \Z_u} \mathbbm{1}\{h_m(x)\neq y\}.
\]

In this paper, we analyze the two settings of transductive learning proposed by
\cite{Vap98}:
\begin{itemize}
  \item Setting I (TLSI) assumes a fixed {population set} $\Z_N :=
  \{(X_i,Y_i)\}_{i=1}^N\subseteq \X \times \{0,1\}$ {with $N:=m+u$}. By
  sampling uniformly without replacement from $\Z_N$, we construct the
  \emph{training set} $\Z_m$, of size $m$. The remaining $u$ data
  points form the \emph{test set} $\Z_u = \Z_N \setminus \Z_m$.
  \item Setting II (TLSII) assumes a fixed probability distribution
  $P$ on $\X\times\{0,1\}$. The \emph{training set}
  $\Z_m:=\{(X_i,Y_i)\}_{i=1}^m$ and the \emph{test set}
  $\Z_u:=\{(X_j,Y_j)\}_{j=m+1}^{{m+u}}$ are sets of independently and
  identically distributed (iid) samples from $P$. 
\end{itemize}
In both settings, the \emph{unlabeled set} is $\X_u := \{ X : (X,Y) \in \Z_u
\}$. Table~\ref{table:settings} summarizes the differences between TLSI and
TLSII, when compared together with inductive supervised learning \citep{Vap98},
denoted by SL, and inductive semi-supervised learning \citep{ChaSchZie06},
denoted by SSL. Two facts of interest arise from this comparison. First, TLSII
and SSL differ only on their objective: while TLSII minimizes the
classification error over the {given} unlabeled set $\X_u$, SSL minimizes the
classification error over the entire marginal distribution $P(X)$. Second, TLSI
provides learners with more information than TLSII. This is because all
the randomness in TLSI is due to the partition of the population set $\Z_N$. Thus, in
TLSI the entire marginal distribution $P(X)$ is known to the learner, and the
only information missing from the joint distribution $P(X,Y)$ are the $u$
binary labels missing from the unlabeled set $\X_u$. This is in contrast to TLSII, where
the learner faces a partially unknown marginal distribution $P(X)$.

\begin{table}[h!]
  \resizebox{\linewidth}{!}{
  \renewcommand{\arraystretch}{1.4}  
  \begin{tabular}{|l|c|c|c|c|}
    \cline{2-5}
    \multicolumn{1}{c|}{} & Transductive S. I & Transductive S. II & Semi-Supervised & Supervised\\\hline
    \multirow{2}{*}{Training set $\Z_m$}
      & $\Z_m$ sampled uniformly without
      & \multicolumn{3}{c|}{\multirow{2}{*}{$\Z_m \stackrel{\text{i.i.d.}}{\sim} P(X,Y)$}}\\ 
      & replacement from $\Z_N :=\{(X_i, Y_i)\}_{i=1}^N$
      & \multicolumn{3}{c|}{} \\ \cline{1-5}
    Unlabeled set $\X_u$
      & inputs from $\Z_u := \Z_N \setminus \Z_u$
      & \multicolumn{2}{c|}{$\X_u \stackrel{\text{i.i.d.}}{\sim} P(X)$} &$\X_u = \{\emptyset\}$\\\hline
    Choose $h_m$ minimizing
      & \multicolumn{2}{c}{$\err(h_m, \Z_u)$}
      & \multicolumn{2}{|c|}{$\Prob_{(X,Y)\sim P}\{h_m(X)\neq Y\}$}\\\hline
  \end{tabular}
  }
  \label{table:settings}
  \caption{Learning settings and their objectives.}
\end{table}

\paragraph{Assumptions} Our analysis calls for three assumptions. First, we
assume a finite VC-dimension for $\Hyp$. Second, we assume
\emph{realizability}, that is, the existence of a function $h^\star \in \Hyp$
such that $h^\star(x) = y$ for all $(x,y) \in \Z_N$ in TLSI, or $h^\star(X) =
Y$ with probability 1 for all $(X,Y) \sim P$ in TLSII. Third, we assume $m \leq
u$.  The first two assumptions are commonly used throughout the literature in
learning theory \citep{DGL96, Vap98, SS14}. Although in some situations
restrictive, these assumptions ease the analysis of the first known minimax
lower bounds for transductive classification. The third assumption is natural,
since unlabeled data is cheaper to obtain than labeled data.

\subsection{Prior art}

The literature in learning theory provides a rich collection of upper bounds on
the learning rates for TLSI.  \cite{Vap82,Vap98} provides sharp upper bounds for
Empirical Risk Minimization (ERM) in TLSI.  However, these ERM upper bounds are
only explicit for the $m=u$ case.  To amend this issue, \cite{CM06} extend
these bounds to the $m\neq u$ case.  In particular, this results in an upper
bound for ERM in TLSI of the order $\VCH \log(m+u)/m$, where $\VCH$ is the
VC-dimension of the learning hypothesis class $\Hyp$. Following a different approach, 
\citet{BL03} provide upper bounds depending on the hypothesis
class {prior distribution}.  Under realizability assumptions and good choices
of hypothesis class prior distributions, these bounds lead to fast $m^{-1}$
learning rates.  Most recently, \cite{TBK14} provide general bounds which
achieve fast rates $o(m^{-1/2})$ under Tsybakov low noise assumptions,
recovering the $\VCH \log(m+u)/m$ upper bound of \cite{CM06} with looser
constants. Regarding TLSII, upper bounds are usually obtained from the
corresponding upper bounds in TLSI \cite[Theorem 8.1]{Vap98}. However, this
strategy is in many cases suboptimal, as we will later address in
Section~\ref{sec:consequences}. 

Notably, the literature does not provide with lower bounds for either TLSI or
TLSII. The following section addresses this issue, by providing the first known
minimax lower bounds for transductive, realizable, binary classification.

\section{Main results}\label{sec:main-results}

This section develops lower bounds for the \emph{minimax probability of error} 
\[
\inf_{h_m}\sup \Prob\left\{ \err(h_m, \Z_u) -
\inf_{h\in \Hyp}
\err(h, \Z_u) \geq \epsilon\right\}
\]
and the \emph{minimax expected risk}
\[
\inf_{h_m}\sup \E\left[ \err(h_m, \Z_u) -
\inf_{h\in \Hyp}
\err(h, \Z_u)\right]
\]
of transductive learning algorithms $h_m$.  In the previous, the suprema are taken over
all possible realizable distributions of training sets $\Z_m$ and test sets $\Z_u$,
and the outer infima are taken over all
transductive learning algorithms $h_m = h_m(\Z_m, \X_u)$.
Finding a lower bound to these values guarantees, for every
possible transductive learning algorithm $h_m$, the existence of learning
problems which cannot be solved by $h_m$ faster than at a certain learning
rate. This is the goal of the rest of this section.

Our proofs are inspired by their analogous in the classical setting (inductive
and iid) of statistical learning theory \citep{Vap98}.  In particular, our
arguments involve standard constructions based on $\VCH$ points shattered by
$\Hyp$ and the use of {the probabilistic method} \citep{DGL96}.
However, due to the combinatorial (sampling without replacement) nature of TLSI,
we had to develop new arguments to apply these techniques to our problem.
Remarkably, the rates of our lower bounds are almost identical to the ones from
the classical setting of statistical learning theory \cite[Section 14]{DGL96},
which shows that general transduction is as hard as general induction. In the
following, we will proceed separately for TLSI and TLSII.

\subsection{Minimax lower bounds for TLSI}\label{subsection:lower-setting-1}

Consider the minimax probability of error
\begin{align*}
\mathcal{M}_{\epsilon,N,m}^{\mathrm{I}}(\Hyp)
&:=
\inf_{h_m}
\sup_{\Z_N}
\Prob \left\{
\err(h_m, \Z_u)
-
\inf_{h\in \Hyp}
\err(h, \Z_u)
\geq \epsilon
\right\},
\end{align*}
where the outer infimum runs over all transductive learning algorithms $h_m =
h_m(\Z_m,\X_u)$ based on the training set $\Z_m$ and the unlabeled set $\X_u$
built as in TLSI, and the supremum runs over all possible population sets
$\Z_N$ realizable by $\Hyp$. Then, the following result lower bounds
$\mathcal{M}_{\epsilon,N,m}^{\mathrm{I}}(\Hyp)$.

\begin{theorem}
\label{thm:minimax_hp}
Consider TLSI.
Let $\Hyp$ be a set of classifiers with VC dimension $2\leq d<\infty$.
Assume the existence of $h^{\star} \in \Hyp$, such that $h^{\star}(x) = y$ for all $(x,y)\in \Z_N$.
\begin{enumerate}
\item
\label{item:Theorem-1}
If 
$u\geq m \geq 8(d-1)$ and $\epsilon \leq 1/32$, then
\[
\mathcal{M}_{\epsilon,N,m}^{\mathrm{I}}(\Hyp)
\geq
\frac{1}{150}e^{-32 m \epsilon}.
\]
If $d < 7$ the constant $1/150$ can be improved to $1/4$.
\item
\label{item:Theorem-2}
If $\max\{9, 2(d-1)\} \leq m \leq \min\{ d/(24\epsilon), u\}$, then
\[
\mathcal{M}_{\epsilon,N,m}^{\mathrm{I}}(\Hyp)
\geq
\frac{1}{16}.
\]
\end{enumerate}
\end{theorem}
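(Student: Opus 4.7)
The plan is to follow the probabilistic-method template of \cite{DGL96}, adapted to sampling without replacement. I fix a shattered set $\{x_1,\ldots,x_d\}\subseteq\X$ and multiplicities $k_1,\ldots,k_d$ summing to $N$, and for each $\sigma\in\{0,1\}^{d-1}$ I build the realizable population $\Z_N^\sigma$ containing $k_1$ copies of $x_1$ labeled $0$ and $k_i$ copies of $x_i$ labeled $\sigma_{i-1}$ for $i\ge 2$. The minimax value is lower bounded by the Bayes failure probability under any prior on $\Z_N$, and I choose $\sigma$ uniform at random. Letting $N_i$ denote the number of copies of $x_i$ that land in the training set, $(N_2,\ldots,N_d)$ is multivariate hypergeometric and independent of $\sigma$; when $N_i\ge 1$ the learner identifies $\sigma_i$, and when $N_i=0$ the posterior of $\sigma_i$ stays uniform, so any prediction is wrong on each of the $k_i$ copies of $x_i$ in $\Z_u$ with probability exactly $1/2$. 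Writing $U=\{i\ge 2:N_i=0\}$ and letting $B_i$ be i.i.d.\ $\mathrm{Bernoulli}(1/2)$ coins, this gives, conditional on the partition, $u\cdot\err(h_m,\Z_u)\ge\sum_{i\in U}k_iB_i$. The one quantitative input needed is the coverage estimate
\[
\Prob(N_i=0)\;=\;\prod_{j=0}^{k_i-1}\frac{u-j}{N-j}\;\ge\;\exp\!\Bigl(-\tfrac{mk_i}{u-k_i}\Bigr),
\]
derived from $\log(1-x)\ge -x/(1-x)$.

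For Part~\ref{item:Theorem-2} I take uniform multiplicities $k_2=\cdots=k_d=\lceil u/m\rceil$; the bound $m\ge 2(d-1)$ keeps $k_1\ge 0$, and the choice $k/u\le 1/m$ turns the conditional bound into $m\cdot\err\ge\mathrm{Binomial}(|U|,1/2)$. The coverage estimate together with $m\ge 9$ gives $\Prob(N_i=0)\ge e^{-9/8}>1/4$, hence $\E|U|\ge (d-1)/4$. Since $\epsilon\le d/(24m)$ implies $2\epsilon m\le d/12$, a reverse-Markov estimate on the bounded variable $|U|\in[0,d-1]$ yields $\Prob(|U|\ge 2\epsilon m)\ge 1/8$, and conditional on that event the symmetry of the central Binomial gives $\Prob(\mathrm{Binomial}(|U|,1/2)\ge\epsilon m)\ge 1/2$. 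Multiplying delivers the advertised $1/16$.

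For Part~\ref{item:Theorem-1} I switch to a sparse two-point construction with $k_2:=\lceil\epsilon u\rceil$ and $k_1:=N-k_2$. A single event $\{N_2=0\}$ followed by one coin flip already forces $\err=k_2/u\ge\epsilon$, so
\[
\Prob(\err\ge\epsilon)\;\ge\;\tfrac12\Prob(N_2=0)\;\ge\;\tfrac12\exp\!\Bigl(-\tfrac{mk_2}{u-k_2}\Bigr),
\]
and a short computation using $\epsilon\le 1/32$ and $u\ge m$ turns the exponent into $-32m\epsilon$. For $d<7$ this exhausts the available VC capacity and gives the clean $\tfrac14 e^{-32m\epsilon}$ bound; for $d\ge 7$ I would hedge by taking the maximum of this two-point bound and a rescaled version of the multi-point construction of Part~\ref{item:Theorem-2} so that the exponential tail in $m\epsilon$ is preserved uniformly in $d$, and this blending is what degrades the constant from $1/4$ to $1/150$.

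The main obstacle is the negative dependence induced by sampling without replacement: the events $\{N_i=0\}$ are not independent Bernoulli trials, so the expected-to-probability passage must go through a reverse-Markov argument on $|U|$ rather than through a product-of-Bernoullis tail, and the coverage probabilities require direct hypergeometric bookkeeping. A secondary subtlety is that the estimates must be uniform in $u$ across the whole range $u\ge m$, which forces the multiplicities $k_i$ to scale with $u/m$ and constrains the joint tuning of coverage probability, per-point contribution $k_i/u$, and the feasibility constraint $\sum_i k_i\le N$.
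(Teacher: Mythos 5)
Your overall architecture matches the paper's (labels randomized over a shattered set via the probabilistic method, conditioning on the event that a point has no copy in the training set so its misclassification indicator is a fair coin, hypergeometric coverage probabilities, then an anti-concentration step), and your Part~\ref{item:Theorem-1} mechanism is sound and in fact cleaner than the paper's: the two-point construction with $k_2=\lceil\epsilon u\rceil$ gives $\Prob(\err\ge\epsilon)\ge\tfrac12\Prob(N_2=0)\ge\tfrac14 e^{-32m\epsilon}$ for \emph{every} $d\ge 2$, so no ``blending'' is needed for $d\ge 7$: since $1/4\ge 1/150$, the two-point bound alone already proves the whole of Part~\ref{item:Theorem-1}, and the constant $1/150$ in the paper is an artifact of its own $(d-1)$-point construction with $\Delta=\lceil 7N\epsilon/(d-1)\rceil$, Chebyshev--Cantelli, and a permutation-counting identity. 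Two repairs are needed on your side: the exponent manipulation $mk_2/(u-k_2)\le 32m\epsilon$ fails when $\epsilon u<1$ (then $k_2=1$ and $m/(u-1)$ can exceed $32m\epsilon$), but there $\Prob(N_2=0)=u/N\ge 1/2$ rescues the bound directly; and the ``hedging/blending'' sentence describes no actual proof step and should be dropped.

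Part~\ref{item:Theorem-2} has a genuine quantitative gap. With $k_i=\lceil u/m\rceil$, your coverage estimate $\Prob(N_i=0)\ge\exp(-mk_i/(u-k_i))$ does \emph{not} give $e^{-9/8}$: in the admissible configuration $u=m+1$ (so $k_i=2$) the exponent is $2m/(m-1)\approx 2.25$ at $m=9$, i.e.\ the estimate yields only about $0.105$; worse, the exact probability there is $\tfrac{m+1}{2(2m+1)}$, which decreases to $1/4$, so no uniform constant better than $1/4$ is even true for your choice of $k_i$, and proving ``$\ge 1/4$'' would require the exact hypergeometric product rather than the $\exp$ bound (the paper, for its slightly larger $\Delta=\lfloor N/m\rfloor$, only establishes $1/8$ by that route). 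Hence $\E|U|\ge(d-1)/4$ is not established; and even granting it, reverse Markov on $|U|\in[0,d-1]$ with threshold $2\epsilon m\le d/12$ gives $\Prob(|U|\ge 2\epsilon m)\ge\frac{(d-1)/4-d/12}{(d-1)-d/12}=\frac{2d-3}{11d-12}$, which equals $1/10$ at $d=2$, not $1/8$, so the product with the central-binomial factor $1/2$ is $1/20<1/16$; with an honestly provable coverage constant such as $1/8$ the numerator $\E|U|-2\epsilon m$ is nonpositive for $d\le 3$ and the step fails outright. This is precisely where the paper works harder: it factors the relevant probability exactly into a hypergeometric tail times a $(1-\cdot)^m$ term, bounds the tail by Chebyshev--Cantelli using the variance bound $(d-1)^2/4$ (getting $3/10$), bounds the binomial tail by $1/3$ via a local estimate, and treats the regime $\epsilon u/\Delta<1$ as a separate case; that bookkeeping, not reverse Markov, is what delivers the stated constant $1/16$ uniformly over $d\ge 2$.
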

\begin{sproof}
The full proof is provided in Appendix~\ref{proof:minimax_hp}. The proofs of
Theorems~\ref{thm:minimax_exp}, \ref{thm:minimax_hp-II}, and
\ref{thm:lower-exp-setting2} follow a similar sketch.

\textit{Step 1, restriction to particular $\Z_N$.}
Due to the realizability, $\inf_{h\in \Hyp} \err(h, \Z_u)$ vanishes, and
\[
\mathcal{M}_{\epsilon,N,m}^{\mathrm{I}}(\Hyp)
=
\inf_{h_m}
\sup_{\Z_N}
\Prob \left\{
\err(h_m, \Z_u)
\geq \epsilon
\right\}.
\]
Next, we lower bound the previous expression by running the supremum over some
particular family of population sets $\Z_N$.
First, select $d$ distinct points $\{x_1,\dots,
x_d\}\subseteq \X$ shattered by $\Hyp$.
Second, let $\vec b:=(b_1,\dots,b_d)$ be any binary string, and let $\vec i:=
(i_1,\dots, i_d)$ be any sequence of nonnegative integers such that
$\sum_{j=1}^{d} i_j = N$.
Third, let the vectors $\vec b$ and $\vec i$ parametrize a family of population
sets $\Z_N$, where the set $\Z_N(\vec b, \vec i)$ contains $i_j \geq 0$ copies
of $(x_j, b_j)$ for all $j=1,\dots,d$.
Clearly, every such $\Z_N(\vec b, \vec i)$ satisfies the realizability assumption.
Let $\vec K:=(K_1,\dots, K_d)$, where $K_j$ is the number of copies
(multiplicity) of the input $x_j$ contained in the random test set $\Z_u$.
Then,
\[
\mathcal{M}_{\epsilon,N,m}^{\mathrm{I}}(\Hyp)
\geq
\inf_{h_m}
\sup_{\vec b, \vec i}
\Prob \left\{
\frac{1}{u}\sum_{j = 1}^d K_j \mathbbm{1}\{h_m(x_j)\neq b_j\}
\geq \epsilon
\right\}.
\]

\textit{Step 2, use of the probabilistic method.}
The supremum over the binary string $\vec b$ can be lower bounded by the
expected value of a random variable $B$ uniformly distributed over $\{0,1\}^d$.
Then,
\[
\mathcal{M}_{\epsilon,N,m}^{\mathrm{I}}(\Hyp)
\geq
\inf_{h_m}
\sup_{\vec i}
\Prob \left\{
\frac{1}{u}\sum_{j = 1}^d K_j \mathbbm{1}\{h_m(x_j)\neq B_j\}
\geq \epsilon
\right\}.
\]
We can further lower bound the previous expression as 
\begin{equation}
\label{eq:proof-sketch-step-2}
\mathcal{M}_{\epsilon,N,m}^{\mathrm{I}}(\Hyp)
\geq
\inf_{h_m}
\sup_{\vec i}
\Prob \left\{
\sum_{j = 1}^d i_j \mathbbm{1}\{K_j = i_j\} \mathbbm{1}\{h_m(x_j)\neq B_j\}
\geq u\epsilon
\right\}.
\end{equation}

\textit{Step 3, lower bounding tails of binomial and hypergeometric distributions.}
If $K_j = i_j$ holds for
some $j\in\{1,\dots, d\}$, then the input $x_j$ did not appear in the training
set $\Z_m$.  In other words, the learning algorithm $h_m$ did not see the
output $B_j$ and, consequently, $h_m(x_j)$ is statistically independent of
$B_j$, and thus $\mathbbm{1}\{h_m(x_j)\neq
B_j\}\sim \mathrm{Bernoulli}(0.5)$.  Moreover, if $K_{j_1} = i_{j_1}$ and
$K_{j_2} = i_{j_2}$ for $j_1\neq j_2$ then $\mathbbm{1}\{h_m(x_{j_1})\neq
B_{j_1}\}$ and $\mathbbm{1}\{h_m(x_{j_2})\neq B_{j_2}\}$ are statistically
independent.  This shows that, when conditioning on $\vec K$, the sum in
\eqref{eq:proof-sketch-step-2} follows a Binomial distribution with parameters
$\bigl(\sum_{j = 1}^d \mathbbm{1}\{K_j = i_j\},0.5\bigr)$.  Finally, we observe
that the vector $\vec K$ follows a \emph{hypergeometric distribution}.  We
conclude by lower bounding the tails of Binomial and hypergeometric
distributions using the Chebyshev-Cantelli inequality \cite[Theorem
A.17]{DGL96} and other tools of probability theory.
\end{sproof}

Theorem~\ref{thm:minimax_hp} can be translated into a lower bound on the
\emph{sample complexity} of TLSI.  As the following result highlights, any
transductive learning algorithm $h_m$ needs at least $\Omega\bigl((\VCH - \log
\delta)/\epsilon\bigr)$ labeled points to achieve $\epsilon$ accuracy with
$\delta$ confidence for all configurations of realizable population sets
$\Z_N$.

\begin{corollary}
\label{cor:sample-complexity-setting1}
Consider the assumptions of Theorem \ref{thm:minimax_hp}.
Assume $0<\epsilon \leq 1/32$, $0<\delta\leq1/150$, and $\max\{9,8(d-1)\}\leq m \leq \min\{ d/(24\epsilon), u\}$. 
Let $C>0$ be a universal constant, and let the number of labeled samples satisfy 
\[
m <
C\left(
\frac{d}{\epsilon} + \frac{1}{\epsilon}\log\frac{1}{\delta}
\right).
\]
Then, any transductive learning algorithm $h_m$ satisfies
\begin{equation}
\label{eq:sample-complexity}
\sup_{\Z_N}\Prob\left\{
\err(h_m, \Z_u) \geq \epsilon
\right\}
\geq \delta.
\end{equation}
\end{corollary}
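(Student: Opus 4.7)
The plan is to deduce the sample complexity lower bound from Theorem~\ref{thm:minimax_hp} by contraposition: since $\mathcal{M}_{\epsilon,N,m}^{\mathrm{I}}(\Hyp)$ is defined as $\inf_{h_m}\sup_{\Z_N}\Prob\{\err(h_m,\Z_u)\geq \epsilon\}$ (using realizability to discard the infimum over $\Hyp$), lower bounding it by $\delta$ is exactly equivalent to the conclusion \eqref{eq:sample-complexity}. So I just need to show $\mathcal{M}_{\epsilon,N,m}^{\mathrm{I}}(\Hyp)\geq \delta$ whenever $m$ is smaller than a universal constant times $d/\epsilon+\log(1/\delta)/\epsilon$.

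First, I would extract the $d/\epsilon$ term using Part~\ref{item:Theorem-2} of Theorem~\ref{thm:minimax_hp}. Under the range assumed in the corollary, the hypothesis $m\leq d/(24\epsilon)$ holds provided $m\leq c_1 d/\epsilon$ for $c_1=1/24$, and then Part~\ref{item:Theorem-2} gives $\mathcal{M}\geq 1/16\geq \delta$, since $\delta\leq 1/150<1/16$ by assumption. Second, I would extract the $\log(1/\delta)/\epsilon$ term using Part~\ref{item:Theorem-1}: solving $\tfrac{1}{150}e^{-32m\epsilon}\geq \delta$ for $m$ yields $m\leq \tfrac{1}{32\epsilon}\log\tfrac{1}{150\delta}$, i.e.\ $m\leq c_2 \log(1/\delta)/\epsilon$ for an appropriate $c_2$ (using $\delta\leq 1/150$ to absorb the additive $\log 150$ into a constant factor, after splitting into the cheap cases $\delta\in[1/150^2,1/150]$, where $\log(1/\delta)$ is itself $O(1)$ and the $d/\epsilon$ branch already suffices, and $\delta<1/150^2$, where $\log(1/\delta)-\log 150\geq \tfrac12\log(1/\delta)$).

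Finally, combine the two regimes by a standard pigeonhole observation: if $m<C(d/\epsilon+\log(1/\delta)/\epsilon)$ with $C:=\tfrac12\min(c_1,c_2)$, then either $m<2C\cdot d/\epsilon\leq c_1 d/\epsilon$ or $m<2C\cdot\log(1/\delta)/\epsilon\leq c_2\log(1/\delta)/\epsilon$, because otherwise both inequalities are reversed and the sum $2m$ would exceed $2C(d/\epsilon+\log(1/\delta)/\epsilon)$, contradicting the hypothesis. In either case the corresponding part of Theorem~\ref{thm:minimax_hp} gives $\mathcal{M}_{\epsilon,N,m}^{\mathrm{I}}(\Hyp)\geq \delta$, which is the desired statement.

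The main obstacle is purely a bookkeeping one: packing the multiplicative constant $1/150$ appearing inside Part~\ref{item:Theorem-1} (and hence the additive $\log 150$ when one inverts the exponential) into a clean universal constant $C$ that works uniformly over the whole allowed range $\delta\leq 1/150$. I would handle this by the case split described above, so that whenever $\log(1/\delta)$ is not comfortably larger than $\log 150$, the $d/\epsilon$ term from Part~\ref{item:Theorem-2} already dominates and the second inequality need not be invoked. No new probabilistic argument is required beyond Theorem~\ref{thm:minimax_hp}.
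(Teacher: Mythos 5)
Your proposal is correct and follows essentially the same route as the paper: invert the two statements of Theorem~\ref{thm:minimax_hp} to get the thresholds $m\leq \frac{1}{32\epsilon}\log\frac{1}{150\delta}$ and $m\leq \frac{d}{24\epsilon}$, and combine them (your pigeonhole on the sum is just the contrapositive of the paper's $\max\{a,b\}\geq\frac{a+b}{2}$ step). Your explicit case split on $\delta$ merely makes rigorous the constant bookkeeping that the paper hides inside its final $\Theta(\cdot)$ claim, so no substantive difference remains.
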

\begin{proof}
Due to Theorem~\ref{thm:minimax_hp}, if $m\leq
\frac{1}{32\epsilon}\log\frac{1}{150\delta}$ (c.f.
Statement~\ref{item:Theorem-1}) or $m \leq \frac{d}{24\epsilon}$ (c.f.
Statement~\ref{item:Theorem-2}), then the minimax probability is lower bounded
by $\delta$.
In other words, Equation~\ref{eq:sample-complexity} holds if
$m\leq\max\bigl\{ \frac{1}{32\epsilon}\log\frac{1}{150\delta}, \frac{d}{24\epsilon}\bigr\}$.
We conclude by writing
\[
\max\left\{ \frac{1}{32\epsilon}\log\frac{1}{150\delta}, \frac{d}{24\epsilon}\right\}
\geq
\frac{1}{64\epsilon}\log\frac{1}{150\delta} + \frac{d}{48\epsilon}
=
\Theta\left(
\frac{d}{\epsilon} + \frac{1}{\epsilon}\log\frac{1}{\delta}
\right).
\]
\end{proof}

The previous results hold in high probability. Next, we lower bound
the minimax expected risk
\begin{align*}
\mathcal{M}_{N,m}^{\mathrm{I}}(\Hyp)
&:=
\inf_{h_m}
\sup_{\Z_N}
\E
\left[
	\err(h_m, \Z_u)
	-
	\inf_{h\in \Hyp}
	\err(h, \Z_u)
\right].
\end{align*}
\begin{theorem}
\label{thm:minimax_exp}
Consider TLSI.  Let $\Hyp$ be a set of classifiers with VC dimension $2\leq
d<\infty$.  Assume the existence of $h^{\star} \in \Hyp$, such that
$h^{\star}(x) = y$ for all $(x,y)\in \Z_N$.  If~$\max\{9, d-1\} \leq m \leq u$,
then
\[
\mathcal{M}_{N,m}^{\mathrm{I}}(\Hyp)
\geq
\frac{d - 1}{16m}.
\]
\end{theorem}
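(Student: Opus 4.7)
The plan follows the template of the sketch of Theorem~\ref{thm:minimax_hp}, and is considerably simpler because we need only control an expected value rather than a tail probability. Step~1 is identical: restrict the outer supremum to the family of realizable population sets $\Z_N(\vec b, \vec i)$ built from $d$ shattered points $\{x_1,\dots,x_d\}$, binary labels $\vec b\in\{0,1\}^d$, and nonnegative multiplicities $\vec i$ summing to $N$. Realizability kills the $\inf_{h\in\Hyp}\err(h,\Z_u)$ term.

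Step~2 is also the same: replace $\sup_{\vec b}$ by the expectation over $B$ uniform on $\{0,1\}^d$ and then invoke the key observation of Step~3 in the earlier sketch. On the event $\{K_j=i_j\}$ the point $x_j$ is absent from the training set, so $h_m(x_j)$ is independent of $B_j$ and $\Prob_B\{h_m(x_j)\neq B_j\mid \vec K\} = 1/2$. Swapping expectations and retaining only the contribution from this event yields the algorithm-independent lower bound
\[
\mathcal{M}_{N,m}^{\mathrm{I}}(\Hyp) \;\geq\; \sup_{\vec i}\,\frac{1}{2u}\sum_{j=1}^d i_j\,\Prob\{K_j = i_j\},
\]
with $\vec K$ multivariate hypergeometric. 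Unlike the high-probability setting, no binomial tail argument is needed here, which is why the proof is shorter.

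It remains to choose $\vec i$. Since $\Prob\{K_j=i_j\} = \prod_{k=0}^{i_j-1}(u-k)/(N-k) \approx e^{-m\,i_j/N}$, the quantity $i_j\,\Prob\{K_j=i_j\}$ is maximized near $i_j\approx u/m$. The plan is to take $i_j = \lfloor u/m\rfloor$ for $j=1,\dots,d-1$ and to let $i_d$ absorb the remainder, which is nonnegative by $m\geq d-1$. Using $\log(1-x)\geq -x/(1-x)$ followed by the harmonic-sum estimate $\sum_{k=0}^{t-1}1/(u-k)\leq \log(u/(u-t))$, one obtains $\Prob\{K_j=i_j\}\geq e^{-m/(m-1)}\geq e^{-9/8}$ under the hypothesis $m\geq 9$. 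A short case split between $u<2m$ (where $\lfloor u/m\rfloor=1$ and the exact value $\Prob\{K_j=1\}=u/N$ is convenient) and $u\geq 2m$ (where $\lfloor u/m\rfloor\geq u/(2m)$) then produces the claimed bound $(d-1)/(16m)$. The main obstacle is purely a constant chase: the asymptotic value produced by this construction is closer to $(d-1)/(2em)$, so the factor $1/16$ is chosen conservatively, but uniformly recovering it across all ratios $u/m\geq 1$ requires tracking the inequalities carefully at the boundary $u=m$, where the optimal $i_j$ is forced down to $1$.
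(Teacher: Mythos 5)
Your plan is correct and follows essentially the same route as the paper's own proof: the same restriction to population sets supported on $d$ shattered points, the same probabilistic-method/independence step yielding $\mathcal{M}_{N,m}^{\mathrm{I}}(\Hyp)\geq\sup_{\vec i}\frac{1}{2u}\sum_{j} i_j\,\Prob\{K_j=i_j\}$, and a constant lower bound on the hypergeometric probability that all copies of $x_j$ land in the test set. The only (immaterial) difference is the choice of multiplicities --- you take $i_j=\lfloor u/m\rfloor$ with an $e^{-9/8}$ bound and a case split at $u=2m$, whereas the paper takes $i_j=\lfloor N/m\rfloor$ and bounds the corresponding probability below by $1/8$ uniformly; both yield the stated $(d-1)/(16m)$.
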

\begin{proof}
See Appendix \ref{section:minimax_exp}.
\end{proof}

\subsection{Minimax lower bounds for TLSII}
We start the analysis of TLSII by lower bounding the minimax probability 
\[
\mathcal{M}^{\mathrm{II}}_{\epsilon,N,m}(\Hyp)
:=
\inf_{h_m}
\sup_{P}
\Prob\left\{
\err(h_m,\Z_u)
-
\inf_{h\in \Hyp}\err(h,\Z_u)
\geq \epsilon
\right\},
\]
where the supremum runs over all probability distributions $P$ realizable by $\Hyp$. 
\begin{theorem}
\label{thm:minimax_hp-II}
Consider TLSII.  Let $\Hyp$ be a set of classifiers with VC dimension $2\leq
d<\infty$.  Assume the existence of $h^{\star} \in \Hyp$, such that
$h^{\star}(X) = Y$ with probability 1 for $(X,Y)\sim P$.
\begin{enumerate}
\item
\label{item:st1-s2}
If $\max\{(d - 1)/2,10\}\leq m \leq \min\{(d-1)/(21\epsilon), u\}$, then
\[
\mathcal{M}^{\mathrm{II}}_{\epsilon,N,m}(\Hyp)
\geq
\frac{1}{80}.
\]
\item
\label{item:st2-s2}
If $0 < \epsilon\leq 1/32$ and $m\geq d - 1$, then
\[
\mathcal{M}^{\mathrm{II}}_{\epsilon,N,m}(\Hyp)\geq
\frac{1}{18} e^{-32m\epsilon}.
\]
\end{enumerate}
\end{theorem}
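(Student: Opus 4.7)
The plan is to follow the same three-step recipe as in the proof sketch of Theorem~\ref{thm:minimax_hp}, taking advantage of the key structural difference of TLSII: the training set $\Z_m$ and the test set $\Z_u$ are independent iid samples from $P$, so the multiplicity vectors $\vec M=(M_1,\dots,M_d)$ and $\vec K=(K_1,\dots,K_d)$ of the shattered points are \emph{independent} multinomial vectors, rather than coupled hypergeometric counts. First, I would pick $d$ distinct points $x_1,\dots,x_d\in\X$ shattered by $\Hyp$ and restrict the outer supremum to the family of realizable distributions $P_{\vec b,\vec p}$ indexed by $\vec b\in\{0,1\}^d$ and a probability vector $\vec p=(p_1,\dots,p_d)$, under which $(X,Y)=(x_j,b_j)$ with probability $p_j$. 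After using realizability to drop $\inf_h\err(h,\Z_u)=0$, the quantity to lower bound becomes
\[
\mathcal{M}^{\mathrm{II}}_{\epsilon,N,m}(\Hyp)
\;\geq\;
\inf_{h_m}\sup_{\vec b,\vec p}
\Prob\!\left\{\frac{1}{u}\sum_{j=1}^{d}K_j\,\mathbbm{1}\{h_m(x_j)\neq b_j\}\geq\epsilon\right\}.
\]

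Next I would apply the probabilistic method, replacing $\sup_{\vec b}$ by the expectation over a uniform random $\vec B\in\{0,1\}^d$ and inserting $\mathbbm{1}\{M_j=0\}$ into the sum. The key observation is that, conditional on $\vec M$, the training sample $\Z_m$ is a deterministic function of the labels $(B_j)_{j:M_j>0}$ alone, so it carries no information about $(B_j)_{j:M_j=0}$; consequently, conditional on $\vec M$, the variables $\xi_j:=\mathbbm{1}\{h_m(x_j)\neq B_j\}$ for $j$ with $M_j=0$ are iid $\mathrm{Bernoulli}(1/2)$, and also independent of $\vec K$, since $\vec K$ is a function of $\Z_u$, which is independent of both $\Z_m$ and $\vec B$. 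The problem thus reduces, for each well-chosen $\vec p$, to lower bounding the tail $\Prob\bigl\{\sum_{j:M_j=0}K_j\xi_j\geq u\epsilon\bigr\}$.

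I would then choose $\vec p$ according to each regime. For Statement~\ref{item:st1-s2}, take $d-1$ ``hard'' points of equal mass $p=c/m$ plus an ``anchor'' of mass $1-(d-1)p$, with $c$ a small absolute constant (the assumption $m\geq(d-1)/2$ ensures this is a valid distribution); each hard point is then absent from $\Z_m$ with constant probability $(1-c/m)^m=\Theta(1)$ and has $\E K_j=uc/m\geq c$, so the expected error $\frac{1}{u}\sum_j K_j\mathbbm{1}\{M_j=0\}\xi_j$ is $\Theta((d-1)/m)$, which the hypothesis $m\leq(d-1)/(21\epsilon)$ forces to exceed $\epsilon$ by a constant factor; a Chebyshev--Cantelli second-moment argument then yields the $1/80$ lower tail. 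For Statement~\ref{item:st2-s2}, restrict to a single hard point of mass $p=c\epsilon$ and an anchor of mass $1-p$: the hard point is absent from $\Z_m$ with probability $(1-p)^m\geq e^{-2mp}$; given this the learner errs on $x_1$ with independent probability $1/2$; and independently $K_1\geq u\epsilon$ with constant probability by a binomial lower-tail bound (since $\E K_1/u=p$ strictly exceeds $\epsilon$). Multiplying the three independent contributions gives $\Omega(e^{-32m\epsilon})$ after constant bookkeeping.

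The main obstacle will be tracking the sharp constants in the third step, specifically pinning down $1/80$, $1/18$, and the exponent $32$: the Chebyshev--Cantelli step for Statement~\ref{item:st1-s2} requires a careful variance computation for the multinomial-coupled sum $\sum_j\mathbbm{1}\{M_j=0\}K_j\xi_j$, whose cross-terms are not fully independent across $j$, and Statement~\ref{item:st2-s2} requires a tight binomial small-ball estimate on $\Prob\{K_1\geq u\epsilon\}$. Conceptually, however, the iid coupling of TLSII is strictly simpler than the hypergeometric bookkeeping of Theorem~\ref{thm:minimax_hp}, so once $\vec p$ is fixed the analysis reduces to standard binomial calculations, and the overall blueprint should transfer without new conceptual difficulty.
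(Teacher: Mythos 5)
Your proposal is correct. For Statement~\ref{item:st1-s2} it is essentially the paper's own argument: the paper also places $d-1$ ``hard'' atoms of mass $p=1/(2m)$ plus an anchor at $x_d$, inserts the indicators $\mathbbm{1}\{x_j\notin \X_m\}$, and applies Chebyshev--Cantelli to $Z=\sum_{j\le d-1}\mathbbm{1}\{h_m(x_j)\neq B_j\}\,\mathbbm{1}\{x_j\notin\X_m\}\,k_j(\X_u)$, dealing with the dependent cross-terms you worry about by the crude bound $\V[Z]\le \E\bigl[\bigl(\sum_{j\le d-1}k_j(\X_u)\bigr)^2\bigr]$ together with $\sum_{j\le d-1}k_j(\X_u)\sim\mathrm{Binom}(u,(d-1)p)$, so no delicate covariance computation is needed. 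For Statement~\ref{item:st2-s2} you take a genuinely different and simpler route: the paper keeps all $d-1$ atoms with mass $16\epsilon/(d-1)$ each, conditions on the number $K$ of atoms absent from $\X_m$, lower bounds the probability of each absence pattern, applies Chebyshev--Cantelli to the weighted Bernoulli sum for every $K\ge\lceil (d-1)/2\rceil$, and resums via the binomial theorem; your single hard atom of mass $p=16\epsilon$ collapses all of this into the product $\tfrac12\,(1-16\epsilon)^m\,\Prob\{\mathrm{Binom}(u,16\epsilon)\ge u\epsilon\}$, and with $1-x\ge e^{-x/(1-x)}$, $\epsilon\le 1/32$, and a Chebyshev--Cantelli tail estimate this is already at least roughly $0.46\,e^{-32m\epsilon}$, i.e.\ it proves the claim with a better constant and without the $m\ge d-1$ bookkeeping; it also mirrors the classical inductive construction in \citet{DGL96}. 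Two caveats you should make explicit. First, your ``constant probability'' small-ball bound $\Prob\{\mathrm{Binom}(u,16\epsilon)\ge u\epsilon\}=\Omega(1)$ fails as $u\epsilon\to 0$, so you need the reduction to $\epsilon\ge 1/u$ (the error is a multiple of $1/u$); the paper invokes exactly the same reduction, so this is not a gap relative to it, but it must be stated. Second, your phrase that the $\xi_j$ are ``independent of $\vec K$'' is too strong as written, since in TLSII the learner sees $\X_u$ and hence $h_m(x_j)$ may depend on $\vec K$; what is true, and suffices for your factorization, is that conditionally on $(\X_m,\X_u)$ with $M_j=0$ the label $B_j$ is independent of the learner's input, so $\Prob\{\xi_j=1\mid \vec K,\vec M,\,M_j=0\}=1/2$.
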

\begin{proof}
See Appendix~\ref{sect:proof-minimax_hp-II}.
\end{proof}

Theorem~\ref{thm:minimax_hp-II} can be translated into a lower bound on the sample complexity of TLSII.

\begin{corollary}
\label{cor:sample-complexity-setting2}
Consider setting of Theorem \ref{thm:minimax_hp-II}.
Assume $0<\epsilon \leq 1/32$, $0<\delta\leq1/80$, and $\max\{d - 1,10\}\leq m \leq \min\{(d-1)/(21\epsilon), u\}$. 
Let $C>0$ be an universal constant, and let the number of labeled examples satisfy
\[
m <
C\left(
\frac{d}{\epsilon} + \frac{1}{\epsilon}\log\frac{1}{\delta}
\right).
\]
Then, any transductive learning algorithm $h_m$ satisfies
\[
\sup_{P}\Prob\left\{
\err(h_m, \Z_u) \geq \epsilon
\right\}
\geq \delta.
\]
\end{corollary}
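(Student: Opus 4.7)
The plan is to imitate the proof of Corollary~\ref{cor:sample-complexity-setting1} essentially verbatim, replacing Theorem~\ref{thm:minimax_hp} with Theorem~\ref{thm:minimax_hp-II}. The two statements of Theorem~\ref{thm:minimax_hp-II} play the same roles as their TLSI counterparts: Statement~\ref{item:st1-s2} delivers the $d/\epsilon$ term of the sample complexity (via a constant lower bound on the minimax probability), and Statement~\ref{item:st2-s2} delivers the $\log(1/\delta)/\epsilon$ term (via the exponential $e^{-32m\epsilon}$ dependence). Combining them by a $\max$ and then by the elementary inequality $\max\{a,b\}\geq (a+b)/2$ will yield the claim.

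First, I would appeal to Statement~\ref{item:st2-s2}. Under the corollary's hypotheses $0<\epsilon\leq 1/32$ and $m\geq d-1$ (which follow from $m\geq\max\{d-1,10\}$), Statement~\ref{item:st2-s2} gives $\mathcal{M}^{\mathrm{II}}_{\epsilon,N,m}(\Hyp)\geq \tfrac{1}{18}e^{-32m\epsilon}$. Solving $\tfrac{1}{18}e^{-32m\epsilon}\geq\delta$ shows that whenever $m\leq \tfrac{1}{32\epsilon}\log\tfrac{1}{18\delta}$, the minimax probability is at least $\delta$ and hence the conclusion of the corollary holds. The positivity of $\log\tfrac{1}{18\delta}$ is guaranteed because $\delta\leq 1/80<1/18$.

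Second, I would invoke Statement~\ref{item:st1-s2}. The corollary's hypothesis $\max\{d-1,10\}\leq m\leq \min\{(d-1)/(21\epsilon),u\}$ is strictly stronger than the condition $\max\{(d-1)/2,10\}\leq m\leq \min\{(d-1)/(21\epsilon),u\}$ required by the theorem (since $d-1\geq (d-1)/2$ for $d\geq 1$), so the statement directly gives $\mathcal{M}^{\mathrm{II}}_{\epsilon,N,m}(\Hyp)\geq 1/80\geq \delta$, using $\delta\leq 1/80$. Putting the two observations together, the conclusion holds whenever
\[
m\;\leq\;\max\!\Bigl\{\tfrac{1}{32\epsilon}\log\tfrac{1}{18\delta},\;\tfrac{d-1}{21\epsilon}\Bigr\},
\]
and this maximum is $\Theta\!\bigl(d/\epsilon+\log(1/\delta)/\epsilon\bigr)$ by averaging the two arguments. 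Choosing the universal constant $C>0$ small enough that $C\bigl(d/\epsilon+\log(1/\delta)/\epsilon\bigr)$ lies below this threshold then yields the corollary.

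I do not foresee any genuine obstacle: the heavy lifting has already been absorbed by Theorem~\ref{thm:minimax_hp-II}, and the remaining argument is a purely arithmetic combination of its two statements together with routine verification that the corollary's range assumptions on $m$ and $\delta$ imply the theorem's two condition sets.
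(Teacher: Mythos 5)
Your proposal is correct and follows essentially the same route as the paper, which simply states that the proof is analogous to that of Corollary~\ref{cor:sample-complexity-setting1}: use Statement~\ref{item:st2-s2} to cover $m\leq\frac{1}{32\epsilon}\log\frac{1}{18\delta}$, Statement~\ref{item:st1-s2} (whose hypotheses are implied by the corollary's range on $m$) to cover $m\leq\frac{d-1}{21\epsilon}$, and combine via $\max\{a,b\}\geq(a+b)/2$ to obtain the $\Theta\bigl(d/\epsilon+\log(1/\delta)/\epsilon\bigr)$ threshold.
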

\begin{proof}
The proof is analogous to the one of Corollary~\ref{cor:sample-complexity-setting1}.
\end{proof}

Finally, we provide a lower bound on the minimax expected risk of TLSII, defined as 
\begin{align}
\label{eq:second_line}
\mathcal{M}^{\mathrm{II}}_{N,m}(\Hyp)
:=
\inf_{h_m}
\sup_{P}
\E\left[
\err(h_m,\Z_u)
-
\inf_{h\in \Hyp}\err(h,\Z_u)
\right].
\end{align}

\begin{theorem}
\label{thm:lower-exp-setting2}
Consider TLSII.  Let $\Hyp$ be a set of classifiers with VC-dimension $2\leq
d<\infty$.  Assume the existence of $h^{\star} \in \Hyp$, such that
$h^{\star}(X) = Y$ almost surely for $(X,Y)\sim P$.
If $d-1 \leq m$, then 
\[
\mathcal{M}^{\mathrm{II}}_{N,m}(\Hyp)
\geq
\frac{d-1}{2em}\left(1 - \frac{1}{m}\right).
\]
\end{theorem}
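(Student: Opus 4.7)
The plan is to mirror the three-step template of the proof sketch of Theorem~\ref{thm:minimax_hp}, but replacing the combinatorial sampling-without-replacement argument of TLSI with a cleaner iid one-sample calculation. First, I would restrict the supremum to a particular family of distributions: fix $d$ points $x_1,\ldots,x_d \in \X$ shattered by $\Hyp$, and for each $\vec b = (b_1,\ldots,b_{d-1}) \in \{0,1\}^{d-1}$ define $P_{\vec b}$ by $P(X=x_j) = 1/m$ for $j = 1,\ldots,d-1$ and $P(X=x_d) = 1-(d-1)/m$ (valid since $m \geq d-1$), labeling $Y = b_j$ when $X = x_j$ for $j < d$ and $Y = 0$ when $X = x_d$. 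Each $P_{\vec b}$ is realizable by $\Hyp$, so after discarding the vanishing $\inf_{h\in \Hyp}\err(h,\Z_u)$ term,
$$\mathcal{M}^{\mathrm{II}}_{N,m}(\Hyp) \geq \inf_{h_m}\sup_{\vec b \in \{0,1\}^{d-1}} \E[\err(h_m,\Z_u)].$$

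Next, I would apply the probabilistic method by replacing the supremum over $\vec b$ with expectation under $B \sim \mathrm{Uniform}(\{0,1\}^{d-1})$, and collapse the test-set average to a one-sample generalization error. Because each $(X_i,Y_i) \in \Z_u$ is iid from $P$ and independent of $(\Z_m,\X_u)$, the expected test error equals $\Prob\{h_m(X) \neq Y\}$ for a fresh $(X,Y) \sim P$ independent of the learner's input. Conditioning on $\{X = x_j\}$ for $j < d$ then gives
$$\E_B\,\E[\err(h_m,\Z_u)] \geq \sum_{j=1}^{d-1} \frac{1}{m}\,\Prob\{h_m(x_j) \neq B_j\}.$$
The crucial observation is that on the event $E_j := \{x_j \notin \Xm\}$, the labels observed in $\Z_m$ do not depend on $B_j$, and $\X_u$ is drawn from the marginal distribution, which does not depend on $\vec b$; hence conditional on $E_j$, the learner's observation $(\Z_m,\X_u)$ is independent of $B_j$, and by symmetry of the uniform prior $\Prob\{h_m(x_j) \neq B_j \mid E_j\} = 1/2$.

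Finally, I would bound $\Prob(E_j) = (1-1/m)^m$ from below by writing $(1-1/m)^m = (1-1/m)^{m-1}(1-1/m) \geq (1-1/m)/e$, since $(1-1/m)^{m-1}$ decreases monotonically to $1/e$. Assembling these pieces yields
$$\mathcal{M}^{\mathrm{II}}_{N,m}(\Hyp) \geq \frac{d-1}{2m}(1-1/m)^m \geq \frac{d-1}{2em}\left(1 - \frac{1}{m}\right),$$
as claimed. The main obstacle will be the rigorous verification of the conditional-independence step: the learner observes both $\Z_m$ and $\X_u$, the latter of which may well contain copies of $x_j$, so one must argue carefully that conditioning on $E_j$ genuinely decouples the decision $h_m(x_j)$ from $B_j$, so that the $1/2$ error probability is obtained. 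The remaining steps are elementary computations.
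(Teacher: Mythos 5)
Your overall strategy --- restrict to the discrete distribution with atoms of mass $1/m$, randomize the labels, and harvest the test mass sitting on inputs that never appear in the training set --- is sound, it is genuinely lighter than the paper's own proof (which pools the $m+u$ samples, symmetrizes over a uniform permutation, and then runs the TLSI-style computation with multinomial and hypergeometric distributions), and it targets the same quantity $\frac{d-1}{2m}\bigl(1-\frac{1}{m}\bigr)^m \geq \frac{d-1}{2em}\bigl(1-\frac{1}{m}\bigr)$. However, one step as written is false and would fail: the claim that each $(X_i,Y_i)\in\Z_u$ is independent of $(\Z_m,\X_u)$, and hence that $\E[\err(h_m,\Z_u)]$ equals the generalization error $\Prob\{h_m(X)\neq Y\}$ of a fresh pair independent of the learner's input. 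In TLSII the unlabeled set $\X_u$ \emph{is} the multiset of test inputs, so the test points are not independent of the learner's input; if that reduction were legitimate, the theorem would follow from the supervised lower bound alone, whereas its whole point is that the bound survives the learner's knowledge of the test inputs. Concretely, with $k_j(\X_u)$ the multiplicity of $x_j$ among the test inputs, $\E_B\E[\err(h_m,\Z_u)]=\frac{1}{u}\sum_{j}\E\bigl[k_j(\X_u)\,\mathbbm{1}\{h_m(x_j)\neq B_j\}\bigr]$, and your displayed inequality $\E_B\E[\err]\geq\sum_{j<d}\frac{1}{m}\Prob\{h_m(x_j)\neq B_j\}$ asserts $\E[k_j\mathbbm{1}\{h_m(x_j)\neq B_j\}]\geq\E[k_j]\Prob\{h_m(x_j)\neq B_j\}$ for \emph{every} transductive learner. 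This can fail: a learner that knows $B_j$ (because $x_j\in\X_m$) but deliberately errs at $x_j$ exactly when it sees $k_j(\X_u)=0$ has its error indicator negatively correlated with the weight $k_j$, so the unweighted error probability overstates its weighted test error.

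The repair is easy and preserves your one-line computation, but the order of operations matters: localize to $E_j:=\{x_j\notin\X_m\}$ \emph{before} discarding the weights. Since $B$ is independent of all inputs, on $E_j$ the learner's view $(\Z_m,\X_u)$ is independent of $B_j$, so $\E[\mathbbm{1}\{h_m(x_j)\neq B_j\}\mid\Z_m,\X_u]=\frac12$ on $E_j$; and because $\X_m$ and $\X_u$ are independent in TLSII, $\E\bigl[k_j(\X_u)\mathbbm{1}\{h_m(x_j)\neq B_j\}\bigr]\geq\E\bigl[k_j(\X_u)\,\mathbbm{1}_{E_j}\,\mathbbm{1}\{h_m(x_j)\neq B_j\}\bigr]=\frac12\,\E[k_j(\X_u)]\,\Prob(E_j)=\frac12\cdot\frac{u}{m}\cdot\bigl(1-\frac1m\bigr)^m$. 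Summing over $j\leq d-1$ and dividing by $u$ gives exactly the claimed bound. This corrected chain is the same inequality the paper reaches, only obtained directly from the iid structure instead of via the permutation/hypergeometric detour. Note finally that the difficulty you flag at the end is mis-located: decoupling $h_m(x_j)$ from $B_j$ given $E_j$ is immediate (the labels are independent of every input the learner sees); the genuine issue is decoupling the error indicator from the multiplicity $k_j(\X_u)$ that multiplies it.
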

\begin{proof}
See Appendix~\ref{section:lower-exp-setting2}.
\end{proof}

\section{Consequences of main results}\label{sec:consequences}

This section describes three important consequences of the results presented in
Section~\ref{sec:main-results}.

\subsection{General transductive learning is as hard as general inductive learning}

First, general transduction is as hard as general induction, since the minimax
values of these two problems have the same order.  Said differently, in order
to find the $\epsilon$-best predictor in the class $\Hyp$ with high probability
simultaneously over all data-generating distributions, $\Omega(\VCH/\epsilon)$
labeled points are necessary for both transductive and inductive learning.

\subsection{Unlabeled data are not of significant help in general transductive learning}
Second, we show that using the unlabeled set $\X_u$ when training a
transductive learning algorithm does not bring a significant benefit in the
absence of additional assumptions.

To this end, we will compare transductive learning algorithms against two
supervised learning algorithms. First, Empirical Risk Minimization or ERM
\citep{Vap98}, denoted by $\hat{h}_m$. Second, the majority voting ensemble of
ERMs trained on subsets of $\Z_m$ proposed by \citet{H15}, denoted by
$\tilde{h}_m$. The goal of this section is to show that $\hat{h}_m$ and
$\tilde{h}_m$ achieve almost minimax optimal rates in both TLSI and TLSII. For
TLSI, Theorem~\ref{thm:erm-upper-setting1} will show that $\hat{h}_m$ achieves
the TLSI lower bounds of Theorems~\ref{thm:minimax_hp} and
\ref{thm:minimax_exp} up to $\log N$ factors.  For TLSII,
Theorem~\ref{thm:erm-upper-setting2-better} will show that $\tilde{h}_m$
achieves the TLSII lower bounds of Theorems~\ref{thm:minimax_hp-II} and
\ref{thm:lower-exp-setting2} up to \emph{constant} factors, and that
$\hat{h}_m$ achieves the same lower bounds up to $\log m$ factors. Since both
$\hat{h}_m$ and $\tilde{h}_m$ ignore the unlabeled set when solving
transduction, such results that unlabeled data is not of significant help in
general transductive learning.

\subsubsection{Unlabeled data in TLSI}\label{sec:s1-upper}

The following result upper bounds the risk of $\hat{h}_m$ in TLSI.  The
argument is a slight modification of \cite[Corollary 1]{CM06}, and also follows
from \cite[Corollary 14]{TBK14}.

\begin{theorem}
\label{thm:erm-upper-setting1}
Consider TLSI.  Let $\Hyp$ be a set of classifiers with VC-dimension $2\leq
d<\infty$ and assume the existence of $h^{\star} \in \Hyp$, such that
$h^{\star}(x) = y$ for all $(x,y)\in \Z_N$.  Assume that $u \geq 4$ and $u \geq
m \geq d - 1$.  Then, for any $\delta\in(0,1)$ and with probability at least
$1-\delta$ over the random choices of samples $\Z_m$ and $\Z_u$, the following
upper bound holds for $\hat{h}_m$:
\[
\err(\hat{h}_m, \Z_u)
\leq
\frac{2(d\log(Ne/d) + \log\frac{1}{\delta})}{m}.
\]
An integration of this upper bound also leads to 
\[
\E\left[\err(\hat{h}_m, \Z_u)\right]
\leq
\frac{2d\log(Ne/d) + 2}{m},
\]
where the expectation is taken with respect to the training sample $\Z_m$ and
the test sample $\Z_u$.
\end{theorem}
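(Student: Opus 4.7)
The plan is to follow the standard permutation / hypergeometric argument for TLSI, combined with Sauer's lemma. Since $\Z_N$ is fixed, I can first restrict $\Hyp$ to $\Z_N$: by Sauer--Shelah, the restricted class $\Hyp_{\Z_N}$ contains at most $(eN/d)^d$ distinct functions. It therefore suffices to control, uniformly over these at most $(eN/d)^d$ representatives $h$, the event that $h$ has zero training error but at least $u\epsilon$ errors on the test set.

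For a fixed representative $h$ with exactly $k$ errors on the full population $\Z_N$, sampling $\Z_m$ uniformly without replacement is equivalent to sampling the $m$ training indices uniformly from $[N]$. The probability that none of the $k$ error indices land in $\Z_m$ is exactly $\binom{N-k}{m}/\binom{N}{m} = \binom{u}{k}/\binom{N}{k}$. Using $(u-i)/(N-i) \le u/N$ for $0 \le i \le k-1$, this is at most $(u/N)^k = (1 - m/N)^k \le e^{-km/N}$. Now, the event ``$h$ has zero training error and at least $u\epsilon$ test errors'' forces $k \ge u\epsilon$, so the probability is bounded by $e^{-u\epsilon m/N}$. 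Since $m \le u$ gives $u/N \ge 1/2$, this is at most $e^{-m\epsilon/2}$.

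A union bound over the at most $(eN/d)^d$ restrictions then gives
\[
\Prob\bigl\{\err(\hat{h}_m, \Z_u) \ge \epsilon\bigr\}
\le (eN/d)^d \, e^{-m\epsilon/2}.
\]
Under realizability, $\hat{h}_m$ necessarily has zero training error, which is why this union bound controls $\err(\hat{h}_m, \Z_u)$. Setting the right-hand side equal to $\delta$ and solving for $\epsilon$ yields the high-probability bound $\err(\hat{h}_m,\Z_u) \le \frac{2(d\log(Ne/d) + \log(1/\delta))}{m}$.

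For the expectation bound, I integrate the tail. Writing $\epsilon_0 := 2d\log(Ne/d)/m$, the tail bound gives
\[
\E[\err(\hat{h}_m,\Z_u)]
\le \epsilon_0 + \int_{\epsilon_0}^{1} (eN/d)^d e^{-m\epsilon/2} \, d\epsilon
\le \epsilon_0 + \tfrac{2}{m},
\]
which is exactly $(2d\log(Ne/d) + 2)/m$. The only mild obstacle is sharpening the hypergeometric tail to the clean exponential form $e^{-m\epsilon/2}$; the step $(u-i)/(N-i) \le u/N$ combined with $m \le u$ (hence $u/N \ge 1/2$) is the technical gadget that makes the argument go through, and this is where the hypothesis $u \ge m$ is used. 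The hypotheses $u \ge 4$ and $m \ge d-1$ serve only to ensure Sauer's bound $(eN/d)^d$ is meaningful and the constants in the integration are clean.
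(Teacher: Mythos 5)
Your proof is correct, but it takes a genuinely different route from the paper. The paper does not prove this theorem from scratch: it obtains the high-probability bound as a corollary of a corrected version of \citet{CM06}'s Corollary~1 (Theorem~\ref{thm:correct-weak2} in Appendix~\ref{sect:wrong}), which requires repairing an inequality in the original proof, yields a bound of the form $\max\bigl\{2(d\log(Ne/d)+\log\tfrac1\delta)/m,\ \sqrt{2}/u\bigr\}$, and then uses the hypotheses $d\ge 2$, $m\ge d-1$, $m\le u$, $u\ge 4$ to show the first term dominates. You instead give a self-contained first-principles argument: restrict $\Hyp$ to the (at most $N$) distinct inputs of $\Z_N$ via Sauer--Shelah, compute exactly the hypergeometric probability $\binom{N-k}{m}/\binom{N}{m}=\prod_{i=0}^{k-1}\frac{u-i}{N-i}\le (u/N)^k\le e^{-km/N}$ that a representative with $k\ge u\epsilon$ errors on $\Z_N$ avoids all of them in training, use $m\le u$ (hence $u/N\ge 1/2$) to get $e^{-m\epsilon/2}$, and union bound over the at most $(eN/d)^d$ representatives; this reproduces exactly the tail $(Ne/d)^d e^{-m\epsilon/2}$ that the paper itself integrates, and your integration step for the in-expectation bound coincides with the paper's. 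Your route is more elementary and arguably cleaner: it makes transparent where $m\le u$ enters, and it does not need $u\ge 4$ at all (that hypothesis is only required on the paper's route, to fix the \citet{CM06} step), whereas your closing remark attributes it to Sauer's lemma --- a harmless inaccuracy, since $m\ge d-1$, $m\le u$, $d\ge 2$ already give $N\ge d$. What the paper's route buys is the connection to, and correction of, the existing transductive ERM bounds of \citet{CM06}, which the authors want to document anyway in Appendix~\ref{sect:wrong}.
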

\begin{proof}
  See Appendix~\ref{proof:erm-upper-setting1}.
\end{proof}
Together with Theorems \ref{thm:minimax_hp} and \ref{thm:minimax_exp},
Theorem~\ref{thm:erm-upper-setting1} shows that empirical risk minimization
achieves the minimax optimal rate for TLSI up to $\log N$ gap.

Can this $\log N$ gap be improved to $\log m$?
There is hope in some situations.
First, if $m = \alpha N$ with $0<\alpha\leq 1/2$, then this improvement is
possible, since $\log N = \log (m/\alpha)$.
Second, if $m \ll N$, then the uniform sampling without replacement of $\Z_m$
from $\Z_N$ approaches the sampling with replacement (iid) of $\Z_m$ from
$\Z_N$, since it is unlikely that the same object from $\Z_N$ will appear in
$\Z_m$ more than once. \cite{DF80} precise this intuition, by showing that
the total variation distance between these two distributions (the one due to
sampling without replacement versus the one due to sampling with replacement)
is bounded between $1-e^{-\frac{1}{2}m(m-1)/N}$ and $\frac{1}{2}m(m-1)/N$.
Said differently, assuming $m = o(\sqrt{N})$ morphs TLSI into iid learning as
$N\to \infty$.  In such limit case, the upper bound of ERM falls back to
$O\left(m^{-1}({d \log m + \log(1/\delta)})\right)$ \citep{Vap98}.  However, we
lack any intuition if the gap can be improved when $m$ is between $\Omega(\sqrt{N})$
and $o(N)$.

A second question is whether the $\log N$ factor in Theorem
\ref{thm:erm-upper-setting1} could be avoided altogether.  The analogous
question in the iid setting served over twenty five years of research, where
the series of works \citep{BEH+89, EHK+89, DL95} proved minimax lower bounds of
the order $\Omega\bigl(\frac{d + \log(1/\delta)}{m}\bigr)$.
At the same time, \cite{AO07} showed that the upper bound
$O\left(m^{-1}({d\log(m) + \log(1/\delta)})\right)$ is not improvable for ERM.
Only recently it was finally proved by \cite{H15} that $O(d/m)$ rate is achieved by a majority voting supervised algorithm.
Unfortunately, the counterexample from \cite{AO07} does not apply to TLSI.
This is because their argument used the fact that to observe $n - d$ different
values of a uniform random variable taking $n$ values, it is necessary to
sample it at least $\Omega\bigl(n\log(n/d)\bigr)$ times. While this is true for
sampling {with replacement} (the same values can be observed repeatedly), the
claim does not follow for the sampling {without replacement} employed TLSII. 

\subsubsection{Unlabeled data in TLSII}
Consider any supervised algorithm $h_m^0$ which ignores the unlabeled set
$\X_u$. Then, 
\[
\E\left[\err(h^0_m, \Z_u)\right]
=
\E_{\Z_m}\left[\Prob_{(X,Y)\sim P}\left\{
h^0_m(X) \neq Y
\right\}\right]
\]
The right hand side of the previous equality is the expected error probability
of the learning algorithm $h^0_m$ under the standard iid setting of supervised
classification.
Therefore, upper bounds on the expected test error of $h_m^0$ in TLSII
follow from upper bounds of the standard iid setting of supervised learning. In
particular, the following result is a direct consequence of \cite[Problems
12.8]{DGL96} and \cite[Theorem 2]{H15}.

\begin{theorem} \label{thm:erm-upper-expect-setting2} Consider TLSII. Let
$\Hyp$ be a set of classifiers with VC dimension $d<\infty$.  Assume the
existence of $h^{\star} \in \Hyp$, such that $h^{\star}(X) = Y$ with
probability 1 for $(X,Y)\sim P$.  Let $\hat{h}_m$ be the ERM, and let
$\tilde{h}_m$ be the algorithm of \citep{H15}. Then,
\[
\E\left[\err(\hat{h}_m, \Z_u)\right] 
\leq \frac{2d\log(2m) + 4}{m\log(2)}
\]
and
\[
\E\left[\err(\tilde{h}_m, \Z_u)\right] 
\leq 
O\left(\frac{d}{m}\right).
\]
All the expectations are taken with respect to both the training set $\Z_m$ and
the test set $\Z_u$.  These bounds hold for unlabeled sets of all sizes.
\end{theorem}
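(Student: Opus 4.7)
The plan is to reduce the expected transductive test error of any supervised algorithm $h^0_m$ that depends only on the training set to the classical iid expected population risk, and then to invoke the standard iid upper bounds from \cite{DGL96} for ERM and from \cite{H15} for the majority voting ensemble. Since both $\hat{h}_m$ and $\tilde{h}_m$ ignore $\X_u$, they are of this form.

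First I would observe that in TLSII the sets $\Z_m$ and $\Z_u$ are iid draws from $P$ and mutually independent. Writing $\Z_u = \{(X_j,Y_j)\}_{j=m+1}^{m+u}$ and applying linearity of expectation and Fubini, we get
\[
\E\left[\err(h^0_m, \Z_u)\right]
= \E_{\Z_m}\!\left[\frac{1}{u}\sum_{j=m+1}^{m+u} \E\bigl[\mathbbm{1}\{h^0_m(X_j)\neq Y_j\} \,\big|\, \Z_m\bigr]\right].
\]
Since $h^0_m$ depends only on $\Z_m$ and each $(X_j,Y_j)\in \Z_u$ is independent of $\Z_m$ with distribution $P$, each conditional indicator expectation equals $\Prob_{(X,Y)\sim P}\{h^0_m(X)\neq Y\}$. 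Hence
\[
\E\left[\err(h^0_m, \Z_u)\right] = \E_{\Z_m}\bigl[\Prob_{(X,Y)\sim P}\{h^0_m(X)\neq Y\}\bigr],
\]
which is precisely the expected iid population error of $h^0_m$. This identity holds for every $u\geq 1$, yielding the final statement that the bounds are valid for unlabeled sets of all sizes.

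Plugging $h^0_m = \hat{h}_m$ into the identity, realizability combined with finite VC dimension $d$ and \cite[Problem~12.8]{DGL96} directly bounds the iid expected risk by $\frac{2d\log(2m)+4}{m\log 2}$, giving the first claim. Substituting $h^0_m = \tilde{h}_m$ and invoking \cite[Theorem~2]{H15} yields the $O(d/m)$ iid rate, giving the second claim.

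There is no real technical obstacle: the argument is a one-line independence reduction followed by citation of two known iid results. The only subtlety worth flagging is that the reduction relies crucially on $h^0_m$ being measurable with respect to $\Z_m$ alone, so that independence of $\Z_u$ and $\Z_m$ can be invoked; this is exactly the sense in which the theorem formalises the idea that ignoring the unlabeled set suffices for these algorithms.
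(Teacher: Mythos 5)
Your proposal is correct and follows exactly the paper's argument: the paper also reduces $\E[\err(h^0_m,\Z_u)]$ to the iid expected risk $\E_{\Z_m}[\Prob_{(X,Y)\sim P}\{h^0_m(X)\neq Y\}]$ via independence of $\Z_u$ from $\Z_m$, and then cites \cite[Problem 12.8]{DGL96} for ERM and \cite[Theorem 2]{H15} for $\tilde{h}_m$. Your write-up merely spells out the Fubini/conditioning step in slightly more detail; no gap.
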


It is well known that upper bounds for TLSI lead to upper bounds for TLSII
\cite[Theorem 8.1]{Vap98}.  Over the years, researchers have derived upper
bounds for TLSII using upper bounds from TLSI (for a detailed discussion, see
Appendix~\ref{subsec:II-to-I}).  However, this approach leads in many cases to
suboptimal upper bounds for TLSII. Instead, we now derive sharper upper bounds
for TLSII using a direct analysis. 

\begin{theorem}
\label{thm:erm-upper-setting2-better}
Consider TLSII.  Let $\Hyp$ be a set of classifiers with VC dimension
$d<\infty$.  Assume the existence of $h^{\star} \in \Hyp$, such that
$h^{\star}(X) = Y$ with probability 1 for $(X,Y)\sim P$.  Then for any $\delta
\in (0,1)$ with probability at least $1-\delta$ (over the random choices of
samples $\Z_m$ and $\Z_u$) for ERM $\hat{h}_m$ it holds that
\[
\err(\hat{h}_m, \Z_u)
\leq
\frac{6d\log(m) + 3\log\frac{2}{\delta} + 3\log 2}{2m}
+
\frac{5\log\frac{2}{\delta}}{3u}
\stackrel{(\star)}{=}
O\left(
\frac{d\log(m) + \log\frac{1}{\delta}}{m}
\right)
\]
and for the algorithm $\tilde{h}_m$ of \cite{H15} it holds that
\[
\err(\tilde{h}_m, \Z_u)
\leq
O\left(\frac{d + \log\frac{2}{\delta}}{m}\right)
+
\frac{5\log\frac{2}{\delta}}{3u}
\stackrel{(\star)}{=}
O\left(
\frac{d + \log\frac{1}{\delta}}{m}
\right),
\]
where ($\star$) holds if $u \geq m$.
\end{theorem}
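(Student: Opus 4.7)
The plan is to exploit the fact that in TLSII the test set $\Z_u$ is iid from $P$, so its \emph{empirical} test error around any fixed classifier $h$ concentrates around the true risk $\Lu(h) := \Prob_{(X,Y)\sim P}\{h(X) \neq Y\}$. Concretely, conditionally on $\Z_m$, both $\hat{h}_m$ and $\tilde{h}_m$ are fixed, and $\err(h,\Z_u)$ is an average of $u$ iid Bernoullis with mean $\Lu(h)$. The whole proof reduces to bounding (i) $\Lu(h)$ from the training side and (ii) the deviation $\err(h,\Z_u) - \Lu(h)$ from the test side, then union-bounding at level $\delta/2$ each.

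For step (i), I would invoke the classical realizable iid VC bound for ERM: with probability at least $1-\delta/2$ over $\Z_m$,
\[
\Lu(\hat{h}_m) \leq \frac{2\bigl(d\log(2em/d) + \log(2/\delta)\bigr)}{m}
= O\!\left(\frac{d\log m + \log(1/\delta)}{m}\right).
\]
For $\tilde{h}_m$ I would cite \citet{H15}, which yields $\Lu(\tilde{h}_m) \leq O\!\bigl((d + \log(1/\delta))/m\bigr)$ with probability at least $1-\delta/2$. Both bounds hold regardless of $u$ since they only concern the training marginal.

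For step (ii), I would apply Bernstein's inequality to $\err(h,\Z_u) = (1/u)\sum_{i=m+1}^{m+u}\mathbbm{1}\{h(X_i)\neq Y_i\}$, conditionally on $\Z_m$, for $h \in \{\hat{h}_m, \tilde{h}_m\}$. Since each indicator has variance at most $\Lu(h)$, this gives with conditional probability at least $1-\delta/2$,
\[
\err(h,\Z_u) \leq \Lu(h) + \sqrt{\frac{2\Lu(h)\log(2/\delta)}{u}} + \frac{\log(2/\delta)}{3u}.
\]
Applying the AM-GM style inequality $\sqrt{2ab} \leq a + b/2$ with $a = \Lu(h)$ and $b = 2\log(2/\delta)/u$ turns the square-root term into $\Lu(h)/2 + \log(2/\delta)/u$, so
\[
\err(h,\Z_u) \leq \tfrac{3}{2}\Lu(h) + \tfrac{4}{3}\frac{\log(2/\delta)}{u}.
\]
Plugging in the step-(i) bounds and bookkeeping the constants produces exactly the displayed inequalities; the simplifications under $u\geq m$ that yield $(\star)$ are then immediate.

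The only subtle point, and the one that requires care, is the order of the probabilistic events and the conditioning. I would first fix $\Z_m$ in a high-probability event $\mathcal{A}_1$ (depending on whether we analyze ERM or Hanneke) controlling $\Lu(\hat{h}_m)$ or $\Lu(\tilde{h}_m)$, and then, on that event, invoke Bernstein for the conditionally-iid sample $\Z_u$ in a second event $\mathcal{A}_2$ whose probability we control uniformly in $\Z_m \in \mathcal{A}_1$. A union bound gives the $1-\delta$ overall guarantee. No uniform-over-$\Hyp$ concentration is needed on the test side, which is precisely why the test-side penalty scales as $\log(1/\delta)/u$ rather than $d\log u /u$; this is also the reason a TLSI bound plugged in through \cite[Theorem~8.1]{Vap98} is generally suboptimal, as discussed later in Appendix~\ref{subsec:II-to-I}.
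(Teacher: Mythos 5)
Your proposal is correct and follows essentially the same route as the paper's proof: condition on $\Z_m$ so that $\err(h,\Z_u)$ is an average of $u$ iid Bernoullis with mean $L(h)$, apply Bernstein on the test side, a realizable training-side bound (\citealp[Problem 12.9]{DGL96} for ERM, \citealp[Theorem 2]{H15} for $\tilde h_m$), a union bound at level $\delta/2$, and AM--GM to absorb the cross term. The only difference is cosmetic: you invoke the classical $2(d\log(2em/d)+\log(2/\delta))/m$ form of the realizable VC bound instead of the $(2d\log m+\log 2+\log(1/\delta))/m$ bound used in the paper, so your constants in the $d\log m$ term come out slightly different from the displayed ones, but the $O\bigl((d\log m+\log(1/\delta))/m\bigr)$ conclusion is unaffected.
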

\begin{sproof} 
  For a full proof, see Appendix~\ref{proof:erm-upper-setting2-better}.
  
  If the learning algorithm $h^0_m$ does not use the unlabeled set $\X_u$
  during its training then, when conditioning on the training set $\Z_m$, the
  test error $\err(h^0_m,\Z_u)$ follows the distribution of an average of $u$
  iid Bernoulli random variables with parameters $L(h^0_m) := \Prob_{(X,Y)\sim
  P}\{ h^0_m(X) \neq Y\}$.
  We bound this average by using Bernstein's inequality \cite[Theorem
  2.10]{BLM13} and accounting for the fact that $L(\hat{h}_m)$ and
  $L(\tilde{h}_m)$ can be upper bounded with high probability by using
  \cite[Problem 12.9]{DGL96} and \cite[Theorem 2]{H15}.
\end{sproof}

Together with Theorems \ref{thm:minimax_hp-II} and
\ref{thm:lower-exp-setting2}, Theorems~\ref{thm:erm-upper-expect-setting2} and
\ref{thm:erm-upper-setting2-better} show that $\tilde{h}_m$ is one optimal
learning algorithm for TLSII.

\subsection{Lower bounds on TLSII lead to lower bounds on supervised and
semi-supervised learning} \label{subsection:relations}

Third, our lower bounds shed light on the relationships between the minimax
values of TLSII, supervised learning, and semi-supervised learning.  In the
following, let $h_m$ be a learning algorithm with access to the training set
$\Z_m$ and the unlabeled set $\X_u$, and let $h_m^0$ be a learning algorithm
with access only to the training set $\Z_m$.  For any $h\in \Hyp$, we denote by
$L(h):=\Prob_{(X,Y)\sim P}\{ h(X) \neq Y\}$ the error probability of $h$.

We start by observing that, under realizability, $\inf_{h\in \Hyp} L(h) = 0$.
Then, for any supervised learning algorithm $h_m^0$, we define its minimax
probability of error 
\[
\mathcal{M}^{\mathrm{SL}}_{\epsilon,m}(\Hyp)
:=
\inf_{h^0_m} 
\sup_P
\Prob\bigl\{
L( h^0_m)
\geq \epsilon
\bigr\},
\]
and minimax expected risk
\[
\mathcal{M}^{\mathrm{SL}}_{m}(\Hyp)
:=
\inf_{h^0_m} 
\sup_P
\E\left[
L( h^0_m)\right].
\]
Similarly, for any semi-supervised learning algorithm $h_m$, we define its
minimax probability of error
\[
\mathcal{M}^{\mathrm{SSL}}_{\epsilon,m}(\Hyp)
:=
\inf_{h_m} 
\sup_P
\Prob\bigl\{
L( h_m)
\geq \epsilon
\bigr\},
\]
and minimax expected risk
\[
\mathcal{M}^{\mathrm{SSL}}_{m}(\Hyp)
:=
\inf_{h_m} 
\sup_P
\E\left[
L( h_m)\right].
\]
In the previous four equations, the $m$ labeled examples forming $\Z_m$ are
sampled iid from $P(X,Y)$, and the $u$ unlabeled examples forming $\X_u$ are
sampled iid from $P(X)$. Then, the following holds. 

\begin{theorem}
\label{thm:relation-II-SL}
Under the previous definitions, it holds that
\begin{equation}
\label{eq:relations-expect}
\mathcal{M}^{\mathrm{II}}_{N,m}(\Hyp)
\leq
\mathcal{M}^{\mathrm{SSL}}_{m}(\Hyp)
\leq
\mathcal{M}^{\mathrm{SL}}_{m}(\Hyp)
\end{equation}
and
\begin{equation}
\label{eq:relations-prob}
\mathcal{M}^{\mathrm{II}}_{2\epsilon,N,m}(\Hyp) - e^{-2u\epsilon^2}
\leq
\mathcal{M}^{\mathrm{SSL}}_{\epsilon,m}(\Hyp)
\leq
\mathcal{M}^{\mathrm{SL}}_{\epsilon,m}(\Hyp).
\end{equation}
\end{theorem}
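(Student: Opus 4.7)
The plan is to handle the four inequalities in the two displays separately. The right inequalities $\mathcal{M}^{\mathrm{SSL}}\leq\mathcal{M}^{\mathrm{SL}}$ (both in expectation and in probability) are immediate: every SL algorithm $h_m^0(\Z_m)$ is also a valid SSL algorithm $h_m(\Z_m,\X_u):=h_m^0(\Z_m)$ that ignores the unlabeled set, so the infimum defining $\mathcal{M}^{\mathrm{SSL}}$ is taken over a larger set than that defining $\mathcal{M}^{\mathrm{SL}}$.

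For the left inequality in expectation, I would use any SSL algorithm $\mathcal{A}$ as a TLSII algorithm (both take the same inputs $(\Z_m,\X_u)$, and only the objective differs). Under realizability the labels in $\Z_u$ are $Y_i=h^\star(X_i)$ almost surely, so after integrating them out
\[
\E[\err(\mathcal{A},\Z_u)]=\E_{\Z_m,\X_u}\!\left[\tfrac{1}{u}\sum_{X_i\in\X_u}\mathbbm{1}\{\mathcal{A}(\Z_m,\X_u)(X_i)\neq h^\star(X_i)\}\right],
\]
while $\E[L(\mathcal{A})]=\E_{\Z_m,\X_u,X}[\mathbbm{1}\{\mathcal{A}(\Z_m,\X_u)(X)\neq h^\star(X)\}]$ for a fresh iid draw $X\sim P(X)$. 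Since the algorithm's output lies in the VC class $\Hyp$ and the points in $\X_u$ are iid, an exchangeability-coupling argument shows that $\sup_P\E[\err(\mathcal{A},\Z_u)]\leq\sup_P\E[L(\mathcal{A})]$; taking the infimum over $\mathcal{A}$ yields $\mathcal{M}^{\mathrm{II}}_{N,m}\leq\mathcal{M}^{\mathrm{SSL}}_m$.

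For the in-probability version, the same reduction combined with the decomposition
\[
\Prob\{\err(\mathcal{A},\Z_u)\geq 2\epsilon\}\leq\Prob\{L(\mathcal{A})\geq\epsilon\}+\Prob\{\err(\mathcal{A},\Z_u)-L(\mathcal{A})>\epsilon\}
\]
does the job. The first term contributes $\mathcal{M}^{\mathrm{SSL}}_{\epsilon,m}$ after $\sup_P$ and $\inf_\mathcal{A}$. For the second, observe that under realizability $\err(\mathcal{A},\Z_u)$ is an empirical mean over $u$ iid draws from $P(X)$ of a $\{0,1\}$-valued indicator with expectation $L(\mathcal{A})$; Hoeffding's inequality then bounds this probability by $e^{-2u\epsilon^2}$. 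Rearranging yields $\mathcal{M}^{\mathrm{II}}_{2\epsilon,N,m}-e^{-2u\epsilon^2}\leq\mathcal{M}^{\mathrm{SSL}}_{\epsilon,m}$.

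The main obstacle lies in the delicate coupling between the algorithm's output $h=\mathcal{A}(\Z_m,\X_u)$ and the test set, since $\X_u$ appears both in the algorithm's input and as the test inputs in $\Z_u$. I expect the cleanest route for the in-probability Hoeffding step is to condition on $\Z_m$ and on the algorithm's output hypothesis, reducing the gap $\err-L$ to an empirical-versus-true risk gap on iid samples against the fixed target $h^\star$, and then verify that the standard concentration survives by exploiting that the algorithm's output must lie in $\Hyp$.
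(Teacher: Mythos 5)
Your outer skeleton matches the paper's: the right-hand inequalities in \eqref{eq:relations-expect} and \eqref{eq:relations-prob} follow, as you say, because every supervised algorithm is a valid SSL algorithm that ignores $\X_u$, and the probability statement is obtained from the split $\Prob\{\err\geq 2\epsilon\}\leq \Prob\{L\geq\epsilon\}+\Prob\{\err-L>\epsilon\}$ plus Hoeffding. The gap is in how you control the term involving $\err(\mathcal{A},\Z_u)-L(\mathcal{A})$ for an \emph{arbitrary} SSL algorithm $\mathcal{A}=\mathcal{A}(\Z_m,\X_u)$. Because the output hypothesis depends on $\X_u$, conditioning on $\Z_m$ (or even on the output hypothesis, as you suggest) does \emph{not} make $\err(\mathcal{A},\Z_u)$ an average of $u$ iid Bernoulli$(L(\mathcal{A}))$ variables, so neither your asserted inequality $\sup_P\E[\err(\mathcal{A},\Z_u)]\leq\sup_P\E[L(\mathcal{A})]$ nor the Hoeffding bound $e^{-2u\epsilon^2}$ follows from ``exchangeability''. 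In fact the pointwise-in-$P$ version $\E[\err(\mathcal{A},\Z_u)]\leq\E[L(\mathcal{A})]$ is simply false: an algorithm may move its decision boundary to just beyond a few points of $\X_u$ (staying inside the VC class), misclassifying them and raising the transductive error by order $1/u$ while adding arbitrarily little $P$-mass to its inductive risk; whether the inequality survives after taking $\sup_P$ on both sides is exactly what would need a proof, and none is sketched. Your proposed repair---condition on the output hypothesis and exploit that it lies in $\Hyp$---amounts to a uniform-convergence argument over $\Hyp$, which would introduce additional terms of order $\sqrt{d\log u/u}$; that would destroy the exact inequality \eqref{eq:relations-expect} and replace $e^{-2u\epsilon^2}$ in \eqref{eq:relations-prob} by a much larger remainder.

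The paper's proof avoids the coupling rather than confronting it: it decomposes $\E[\err(h_m,\Z_u)]=\E[\err(h_m,\Z_u)-L(h_m)]+\E[L(h_m)]$ (respectively splits the probability at level $\epsilon/2$, i.e.\ $\epsilon$ in your scaling), and then bounds the gap term by passing to algorithms $h^0_m$ that ignore $\X_u$. For such algorithms, conditionally on $\Z_m$, $\err(h^0_m,\Z_u)$ \emph{is} an average of $u$ iid Bernoulli$(L(h^0_m))$ variables, so the gap has conditional mean zero in the expectation case and obeys Hoeffding's inequality with exactly the stated constant in the probability case. That step---controlling $\err-L$ only for $\X_u$-ignoring algorithms, rather than claiming a comparison for every SSL algorithm---is the ingredient your proposal is missing, and without it (or some other genuine decoupling device) the left-hand inequalities of \eqref{eq:relations-expect} and \eqref{eq:relations-prob} are not established.
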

\begin{proof}
See Appendix \ref{proof:relation-II-SL}.
\end{proof}

Theorem~\ref{thm:relation-II-SL} shows that the minimax lower bounds for TLSII
lead to minimax lower bounds for both supervised and semi-supervised
learning\footnote{Surprisingly, the TLSII lower bound of
Theorem~\ref{thm:lower-exp-setting2} matches the best known lower bound for
supervised learning \citep[Theorem 14.1]{DGL96}.  Worst case lower bounds for
agnostic semi-supervised learning firstly appeared in \cite[Corollary
4.10]{L09}.  }.
Therefore, the lower bounds from Theorems~\ref{thm:minimax_hp-II}
and~\ref{thm:lower-exp-setting2} imply that the expected risk $L(h_m)$ of any
inductive semi-supervised learning algorithm $h_m$ can not decrease faster than
$\Omega(d/n)$.
Moreover, the algorithm of \citep{H15}, denoted by $\tilde{h}_m$, exhibits
upper bounds of the order $O(d/m)$. Since we can view $\tilde{h}_m$ as a
semi-supervised algorithm which ignores the unlabeled set we can conclude, in
the presence of sufficiently large unlabeled sets, that $\tilde{h}_m$ is a
minimax optimal algorithm for realizable semi-supervised learning.

In short, there always exist distributions such that any semi-supervised
learning algorithm will exhibit no advantage over some supervised learning
algorithm which ignores the unlabeled set.  Said differently, if one makes no
assumptions between the marginal distribution $P(X)$ and the labeling mechanism
$P(Y|X)$ generating the data under study, semi-supervised learning is an
impossible endeavour. This discussion relates to the conjectures of
\citep{BD08} and \citep{scholkopf12anti}. First, \citep{BD08} conjectures that
semi-supervised learning is impossible for any marginal distribution $P(X)$,
since it is always possible to find a bad labeling mechanism $P(Y|X)$ which
renders the unlabeled set useless. Second, \citep{scholkopf12anti} conjectures that
semi-supervised learning is impossible for any marginal distribution $P(X)$ and labeling
mechanism $P(Y|X)$, as long as these two probability distributions share no information.
While our results do not resolve any of these two conjectures, we expect
to add to the discussion about the role of unlabeled data in machine learning.

\section{Conclusion}\label{sec:conclusion}

We provided the first known minimax lower bounds for transductive, realizable,
binary classification, as well as sharp upper bounds for TLSII. For a summary of contributions, see
Table~\ref{table:results}.
In particular, our lower bounds show any transductive learning algorithm 
needs at least
$\Omega\left(\frac{d + \log (1/\delta)}{\epsilon}\right)$
labeled samples to $\epsilon$-learn a hypothesis class $\Hyp$ of VC-dimension
$d < \infty$ with confidence $1-\delta$ when $m\leq u$.
Such lower bound uncovers three important consequences.
First, transductive learning is in general as hard as inductive learning, since
the minimax values of these two problems are $\Theta(d/m)$ (up to logarithmic factor
for TLSI).
Second, unlabeled data does not help general transductive classification, since
supervised learning algorithms, such as ERM and the algorithm of \citet{H15},
match our transductive lower bounds while ignoring the unlabeled set. Third,
our lower bounds for TLSII lead to lower bounds for semi-supervised learning.

We conclude by posing two questions for future research. First, how could we
extend the presented results to agnostic (non-realizable) learning scenarios?
Second, can we improve the $\log N$ factor from the upper bound in TLSI to a
$\log m$ factor?

\acks{IT thanks Ruth Urner for helpful discussions.}

\bibliography{tminimax}

\clearpage
\newpage
\appendix

\section{Proofs of upper bounds for TLSI}
\label{sect:wrong}

Here we discuss the proof of \citep[Corollary 1]{CM06} and provide two slight
improvements. The original result, when adapted to realizable classification,
reads as follows.

\begin{theorem}[Original version]
\label{thm:wrong}
Let $\Hyp$ be a set of classifiers with VC-dimension $d<\infty$.
Let $\hat{h}_m$ be the empirical risk minimizer.
Then, with probability at least $1-\delta$,
\[
L_u(\hat{h}_m)
\leq
\frac{d\log\frac{(m+u)e}{d} + \log\frac{1}{\delta}}{m}.
\]
\end{theorem}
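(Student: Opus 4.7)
The plan is to run the standard realizable-learning argument in its transductive incarnation, combining Sauer--Shelah with a hypergeometric tail bound that replaces the usual iid Chernoff step. Fix the population $\Z_N$; since $\hat{h}_m$ achieves zero error on $\Z_m$ by realizability, the event $L_u(\hat{h}_m) > \epsilon$ is contained in the event that some $h \in \Hyp$ satisfies $L_m(h) = 0$ together with $L_u(h) > \epsilon$. Because only the behavior of $h$ on the $N$ points of $\Z_N$ matters, Sauer--Shelah reduces the relevant cardinality of $\Hyp$ to at most $(Ne/d)^d$ distinct dichotomies, which I would then union-bound over.

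For a fixed dichotomy $h$ with $k$ total errors on $\Z_N$, the probability (over the uniform-without-replacement split) that none of those errors lands in $\Z_m$ is hypergeometric:
\[
\Prob\{L_m(h) = 0\}
= \frac{\binom{N-k}{m}}{\binom{N}{m}}
= \prod_{i=0}^{k-1}\frac{u-i}{N-i}
\leq \left(\frac{u}{N}\right)^k
\leq e^{-mk/N}.
\]
Under realizability the joint event $\{L_m(h) = 0,\ L_u(h) > \epsilon\}$ forces $k > u\epsilon$, so the displayed quantity is at most $e^{-mu\epsilon/N}$. A Sauer--Shelah union bound and inversion of $(Ne/d)^d e^{-mu\epsilon/N} \leq \delta$ then yields, with probability at least $1-\delta$,
\[
L_u(\hat{h}_m)
\leq \frac{N}{mu}\left(d\log\frac{Ne}{d} + \log\frac{1}{\delta}\right).
\]

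The main obstacle, and presumably the reason this version is labelled as the \emph{original} one that the rest of the section will correct, is the stray prefactor $N/u = 1 + m/u$ sitting in front of the bracket. Under the working assumption $u \geq m$ this factor is at most $2$, but it is never below $1$, so the elementary hypergeometric estimate above recovers the stated rate only up to a factor of two, matching Theorem~\ref{thm:erm-upper-setting1} in the main text rather than Theorem~\ref{thm:wrong} as stated. My plan would therefore be to present the derivation as above and flag the $N/u$ term as the precise place where reaching the claimed bound would require either sharpening $(1-m/N)^k \leq e^{-mk/N}$ beyond this elementary inequality, or exploiting additional combinatorial structure of the class-restricted union bound; absorbing the factor of two into the constant and relaxing the claim (as Theorem~\ref{thm:erm-upper-setting1} does) is the honest conclusion of the argument I have sketched.
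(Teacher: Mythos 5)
Your derivation is sound as far as it goes, and your honest conclusion is the right one: the argument you sketch (Sauer--Shelah on the $N$ population points, the hypergeometric avoidance probability $\binom{N-k}{m}/\binom{N}{m}\le (u/N)^k\le e^{-mk/N}$, a union bound over the at most $(Ne/d)^d$ dichotomies, and inversion) correctly yields $L_u(\hat h_m)\le \frac{N}{mu}\bigl(d\log\frac{Ne}{d}+\log\frac1\delta\bigr)\le \frac{2}{m}\bigl(d\log\frac{Ne}{d}+\log\frac1\delta\bigr)$ when $u\ge m$, i.e.\ the claimed bound only up to the factor $N/u\le 2$. That is exactly where the paper itself stands: Theorem~\ref{thm:wrong} is quoted as the ``original version'' from \cite{CM06} precisely in order to point out that its proof is broken --- the step between Equations (46) and (47) there holds only when $\epsilon\ge \frac{1}{2\sqrt u}$ --- so the paper never establishes the constant-$1$ statement either. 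Instead, the paper patches the CM06 derivation by tracking a constant $C$ in the offending inequality, which gives Theorem~\ref{thm:correct-weak2}, namely the bound $\max\bigl\{\frac{2(d\log\frac{(m+u)e}{d}+\log\frac1\delta)}{m},\ \frac{\sqrt2}{u}\bigr\}$, and then shows the second branch is dominated under $d\ge 2$, $u\ge m\ge d-1$, $u\ge 4$, leading to Theorem~\ref{thm:erm-upper-setting1}. Your route is genuinely different and in some ways cleaner: it is self-contained, produces the factor-$2$ bound directly with no auxiliary $\sqrt2/u$ branch and no $u\ge 4$ condition, and makes the $\frac{N}{mu}=\frac1m+\frac1u$ structure of the rate transparent; what the paper's route buys is a precise diagnosis of where the published CM06 proof fails, which is the purpose of that appendix section. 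So regard your argument as an alternative proof of Theorem~\ref{thm:erm-upper-setting1} (and of Theorem~\ref{thm:correct-weak2} in the relevant regime), not of Theorem~\ref{thm:wrong} as stated, whose constant-$1$ form is left unsupported in the paper as well.
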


First, inspecting the step from Equation (46) to Equation
(47) of the proof of \citep[Corollary 1]{CM06} reveals the inequality
\begin{align}
-\frac{1}{2}\frac{mu}{m+u}\frac{m+u+2}{m+u-u\epsilon + 1}\frac{u\epsilon}{u\epsilon + 1}\epsilon^2
\label{eq:wrong-inequalities}
\leq
-\frac{1}{2}\frac{mu}{m+u}\epsilon^2.
\end{align}
This inequality is in general
false, and true only if
\[
\frac{m+u+2}{m+u-u\epsilon + 1}\frac{u\epsilon}{u\epsilon + 1} \geq 1,
\]
which is equivalent to
\[
(u\epsilon + 1)^2 \geq m + u + 2,
\]
and
\[
\epsilon \geq \frac{\sqrt{m+u+2} - 1}{u}.
\]
Assume that $u \geq 4$. Then, $1 \leq \sqrt{u}/2$ and 
\[
\frac{\sqrt{m+u+2} - 1}{u}
\geq
\frac{\sqrt{u} - 1}{u}
\geq
\frac{\sqrt{u} }{2u}
=
\frac{1}{2\sqrt{u}}.
\]
In short, Equation~\eqref{eq:wrong-inequalities}, and consequently Theorem~\ref{thm:wrong}, only holds when
\[
\epsilon \geq \frac{1}{2\sqrt{u}}.
\]
This shows that the upper bound of Theorem \ref{thm:wrong} should be replaced with 
\[
\max\left\{
\frac{d\log\frac{(m+u)e}{d} + \log\frac{1}{\delta}}{m},
\frac{1}{2\sqrt{u}}
\right\}.
\]

Second, the upper bound in Theorem~\ref{thm:wrong} has the form $d\log(u + m)/m$.
However, as argued in \citep[Section 2.1.2]{P08}, all upper bounds
in realizable transductive classification should have the form $1/\min(u,m)$. The
discrepancy may be due to an inaccuracy in the proof of \citep[Proposition
1]{CM06}.  Namely, the proof uses the inequality $m \leq u$ but claims,
in between Equations 37 and 38, that ``the case $m \geq u$ can be treated
similarly''.  We conjecture that this is not the case: we could not find any
similar argument that would lead to a result for the $m \geq u$ case.

\subsection{Proof of Theorem~\ref{thm:erm-upper-setting1}}\label{proof:erm-upper-setting1}

Accounting for the previous two remarks, we correct Theorem~\ref{thm:wrong} as
Theorem~\ref{thm:correct-weak2}. First part of Theorem~\ref{thm:erm-upper-setting1}
is a direct consequence of Theorem~\ref{thm:correct-weak2}.

\begin{theorem}[New version of Theorem \ref{thm:wrong}]
\label{thm:correct-weak2}
Let $\Hyp$ be a set of classifiers with VC-dimension $d<\infty$ and assume $u \geq 4$, $m\leq u$.
Then with probability at least $1-\delta$ for the empirical risk minimizer $\hat{h}_m$:
\[
L_u(\hat{h}_m)
\leq
\max\left\{
2\frac{d\log\frac{(m+u)e}{d} + \log\frac{1}{\delta}}{m},
\frac{\sqrt{2}}{u}
\right\}.
\]
\end{theorem}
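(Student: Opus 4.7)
The plan is to reprove \cite[Corollary 1]{CM06} while repairing the simplification step flagged in Section~\ref{sect:wrong}. The overall structure follows the standard Sauer's lemma / concentration / union bound template for transductive learning, and the $\max$ in the conclusion arises from splitting into two regimes of $\epsilon$ based on where the flawed step is valid.

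First, I would reduce to a finite effective class: by Sauer's lemma, the restriction of $\Hyp$ to the fixed population $\Z_N$ produces at most $((m+u)e/d)^d$ distinct labelings, and realizability forces the training error of $\hat h_m$ to vanish. Hence it suffices to upper bound $\Prob\bigl\{L_u(h) \geq \epsilon,\ \err(h, \Z_m) = 0\bigr\}$ for each fixed labeling $h$ of $\Z_N$ and apply a union bound over the $((m+u)e/d)^d$ effective labelings.

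Second, I would import the hypergeometric tail inequality underpinning the proof of \cite[Proposition 1]{CM06}, which gives
\[
\Prob\bigl\{L_u(h) \geq \epsilon \mid \err(h, \Z_m) = 0\bigr\}
\leq
\exp\Bigl(-\tfrac{1}{2}\cdot\tfrac{mu}{m+u}\cdot\tfrac{m+u+2}{m+u-u\epsilon+1}\cdot\tfrac{u\epsilon}{u\epsilon+1}\cdot\epsilon^2\Bigr).
\]
As documented in Section~\ref{sect:wrong}, the CM06 step that bounds the three rightmost factors of this exponent by $1$ is valid precisely when $\epsilon \geq 1/(2\sqrt{u})$ under $u \geq 4$. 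My plan is therefore to split the argument at this threshold. In the regime $\epsilon \geq 1/(2\sqrt{u})$ the CM06 simplification is legitimate, and combining it with $m \leq u$ (so that $\tfrac{mu}{m+u} \geq m/2$), applying the union bound, and inverting the resulting exponential inequality in $\delta$ produces the first term $2(d\log((m+u)e/d)+\log(1/\delta))/m$ of the $\max$, with the factor $2$ absorbing the slack from loosening constants. In the complementary regime $\epsilon < 1/(2\sqrt{u})$ the simplification fails, and I would instead retain the factors $\tfrac{m+u+2}{m+u-u\epsilon+1} \geq 1$ and $\tfrac{u\epsilon}{u\epsilon+1}$ explicitly in the exponent, and argue that the effective error floor for the bound to remain informative is $\sqrt{2}/u$: for $\epsilon \geq \sqrt{2}/u$ one has $u\epsilon \geq \sqrt{2}$ so that $\tfrac{u\epsilon}{u\epsilon+1}$ stays bounded below by a universal constant, and the union bound again delivers the stated guarantee.

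The main obstacle will be the careful bookkeeping in this small-$\epsilon$ regime, where CM06's clean simplification is unavailable and the unsimplified hypergeometric tail must be handled directly; the threshold $\sqrt{2}/u$ in the conclusion is the precise fallback below which the tail bound no longer provides a meaningful guarantee under the assumptions $u \geq 4$ and $m \leq u$. No new probabilistic inequalities beyond those already used by Cortes and Mohri are required --- the novelty is entirely in the case split and the subsequent accounting of constants that recovers the correct form of the bound.
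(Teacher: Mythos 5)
Your core quantitative idea is the right one, and it is essentially the paper's: instead of bounding the problematic ratio $\frac{m+u+2}{m+u-u\epsilon+1}\cdot\frac{u\epsilon}{u\epsilon+1}$ below by $1$ (the step that fails in \cite{CM06}), bound it below by a constant $C<1$ and pay the factor $1/C$ in the final bound. The paper does exactly this: it shows that for any $C\in(0,1)$ the relaxed condition \eqref{eq:wrong-condition} holds as soon as $u\epsilon\ge\sqrt{C^2+1}$, hence whenever $u\epsilon\ge\sqrt2$, and then takes $C=1/2$, which is where both the factor $2$ and the $\sqrt{2}/u$ term in the $\max$ come from. Your "small-$\epsilon$" branch is in fact a cleaner route to the same condition: $\frac{m+u+2}{m+u-u\epsilon+1}\ge 1$ always, and $\frac{u\epsilon}{u\epsilon+1}\ge\frac{\sqrt2}{1+\sqrt2}=2-\sqrt2>\frac12$ once $u\epsilon\ge\sqrt2$, so the $C=1/2$ condition holds without the paper's quadratic-formula computation.

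The genuine flaw is in how you organize the case split. You claim the unmodified CM06 simplification (i.e., $C=1$) "is valid precisely when $\epsilon\ge 1/(2\sqrt u)$". Section~\ref{sect:wrong} establishes only that validity \emph{requires} $\epsilon\ge 1/(2\sqrt u)$ when $u\ge4$; the exact condition is $\epsilon\ge\bigl(\sqrt{m+u+2}-1\bigr)/u$, which can be substantially larger (for $m=u$ it is of order $\sqrt{2/u}$, roughly $2\sqrt2$ times your threshold). Hence your first branch, which applies the $C=1$ simplification for all $\epsilon\ge 1/(2\sqrt u)$, is broken on the window $1/(2\sqrt u)\le\epsilon<\bigl(\sqrt{m+u+2}-1\bigr)/u$. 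The repair is immediate and collapses your proof onto the paper's: the bound $C\ge 2-\sqrt2>1/2$ from your second branch holds for \emph{every} $\epsilon\ge\sqrt2/u$, so the split at $1/(2\sqrt u)$ is unnecessary; run the constant-$C$ argument uniformly, invert the resulting tail in $\delta$ (using $m\le u$, so $\frac{mu}{m+u}\ge m/2$), and take the $\max$ with $\sqrt2/u$ to cover the case where the inverted accuracy level falls below that floor. Note also that the factor $2$ in the first term of the $\max$ is not "slack absorbed" in a regime where the $C=1$ step is legitimate; it is exactly the $1/C$ price of the relaxed condition.
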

\begin{proof}
To improve the proof of \citep[Corollary 1]{CM06}, assume that the inequality 
\begin{equation}
\label{eq:wrong-condition}
\frac{m+u+2}{m+u-u\epsilon + 1}\frac{u\epsilon}{u\epsilon + 1} \geq C
\end{equation}
holds for some constant $C>0$. Then, this is equivalent to
\[
-\frac{1}{2}\frac{mu}{m+u}\frac{m+u+2}{m+u-u\epsilon + 1}\frac{u\epsilon}{u\epsilon + 1}\epsilon^2
\leq
-\frac{C}{2}\frac{mu}{m+u}\epsilon^2,
\]
which directly leads to the upper bound of Theorem~\ref{thm:wrong}, with a
multiplicative factor of $C$ in its denominator.  The condition
\eqref{eq:wrong-condition} is equivalent to 
\[
u\epsilon \geq
\frac{\sqrt{(N - NC + 2)^2 + 4NC^2 + 4C^2} - (N - NC + 2)}{2C}.
\]
Let us bound the previous inequality in two different cases:
\begin{itemize}
  \item if $C \geq 1$, then 
  \[
  \frac{\sqrt{(N - NC + 2)^2 + 4NC^2 + 4C^2} - (N - NC + 2)}{2C}
  \geq
  \sqrt{u + m} \geq \sqrt{u}
  \]
  and as a consequence we necessarily have $\epsilon \geq 1/\sqrt{u}$. This condition won't allow us to get an upper bound better than $1/\sqrt{u}$, so we won't consider this choice of $C$.
  \item Second, if $C<1$. Then,
  \begin{align*}
  &\frac{\sqrt{(N - NC + 2)^2 + 4NC^2 + 4C^2} - (N - NC + 2)}{2C}\\
  &\leq
  \frac{\sqrt{4NC^2 + 4C^2}}{2C}
  =
  \sqrt{C^2 + 1}.
  \end{align*}
  This shows that if $u \epsilon \geq \sqrt{2}$ then
  \[
  u \epsilon
  \geq
  \sqrt{C^2 + 1}
  \geq
  \frac{\sqrt{(N - NC + 2)^2 + 4NC^2 + 4C^2} - (N - NC + 2)}{2C}
  \]
  for any $C\in (0,1)$. Therefore, in this second case \eqref{eq:wrong-condition} is always satisfied.
\end{itemize}
Accordingly, we take $C = 1/2$ and obtain the following upper bound:
\[
L_u(\hat{h}_m)
\leq
\max\left\{
2\frac{d\log\frac{(m+u)e}{d} + \log\frac{1}{\delta}}{m},
\frac{\sqrt{2}}{u}
\right\}.
\]

Next, we incorporate three conditions that hold true for our setting. 
These are $d\geq 2$, $m \leq u$, and $m \geq d -
1$. Thus, $m + u \geq d$.  Since $d\mapsto d\log\bigl((m+u)e/d\bigr)$ increases
on $[0, m+u]$, then 
\[
d\log\frac{(m+u)e}{d} 
\geq 
2 \log\frac{(m+u)e}{2}
\geq
2 \log e = 2,
\]
where we used $d\geq 2$ and $u\geq m \geq d-1 \geq 1$.
This shows that
\[
2\frac{d\log\frac{(m+u)e}{d} + \log\frac{1}{\delta}}{m}
\geq
2\frac{2 + \log\frac{1}{\delta}}{m}
\geq
\frac{4}{m}
\geq
\frac{4}{u}
\geq
\frac{\sqrt{2}}{u},
\]
where we used $\delta < 1$.
\end{proof}

Next we prove the second part of Theorem \ref{thm:erm-upper-setting1} by integrating the previous upper bound.
\begin{proof}
First, any non-negative random variable $Z$ with finite expectation satisfies
\[
\E[Z] = \int_{0}^\infty \Prob\{Z > \epsilon\} d\epsilon.
\]
Second, rewrite the first statement of Theorem \ref{thm:erm-upper-setting1} as:
\[
\Prob\left\{
\err(\hat{h}_m, \Z_u^{\pi}) > \epsilon
\right\}
\leq
\min\left\{ \left(\frac{Ne}{d}\right)^d e^{-\epsilon m / 2}, 1\right\},
\]
where we used the fact that probabilities are upper bounded by $1$.
Third, simple computations show that the upper bound of Theorem \ref{thm:erm-upper-setting1} exceeds 1 for
\[
\epsilon \leq \frac{2d\log (Ne/d)}{m} := A.
\]
Combining these three facts, it follows that 
\begin{align*}
\E\left[\err(\hat{h}_m, \Z_u^{\pi})\right]
&=
\int_0^\infty
\Prob\left\{
\err(\hat{h}_m, \Z_u^{\pi}) > \epsilon
\right\}
d\epsilon\\
&\leq
\frac{2d\log (Ne/d)}{m}
+
\int_{A}^\infty \left(\frac{Ne}{d}\right)^d e^{-\epsilon m / 2} d\epsilon\\
&=
\frac{2d\log (Ne/d)+2}{m}.
\end{align*}
\end{proof}

\section{Proofs of lower bounds for TLSI}\label{sec:proofs-lower-i}
Throughout this section, we sample the labeled training set $\Z_m$ and the
unlabeled test set $\Z_u$ as follows. Sample a random permutation $\pi$
distributed uniformly on the symmetric group of $\{1,\dots,N\}$, denoted by
$\Sigma_N$, take $\Z_u:=\{(X_{\pi_i}, Y_{\pi_i})\}_{i=1}^u$, and $\Z_m := \Z_N
\setminus \Z_u$.
We denote the application of the random permutation $\pi$ to the data $(\Z_m,
\Z_u)$ as $(\Z_m^{\pi}, \Z_u^{\pi})$.
\subsection{Proof of Theorem~\ref{thm:minimax_hp}}
\label{proof:minimax_hp}

Under the realizability assumption, if $\Z_N$ contains two pairs $(x_1,y_1)$
and $(x_2,y_2)$ with $x_1=x_2$, this implies that $y_1=y_2$.
We will construct a class of $\Z_N$ in the following way.  Let $x_1,\dots,x_d$
be any distinct points shattered by $\Hyp$, and let $b:=(b_1,\dots,b_d)$ be any
binary string.
We will generate $\Z_N$ by taking $i_j \geq 0$ copies of every pair $(x_j, b_j)$ for $j=1,\dots,d$, where $i_j$ are nonnegative integers such that $\sum_{j=1}^{d} i_j = N$.
We also introduce an order between the elements of $\Z_N$, by first enumerating
the $i_1$ copies of $(x_1, b_1)$, then the $i_2$ copies of $(x_2,b_2)$, and so
on. Therefore, technically speaking, the elements $\Z_N$, $\Z_m$, and $\Z_u$
are ordered multisets.

\subsubsection{Using the probabilistic method to introduce Bernoulli random variables}

Let $k_j(\pi)$ denote the number of copies (multiplicity) of the input $x_j$ contained in
$\Z_u^{\pi}:= \{(X_{\pi_i}, Y_{\pi_i})\}_{i=1}^u$.
Clearly, $\sum_{j=1}^d k_j(\pi) = u$ for any $\pi$.
Because of our design of $\Z_N$, we can write
\begin{align*}
\mathcal{M}_{\epsilon,N,m}^{\mathrm{I}}(\Hyp)
&\geq
\inf_{h_m}\sup_{\{i_j, b_j\}}
\Prob_{\pi} \left\{
\frac{1}{u}\sum_{(x,y)\in \Z_u^{\pi}} \mathbbm{1}\{h_m(x)\neq y\}
-
\inf_{h\in \Hyp}
\frac{1}{u}\sum_{(x,y)\in \Z_u^{\pi}} \mathbbm{1}\{h(x)\neq y\}
\geq \epsilon
\right\}\\
&=
\inf_{h_m}\sup_{\{i_j, b_j\}}
\Prob_{\pi} \left\{
\frac{1}{u}\sum_{j = 1}^d k_j(\pi) \mathbbm{1}\{h_m(x_j)\neq b_j\}
\geq \epsilon
\right\},
\end{align*}
where we used the fact that the best predictor in $\Hyp$
has zero test error, since the inputs in $\Z_N$ are shattered by $\Hyp$.
We continue by introducing a random binary string $B = (B_1, \ldots, B_d)$
distributed uniformly over $\{0,1\}^d$, and lower bounding the supremum over $b$
by the average over $B$:
\begin{align*}
&\inf_{h_m}\sup_{\{i_j, b_j\}}
\Prob_{\pi} \left\{
\frac{1}{u}\sum_{j = 1}^d k_j(\pi) \mathbbm{1}\{h_m(x_j)\neq b_j\}
\geq \epsilon
\right\}\\
&\geq
\inf_{h_m}\sup_{\{i_j\}}
\E_B\left[
\Prob_{\pi} \left\{
\frac{1}{u}\sum_{j = 1}^d k_j(\pi) \mathbbm{1}\{h_m(x_j)\neq b_j\}
\geq \epsilon
\Big| b = B
\right\}
\right]\\
&=
\inf_{h_m}\sup_{\{i_j\}}
\Prob_{\pi,B} \left\{
\frac{1}{u}\sum_{j = 1}^d k_j(\pi) \mathbbm{1}\{h_m(x_j)\neq B_j\}
\geq \epsilon
\right\}.
\end{align*}
Finally, we further lower bound the minimax risk by counting the
missclassifications associated with the points $(x_j, y_j)$ that have all their
copies in the unlabeled set $\Z_u$: 
\begin{align*}
&\inf_{h_m}\sup_{\{i_j\}}
\Prob_{\pi,B} \left\{
\frac{1}{u}\sum_{j = 1}^d k_j(\pi) \mathbbm{1}\{h_m(x_j)\neq B_j\}
\geq \epsilon
\right\}\\
&=
\inf_{h_m}\sup_{\{i_j\}}
\Prob_{\pi,B} \left\{
\frac{1}{u}\sum_{j = 1}^d i_j \mathbbm{1}\{k_j(\pi) = i_j\} \mathbbm{1}\{h_m(x_j)\neq B_j\}\right.
\\
&\left.
\quad\quad\quad\quad\quad+
\frac{1}{u}\sum_{j = 1}^d k_j(\pi) \mathbbm{1}\{k_j(\pi) < i_j\} \mathbbm{1}\{h_m(x_j)\neq B_j\}
\geq \epsilon
\right\}\\
&\geq
\inf_{h_m}\sup_{\{i_j\}}
\Prob_{\pi,B} \left\{
\frac{1}{u}\sum_{j = 1}^d i_j \mathbbm{1}\{k_j(\pi) = i_j\} \mathbbm{1}\{h_m(x_j)\neq B_j\}
\geq \epsilon
\right\}.
\end{align*}

\subsubsection{Setting $i_1=\ldots = i_{d-1}$ to simplify the lower bound}\label{sec:simplifying}

Let $\Delta \in \mathbb{N}_+$ satisfy $\Delta \leq N/(d-1)$.  Under our
assumptions $N \geq 2 m \geq 4(d-1)$, so $N/(d-1) \geq 1$. Thus, the choice of
$\Delta$ is always possible.  We set
\[
(i_1,\dots, i_d) := (\Delta, \dots, \Delta, N - (d-1)\Delta).
\]
For this choice we obviously have $i_j \geq 1$ for $j=1,\dots, d-1$ and $i_d
\geq 0$.  Let us continue the lower bound from the previous section. To this end, ignore 
the copies of $x_d$, and write 
\begin{align}
\notag
&
\inf_{h_m}\sup_{\{i_j\}}
\Prob_{\pi,B} \left\{
\frac{1}{u}\sum_{j = 1}^d i_j \mathbbm{1}\{k_j(\pi) = i_j\} \mathbbm{1}\{h_m(x_j)\neq B_j\}
\geq \epsilon
\right\}\\
\label{eq:particular}
&\geq
\inf_{h_m}\Prob_{\pi,B} \left\{
\sum_{j = 1}^{d-1} \mathbbm{1}\{k_j(\pi) = \Delta\} \mathbbm{1}\{h_m(x_j)\neq B_j\}
\geq \frac{\epsilon u}{\Delta}
\right\}.
\end{align}
By denoting $T(\pi):=\bigl\{j\in\{1,\dots,d-1\}\colon k_j(\pi) = \Delta\bigr\}$, we simplify our notation as
\begin{align*}
\notag
&\inf_{h_m}\Prob_{\pi,B} \left\{
\sum_{j = 1}^{d-1} \mathbbm{1}\{k_j(\pi) = \Delta\} \mathbbm{1}\{h_m(x_j)\neq B_j\}
\geq \frac{\epsilon u}{\Delta}
\right\}\\
&=
\inf_{h_m}\Prob_{\pi,B} \left\{
\sum_{j \in T(\pi)} \mathbbm{1}\{h_m(x_j)\neq B_j\}
\geq \frac{\epsilon u}{\Delta}\right\}.
\end{align*}
Fix any $\pi \in \Sigma_N$.  Note that $x_j$ is not a member of the training
set $\Z_m^{\pi}$, for all $j\in T(\pi)$.  This means that $h_m$ does not depend
on $B_j$, since the learner did not get to see the label $y_j$ during the
training phase.  Because of this reason, when conditioning on $\pi\in\Sigma_N$,
the random variables $h_m(x_j)$ and $B_j$ are independent for $j\in T(\pi)$.
In particular, this implies that the quantities $\mathbbm{1}\{h_m(x_j)\neq
B_j\}$ are $\text{Bernoulli}(\frac{1}{2})$ random variables for all $j \in
T(\pi)$.

Similarly, when conditioning on $\pi\in\Sigma_N$, the random variables
$\mathbbm{1}\{h_m(x_{j'})\neq B_{j'}\}$ and $\mathbbm{1}\{h_m(x_{j''})\neq
B_{j''}\}$ are also independent, for all pairs of different indices $j',j''\in
T(\pi)$.
By denoting 
\begin{align*}
  \eta' &=\mathbbm{1}\{h_m(x_{j'})\neq B_{j'}\},\\
  \eta'' &=\mathbbm{1}\{h_m(x_{j''})\neq B_{j''}\},
\end{align*}
we can verify the independence between $\eta'$ and $\eta''$ as follows:
\begin{align*}
\Prob\left\{
\eta' = 0 \cap
\eta'' = 0 | \pi
\right\}
&=
\sum_{i\in\{0,1\}}
\sum_{j\in\{0,1\}}
\Prob\left\{
h_m(x_{j'}) = i
\cap
B_{j'} = i
 \cap
h_m(x_{j''}) = j
\cap
B_{j''} = j
| \pi
\right\}\\
&=
\frac{1}{4}
\sum_{i\in\{0,1\}}
\sum_{j\in\{0,1\}}
\Prob\left\{
h_m(x_{j'}) = i
 \cap
h_m(x_{j''}) = j
| \pi
\right\}= \frac{1}{4},
\end{align*}
where the second equality follows because the events $E_1:=\{B_{j'}=i\}$,
${E_2:=\{B_{j''}=j\}}$, and $E_3 := \{h_m(x_{j'}) = i \,\cap\, h_m(x_{j''}) = j\}$
are mutually independent given $\pi\in\Sigma_N$, and thus $\Prob\{E_1\cap
E_2\cap E_3\} = P(E_1) P(E_2) P(E_3)$.  The same reasoning applies to all the other
values of $\eta'$ and $\eta''$, which shows that they are
indeed independent.  Summarizing, when conditioning on $\pi\in\Sigma_N$, the quantity
$\sum_{j \in T(\pi)} \mathbbm{1}\{h_m(x_j)\neq B_j\}$ is a Binomial random
variable with parameters $(|T(\pi)|,0.5)$.  Thus, we can write
\begin{align}
\notag
&\inf_{h_m}\Prob_{\pi,B} \left\{
\sum_{j \in T(\pi)} \mathbbm{1}\{h_m(x_j)\neq B_j\}
\geq \frac{\epsilon u}{\Delta}\right\}\\
\notag
&=
\inf_{h_m}\sum_{\pi'\in \Sigma_N}\frac{1}{N!}
\Prob_{\pi,B} \left\{
\sum_{j \in T(\pi)} \mathbbm{1}\{h_m(x_j)\neq B_j\}
\geq \frac{\epsilon u}{\Delta} \bigg| \pi = \pi'\right\}\\
\notag
&=
\sum_{\pi'\in \Sigma_N}\frac{1}{N!}
\Prob_{\pi,B} \left\{
\mathrm{Binom}( |T(\pi')|, 1/2)
\geq \frac{\epsilon u}{\Delta} \right\}
\\
\label{eq:proof-starting-point}
&=
\sum_{M=0}^{d-1}\frac{\bigl|\{ \pi\in\Sigma_N \colon |T(\pi)| = M\}\bigr|}{N!}
\Prob_{B} \left\{
\mathrm{Binom}( M, 1/2)
\geq \frac{\epsilon u}{\Delta} \right\},
\end{align}
where the equalities follow from the law of total probability, replacing sums
of indicator functions with Binomial random variables, and breaking the
symmetric group $\Sigma_N$ in $d$ blocks, each of them containing permutations
$\pi$ with same $|T(\pi)|$. 

Observe that Theorem~\ref{thm:minimax_hp} is composed by two statements. We
now proceed to prove each of them separately.

\subsubsection{Proof of Theorem~\ref{thm:minimax_hp}, Statement \eqref{item:Theorem-1}, $d \geq 7$}\label{sec:minimax_hp_1}

We can further lower bound \eqref{eq:proof-starting-point} as follows:
\begin{align}
\notag
\mathcal{M}_{\epsilon,N,m}^{\mathrm{I}}(\Hyp)
&\geq
\sum_{M=2\left\lceil\frac{\epsilon u}{\Delta}\right\rceil}^{d-1}\frac{\bigl|\{ \pi\in\Sigma_N \colon |T(\pi)| = M\}\bigr|}{N!}
\Prob_{B} \left\{
\mathrm{Binom}( M, 1/2)
\geq \frac{\epsilon u}{\Delta} \right\}\\
\notag
&\geq
\sum_{M=2\left\lceil\frac{\epsilon u}{\Delta}\right\rceil}^{d-1}\frac{\bigl|\{ \pi\in\Sigma_N \colon |T(\pi)| = M\}\bigr|}{N!}
\Prob_{B} \left\{
\mathrm{Binom}\left( 2\left\lceil\frac{\epsilon u}{\Delta}\right\rceil, 1/2\right)
\geq \frac{\epsilon u}{\Delta} \right\}\nonumber\\
&\geq
\sum_{M=2\left\lceil\frac{\epsilon u}{\Delta}\right\rceil}^{d-1}\frac{\bigl|\{ \pi\in\Sigma_N \colon |T(\pi)| = M\}\bigr|}{N!}\cdot \frac{1}{2},
\label{eq:expantionP}
\end{align}
where the inequalities follow by truncating the sum to start at $M=
2\left\lceil\frac{\epsilon u}{\Delta}\right\rceil$, minimizing the number of
trials in the Binomial distributions, and $\Prob(\text{Binom}(2 a, 1/2) \geq a)
\geq 1/2$.

Next, we will count the number of different permutations $\pi$ satisfying
$|T(\pi)| = M$, for each $M\in \{2\lceil\epsilon u/{\Delta}\rceil,\dots,
d-1\}$.
First of all, there are
\[
{d-1 \choose M}
\]
ways to choose $M$ distinct elements
$\{x_{\ell^*_1},\dots, x_{\ell^*_M}\}$ from the set $x_1,\dots,x_{d-1}$, which
will not be contained in the training set.
Also, recall that at the beginning of our proof we defined  the test set $\Z^\pi_u$
to contain the elements with indices $\{\pi_1,\dots,\pi_u\}$.
Therefore, 
we need to guarantee that $\Z_u$ contains $\Delta$
copies of each $\{x_{\ell^*_1},\dots, x_{\ell^*_M}\}$.
This leads to the condition $u \geq \Delta M$, which is satisfied if $u \geq
\Delta (d-1)$, since $M\leq d-1$.
We will guarantee this condition later, by a specific choice of~$\Delta$.
In any case, there are exactly
\[
\Delta! \cdot \left\lbrace
{u \choose \Delta} \cdot
{u - \Delta \choose \Delta}
\cdots
{u - \Delta (M-1) \choose \Delta} \right\rbrace
=
\frac{u!}{(u - \Delta M)!}
\]
ways to place the indices of the $\Delta M$ test points in the first $u$
coordinates of $\pi$.
Now, let us consider the training set. For this, we need to ensure that every
element from $\{x_1,\dots, x_{d-1}\} \setminus \{x_{\ell^*_1},\dots,
x_{\ell^*_M}\}$ appears at least once in the training set.
To this end, choose $(d - 1) - M$ indices out of $\{1,\dots,N\}$, corresponding
to some elements from $\{x_1,\dots, x_{d-1}\} \setminus \{x_{\ell^*_1},\dots,
x_{\ell^*_M}\}$, and distribute them within the last $m$ coordinates of $\pi$ (this is
possible, since $m \geq d-1$).
There are 
\[
{m \choose d - 1 - M}(d - 1 -M)!
\]
ways to do so.
The remaining $N - \Delta M - d + 1 + M$ indices can be distributed among the remaining coordinates of $\pi$ in any of the 
\[
(N - \Delta M - d + 1 + M)!
\]
possible orders.  The previous four equations in display lead
to a \emph{lower bound} on the number of permutations $\pi$ satisfying our
demands (because of the training set part, where we only lower bounded the total number of different permutations). 
Together with the $\frac{1}{N!}$ denominator from \eqref{eq:expantionP},
\begin{align*}
&
{d-1 \choose M}
\frac{u!}{(u - \Delta M)!} \cdot
{m \choose d - 1 - M}(d - 1 -M)! \cdot
(N - \Delta M - d + 1 + M)! \cdot
\frac{1}{N!}\\
&=
{d-1 \choose M}\frac{m!(N - \Delta M - d + 1 + M)!u!(d - 1 -M)!}{(d-1-M)!(m-d+1+M)!N!(u - \Delta M)!}\\
&=
\frac{{d-1 \choose M}}{{N \choose u}}\frac{(N - \Delta M - d + 1 + M)!}{(m-d+1+M)!(u - \Delta M)!}\\
&=
\frac{{d-1 \choose M}}{{N \choose u}}{N - \Delta M - d + 1 + M \choose u - \Delta M}.
\end{align*}
Therefore, continue lower bounding \eqref{eq:expantionP} as
\begin{align}
\mathcal{M}_{\epsilon,N,m}^{\mathrm{I}}(\Hyp)
\notag
&\geq
\frac{1}{2}\sum_{M=2\left\lceil\frac{\epsilon u}{\Delta}\right\rceil}^{d-1}
\frac{{d-1 \choose M}}{{N \choose u}} {N - \Delta M - d + 1 + M \choose u - \Delta M}\\
\label{eq:interm}
&=
\frac{1}{2}\sum_{M=2\left\lceil\frac{\epsilon u}{\Delta}\right\rceil}^{d-1}
\frac{{d-1 \choose M} {N - d + 1 \choose u - M}}{{N \choose u}} \frac{{N - d + 1 - (\Delta-1) M \choose u - M -(\Delta-1) M}}{{N - d + 1 \choose u - M}},
\end{align}
where the equality holds as long as $u \geq M$, $N \geq d - 1$, and $N - d + 1
\geq u - M$. 
These three inequalities are fulfilled because of the assumptions $N \geq u
\geq m \geq 8(d-1)\geq d-1$.  Using $M \leq d-1$ together with the first part
of Lemma~\ref{lemma:binomial-ratio} with $n=u+m-d+1$, $i=(\Delta-1)M$ and
$k=u-M$, we obtain
\begin{align*}
\frac{{N - d + 1 - (\Delta-1) M \choose u - M -(\Delta-1) M}}{{N - d + 1 \choose u - M}}
&\geq
\left(
1 - \frac{(\Delta - 1)M}{u - M + 1}
\right)^{m + M - d + 1}
\geq
\left(
1 - 
\frac{(\Delta - 1)(d-1)}{u - d + 2}\right
)^{m}.
\end{align*}
Plugging the last inequality back to \eqref{eq:interm} yields 
\begin{equation}
\label{eq:proof-hgfactor}
\mathcal{M}_{\epsilon,N,m}^{\mathrm{I}}(\Hyp)
\geq
\frac{1}{2}
\left(
1 - \frac{(\Delta - 1)(d-1)}{u - d + 2}
\right)^{m}
\sum_{M=2\left\lceil\frac{\epsilon u}{\Delta}\right\rceil}^{d-1}
\frac{{d-1 \choose M} {N - d + 1 \choose u - M}}{{N \choose u}}.
\end{equation}

The next step is to realize that the summands in \eqref{eq:proof-hgfactor} are
\emph{hypergeometric random variables}.  Namely, a random variable $Z$ taking
values in $\{0,1,\dots,d-1\}$ is called \emph{hypergeometric}, with
parameters~$(N, d-1,u)$, if
\[
\Prob\{Z = k\}
=
\frac{{d-1 \choose k} {N - d + 1 \choose u - k}}{{N \choose u}},\quad k = 0,\dots, d-1.
\]
Relevant to our interests, the expressions for a mean and a variance of a hypergeometric random variable $Z$ with parameters $(N,d-1,u)$ are
\[
\E[Z] = u\frac{d-1}{N},
\quad
\mathrm{Var}[Z] = u\frac{(d-1)(N-d+1)m}{N^2(N-1)}.
\]
We may now use these expressions, together with
$\mathrm{Var}(-Z)=\mathrm{Var}(Z)$, and the Chebyshev-Cantelli inequality
\cite[Theorem A.17]{DGL96}, to obtain
\begin{align}
\notag
\sum_{M=2\left\lceil\frac{\epsilon u}{\Delta}\right\rceil}^{d-1}
\frac{{d-1 \choose M} {N - d + 1 \choose u - M}}{{N \choose u}}
&= 
\Prob\left\{Z \geq 2\left\lceil\frac{\epsilon u}{\Delta}\right\rceil\right\}\\
&=
1 - \Prob\left\{-Z  - \E[-Z] > \E[Z] -  2\left\lceil\frac{\epsilon u}{\Delta}\right\rceil  \right\}\nonumber\\
&\geq
1 - \frac{\mathrm{Var}[Z]}{\mathrm{Var}[Z] + \left( \E[Z] -  2\left\lceil\frac{\epsilon u}{\Delta}\right\rceil\right)^2},
\label{eq:CC-hg}
\end{align}
which holds as long as 
\[
{\E[Z] = u\frac{d-1}{N} \geq 2\left\lceil\frac{\epsilon u}{\Delta}\right\rceil}.
\]
We satisfy this condition by setting $\Delta = \lceil \frac{7 N\epsilon}{d-1} \rceil \geq 1$.
In addition, $d \geq 7$ and $u \geq N/2$, so 
\begin{align}
\notag
2\left\lceil\frac{\epsilon u}{\Delta}\right\rceil
&\leq
2\left(\frac{ u(d-1)}{7 N} + 1\right)
=
\frac{u(d-1)}{N}\left(\frac{2}{7 } + \frac{2N}{u(d-1)}\right)\\
\label{eq:proof-ch-c-1}
&\leq
\frac{u(d-1)}{N}\left(\frac{2}{7 } + \frac{2}{3}\right) = \frac{20}{21}\E[Z].
\end{align}
Next, we show that all the conditions that we have required so far are satisfied for our choice of~$\Delta$. 
To this end, we need to verify that
$\Delta \leq N/(d-1)$ and 
$u \geq \Delta(d-1)$.
The first condition follows from the second one.
To check the second condition, we notice that $8(d-1)\leq m \leq u$ and thus $(d-1)/u\leq 1/8$, which leads to 
\[
\Delta
\leq
1 + \frac{7N \epsilon}{d-1}
=
\frac{u}{d-1}\left(\frac{d-1}{u} + \frac{7N \epsilon}{u}\right)
\leq
\frac{u}{d-1}\left(\frac{1}{8} + \frac{14}{32}\right)
\leq
\frac{u}{d-1},
\]
where we have used $\epsilon \leq 1/32$ and $u \geq N/2$. 

Using the expressions for the mean and variance of hypergeometric random
variables, together with \eqref{eq:CC-hg} and \eqref{eq:proof-ch-c-1}, it
follows that 
\begin{align*}
\sum_{M=2\left\lceil\frac{\epsilon u}{\Delta}\right\rceil}^{d-1}
\frac{{d-1 \choose M} {N - d + 1 \choose u - M}}{{N \choose u}}
&\geq
1 - \frac{\mathrm{Var}[Z]}{\mathrm{Var}[Z] + \left( \E[Z] -  2\left\lceil\frac{\epsilon u}{\Delta}\right\rceil\right)^2}\\
&=
1 - \frac{u\frac{(d-1)(N-d+1)m}{N^2(N-1)}}{u\frac{(d-1)(N-d+1)m}{N^2(N-1)} + \left( u\frac{d-1}{N} -  2\left\lceil\frac{\epsilon u}{\Delta}\right\rceil\right)^2}\\
&\geq
1 - \frac{u\frac{(d-1)(N-d+1)m}{N^2(N-1)}}{u\frac{(d-1)(N-d+1)m}{N^2(N-1)} + \frac{1}{21^2}\E^2[Z]}.
\end{align*}
Moreover,
\begin{align}
\sum_{M=2\left\lceil\frac{\epsilon u}{\Delta}\right\rceil}^{d-1}
\frac{{d-1 \choose M} {N - d + 1 \choose u - M}}{{N \choose u}}\nonumber
&\geq
1 - \frac{u\frac{(d-1)(N-d+1)m}{N^2(N-1)}}{u\frac{(d-1)(N-d+1)m}{N^2(N-1)} + \frac{1}{21^2}\frac{u^2(d-1)^2}{N^2}}\nonumber\\
&=
1 - \frac{(N-d+1)m}{(N-d+1)m + \frac{1}{21^2}u(d-1)(N-1)}\nonumber\\
&\geq
1 - \frac{N-d+1}{N-d+1 + \frac{1}{21^2}(d-1)(N-1)}\nonumber\\
&\geq
1 - \frac{N-6}{N-6 + \frac{6}{21^2}(N-1)}\nonumber\\
&=
1 - \frac{1}{1 + \frac{6}{21^2}\frac{N-1}{N-6}}\nonumber\\
&\geq
1 - \frac{1}{1 + \frac{6}{21^2}}
=
\frac{6}{21^2 + 6}
=
\frac{6}{447} > \frac{1}{75},\label{eq:piece1}
\end{align}
where we used $u \geq m$, $d \geq 7$, 
and the fact that $x \mapsto \frac{x}{x + c}$ monotonically increases for $c>0$.

Also since $N \geq 2m \geq 16(d-1)$ and $\epsilon \leq 1/32$ we have
\begin{align*}
\frac{(\Delta - 1)(d-1)}{u - d + 2}
&
\leq
\frac{7N\epsilon}{N/2 - d + 1}
\leq
\frac{7N\epsilon}{N/2 - N/16} = 16\epsilon \leq \frac{1}{2} < 1.
\end{align*}
Using $1 - x \geq e^{-x/(1-x)}$, which holds for $x\in[0,1)$, and $\epsilon \leq 1/32$ we conclude that
\begin{equation}
\left(
1 - 
\frac{(\Delta - 1)(d-1)}{u - d + 2}\right
)^{m}
\geq
\exp\left(
-\frac{112}{7}
\frac{\epsilon m}{1 -\frac{112}{7} \epsilon }
\right)
\geq
e^{-32 \epsilon m}.\label{eq:piece2}
\end{equation}
Plugging \eqref{eq:piece1} and \eqref{eq:piece2} into \eqref{eq:proof-hgfactor}
we finally lower-bound the minimax probability as 
\[
\mathcal{M}_{\epsilon,N,m}^{\mathrm{I}}(\Hyp)
\geq
\frac{1}{150}
e^{-32 m\epsilon }.
\]

\subsubsection{Proof of Theorem~\ref{thm:minimax_hp}, Statement \eqref{item:Theorem-1}, $d < 7$}

Let $\Delta = \lceil \frac{7 N\epsilon}{d-1} \rceil \geq 1$. Then,
\[
\frac{\epsilon u}{\Delta}
\leq
\frac{u(d-1)}{7N}
<
1,
\]
Using this inequality in \eqref{eq:proof-starting-point}, we have
\begin{align}
\notag
\mathcal{M}_{\epsilon,N,m}^{\mathrm{I}}(\Hyp)
&\geq
\sum_{M=0}^{d-1}\frac{\bigl|\{ \pi\in\Sigma_N \colon |T(\pi)| = M\}\bigr|}{N!}
\Prob_{B} \left\{
\mathrm{Binom}( M, 1/2)
\geq \frac{\epsilon u}{\Delta} \right\}\\
\notag
&=
\sum_{M=1}^{d-1}\frac{\bigl|\{ \pi\in\Sigma_N \colon |T(\pi)| = M\}\bigr|}{N!}
\Prob_{B} \left\{
\mathrm{Binom}( M, 1/2)
\geq 1 \right\}\\
\notag
&=
\sum_{M=1}^{d-1}\frac{\bigl|\{ \pi\in\Sigma_N \colon |T(\pi)| = M\}\bigr|}{N!}
( 1 - 2^{-M})\\
\label{eq:proof-dleq-lower}
&\geq
\frac{1}{2}
\sum_{M=1}^{d-1}\frac{\bigl|\{ \pi\in\Sigma_N \colon |T(\pi)| = M\}\bigr|}{N!}.
\end{align}
Reusing the computations from Section~\ref{sec:minimax_hp_1}, we obtain the bound
\[
\sum_{M=1}^{d-1}\frac{\bigl|\{ \pi\in\Sigma_N \colon |T(\pi)| = M\}\bigr|}{N!}
\geq
\left(
1 - 
\frac{(\Delta - 1)(d-1)}{u - d + 2}\right
)^{m}
\sum_{M=1}^{d-1}
\frac{{d-1 \choose M} {N - d + 1 \choose u - M}}{{N \choose u}}.
\]
Notice that the previous sum runs over all the support of the hypergeometric
distribution, except for $M=0$. Thus, 
\begin{equation}
\sum_{M=1}^{d-1}
\frac{{d-1 \choose M} {N - d + 1 \choose u - M}}{{N \choose u}}
=
1 - \frac{{N - d + 1 \choose u}}{{N \choose u}}.\label{eq:hgd_again}
\end{equation}
To analyze this term, note that
\begin{align*}
\frac{{N - d + 1 \choose u}}{{N \choose u}}&=
\frac{(N-d + 1)!m!}{(m-d + 1)!N!}\\
&=
\frac{(m - d+2) \cdots m}{(N-d + 2)\cdots N}\\
&=
\left(1 - \frac{u}{N-d+2}\right) \cdots \left(1 - \frac{u}{N}\right)\\
&\leq
 \left(1 - \frac{u}{N}\right)^{d-1} \leq \frac{1}{2},
\end{align*}
where the second equality is due to, and the last inequality is due to $u \geq
\frac{1}{2} N$ and $d \leq 2$. Plugging this constant into
\eqref{eq:hgd_again}, we obtain
\[
\sum_{M=1}^{d-1}
\frac{{d-1 \choose M} {N - d + 1 \choose u - M}}{{N \choose u}},
\geq 
\frac{1}{2},
\]
which together with \eqref{eq:proof-dleq-lower} gives
\[
\mathcal{M}_{\epsilon,N,m}^{\mathrm{I}}(\Hyp)
\geq
\frac{1}{4}
\left(
1 - 
\frac{(\Delta - 1)(d-1)}{u - d + 2}
\right)^{m}.
\]
Using again \eqref{eq:piece2}, if follows that 
\[
\left(
1 - 
\frac{(\Delta - 1)(d-1)}{u - d + 2}\right
)^{m}
\geq
e^{-32 \epsilon m},
\]
which leads to the following lower bound for our minimax probability:
\[
\mathcal{M}_{\epsilon,N,m}^{\mathrm{I}}(\Hyp)
\geq
\frac{1}{4}
e^{-32m\epsilon}.
\]

\subsubsection{Proof of Theorem~\ref{thm:minimax_hp}, Statement \eqref{item:Theorem-2}, $\frac{\epsilon u}{\lfloor N / m \rfloor} \geq 1$}

Start with \eqref{eq:proof-starting-point}, and lower bound as 
\begin{align}
\notag
\mathcal{M}_{\epsilon,N,m}^{\mathrm{I}}(\Hyp)
&\geq
\sum_{M=2\left\lfloor \frac{\epsilon u}{\Delta}\right\rfloor}^{d-1}
\Prob\{ |T(\pi)| = M \}
\cdot \Prob_{B} \left\{
\mathrm{Binom}( M, 1/2)
\geq \frac{\epsilon u}{\Delta} \right\}\nonumber\\
&\geq
\Prob\left\{ |T(\pi)| \geq 2\left\lfloor \frac{\epsilon u}{\Delta}\right\rfloor \right\} \cdot
\Prob \left\{
\mathrm{Binom}\left( 2\left\lfloor \frac{\epsilon u}{\Delta}\right\rfloor, \frac12\right)
\geq \frac{\epsilon u}{\Delta} \right\},
\label{eq:proof-21-general}
\end{align}
where the last inequality follows by considering only the first summand. To
lower bound the second factor of \eqref{eq:proof-21-general}, set $\Delta =
\lfloor N / m \rfloor \geq 2$. This choice of $\Delta$ satisfies our conditions
$u \geq (d-1)\Delta$ and $N \geq (d-1)\Delta$, since
\[
(d - 1)\Delta
\leq
\frac{(d-1)N}{m}
\leq
\frac{N}{2}
\leq
\frac{2u}{2} = u,
\]
where we have used $u \geq m \geq 2(d-1)$ and $u \geq N/2$. Next, note that 
\[
 \frac{\epsilon u}{\Delta}
\geq
1.
\]
Using this inequality and \cite[Lemma A.3]{DGL96}, write 
\begin{align}
\notag
\Prob \left\{
\mathrm{Binom}\left( 2\left\lfloor \frac{\epsilon u}{\Delta}\right\rfloor, \frac12\right)
\geq \frac{\epsilon u}{\Delta} \right\}
&\geq
\frac12 - \frac12
\Prob \left\{
\mathrm{Binom}\left( 2\left\lfloor \frac{\epsilon u}{\Delta}\right\rfloor, \frac12\right)
= \left\lfloor \frac{\epsilon u}{\Delta}\right\rfloor \right\}\\
\label{eq:proof-21-binom}
&\geq
\frac{1}{2}\left(1 - \sqrt{\frac{1}{4\pi\left\lfloor \frac{\epsilon u}{\Delta}\right\rfloor}}\right)
\geq
\frac{1}{2}\left(1 - \sqrt{\frac{1}{4\pi}}\right)
> \frac{1}{3},
\end{align}
where the first inequality is due the structure of a Binomial distribution with
an even number of trials. 

To lower bound the first factor of \eqref{eq:proof-21-general}, observe that
\[
2\left\lfloor \frac{\epsilon u}{\Delta}\right\rfloor
\leq
\frac{2\epsilon u}{\Delta}
\leq
\frac{2\epsilon u}{\frac{N}{m}-1}
=
\frac{2\epsilon um}{u}
=
2\epsilon m
\leq
\frac{d-1}{12}.
\]
Using the previous inequality, it follows that
\begin{align}
\label{eq:proof-21-Tlower}
\Prob\left\{ |T(\pi)| \geq 2\left\lfloor \frac{\epsilon u}{\Delta}\right\rfloor \right\}
\geq
\Prob\left\{ |T(\pi)| \geq \frac{d-1}{12} \right\}.
\end{align}
We will lower bound the previous probability by exploiting the fact that
$k_i(\pi)$ follows a hypergeometric distribution with parameters $(N, \Delta,
u)$, for all $i\in\{1,\dots, d-1\}$. First, obtain the expectation
\[
\E |T(\pi)|
=
\E \left[
\sum_{i=1}^{d-1} \mathbbm{1}\{ k_i(\pi) = \Delta \}
\right]
=
(d-1) \frac{{N - \Delta \choose u - \Delta}}{{N \choose u}},
\]
which can be further lower bounded as
\begin{align}
\notag
\frac{{N - \Delta \choose u - \Delta}}{{N \choose u}} 
&\stackrel{i)}{\geq}
\left(1 - \frac{m}{N - \lfloor N/m\rfloor + 1}\right)^{\lfloor N/m \rfloor}
\geq
\left(1 - \frac{m}{N - N/m + 1}\right)^{N/m}\\
\notag
&=
\left(1 - \frac{m^2}{N(m-1) + m}\right)^{N/m}
\geq
\left(1 - \frac{m^2}{(N+1)(m-1)}\right)^{N/m}\\
\notag
&\stackrel{i)}{\geq}
\left(1 - \frac{9}{8}\frac{m}{N+1}\right)^{N/m}
=
\left(1 - \frac{9}{8}\frac{m}{N+1}\right)^{\left(\frac{8(N+1)}{9m}-1\right)\frac{9N}{8(N+1)}}\left(1 - \frac{9}{8}\frac{m}{N+1}\right)^{\frac{9N}{8(N+1)}}\\
\label{eq:proof-d-lower-bound}
&\stackrel{iii)}{\geq}
e^{-\frac{9N}{8(N+1)}}\left(1 - \frac{9}{8}\frac{m}{N+1}\right)^{\frac{9N}{8(N+1)}}
\stackrel{iv)}{\geq}
e^{-\frac{9}{8}}\left(1 - \frac{9}{8}\frac{m}{N+1}\right)^{\frac{9}{8}}
\stackrel{v)}{\geq}
\left(\frac{1}{e} \cdot \frac{7}{16}\right)^{\frac{9}{8}} > \frac{1}{8},
\end{align}
where the previous follows because i) Lemma~\ref{lemma:binomial-ratio}, ii) $m
\geq 9$, iii) $8(N+1)\geq 8(2m + 1) \geq 16m > 9m$ and $(1 - 1/x)^{x-1}$
monotonically decreases to $e^{-1}$ for $x \geq 1$, iv) $N/(N+1) \leq 1$ for
positive $N$, and v) $m/(N+1) < 1/2$.

Second, obtain the variance $\mathrm{Var}[|T(\pi)|] \leq (d-1)^2/4$, since
$|T(\pi)| \leq d-1$.  Using the obtained expectation and variance, together
with the Chebyshev-Cantelli inequality, 
\begin{align*}
\Prob\left\{ |T(\pi)| \geq \frac{d-1}{12} \right\}
&=
1 - \Prob\left\{ (-|T(\pi)|) - \E[-|T(\pi)|] > \E[|T(\pi)|] - \frac{d-1}{12} \right\}\\
&\geq
1 - \Prob\left\{ (-|T(\pi)|) - \E[-|T(\pi)|] > \frac{d-1}{8} - \frac{d-1}{12} \right\}\\
&\geq
1 - \frac{(d-1)^2/4}{(d-1)^2/4 + \left(\frac{d-1}{8} - \frac{d-1}{12}\right)^2}\\
&\geq
1 - \frac{1}{1 + \left(2 - \frac{4}{3}\right)^2} > \frac{3}{10}.
\end{align*}
Plugging together the previous inequality with \eqref{eq:proof-21-general},
\eqref{eq:proof-21-binom}, and \eqref{eq:proof-21-Tlower}, we obtain our result 
\[
\mathcal{M}_{\epsilon,N,m}^{\mathrm{I}}(\Hyp)
\geq
\frac{1}{10}.
\]

\subsubsection{Proof of Theorem~\ref{thm:minimax_hp}, Statement \eqref{item:Theorem-2}, $\frac{\epsilon u}{\lfloor N / m \rfloor} < 1$}

Let $\Delta = \lfloor N/m \rfloor \geq 2$. Then,
\[
\frac{\epsilon u}{\Delta} <1.
\]
Using this inequality in \eqref{eq:proof-starting-point}, we get
\begin{align*}
\mathcal{M}_{\epsilon,N,m}^{\mathrm{I}}(\Hyp)
&\geq
\sum_{M=0}^{d-1}\frac{\bigl|\{ \pi\in\Sigma_N \colon |T(\pi)| = M\}\bigr|}{N!}
\Prob_{B} \left\{
\mathrm{Binom}( M, 1/2)
\geq \frac{\epsilon u}{\Delta} \right\}\\
&=
\sum_{M=1}^{d-1}\frac{\bigl|\{ \pi\in\Sigma_N \colon |T(\pi)| = M\}\bigr|}{N!}
\Prob_{B} \left\{
\mathrm{Binom}( M, 1/2)
\geq 1 \right\}\\
&\geq
\Prob\{|T(\pi)| \geq 1\} \cdot \frac{1}{2}\\
&=
\frac{1}{2}
-
\frac{1}{2}\cdot \Prob\{|T(\pi)| = 0\},
\end{align*}
where the first equality is due to $\Prob(\text{Binom}(0, 1/2) \geq 0) = 1$ and
$\Prob(\text{Binom}(M, 1/2) \geq A) = \Prob(\text{Binom}(M, 1/2) \geq 1)$ for
$A\in(0,1]$, the second is due to $\Prob(\text{Binom}(M, 1/2) \geq 1) \geq 1/2$
for $M \geq 1$, and the last equality is due to the law of total probability. 

Next, observe that
\[
\Prob\{|T(\pi)| = 0\}
=
\Prob\left\{\sum_{j=1}^{d-1} \mathbbm{1}\{k_j(\pi) = \Delta\} = 0\right\}
=
\Prob\left\{\bigcap_{j=1}^{d-1} \{k_j(\pi) < \Delta\}\right\}
\leq
\Prob\{k_1(\pi) < \Delta\}.
\]
The quantity $k_1(\pi)$ follows a hypergeometric distribution with parameters $(N, \Delta, u)$. Therefore, use \eqref{eq:proof-d-lower-bound} to obtain 
\[
\Prob\{k_1(\pi) < \Delta\}
=
1 - \Prob\{k_1(\pi) = \Delta\}
=
1 - \frac{{\Delta \choose \Delta}{N - \Delta \choose u - \Delta}}{{N \choose u}}
<
1 - \frac{1}{8} = \frac{7}{8},
\]
and conclude
\[
\mathcal{M}_{\epsilon,N,m}^{\mathrm{I}}(\Hyp) \geq \frac{1}{2} - \frac{7}{16} = \frac{1}{16}.
\]

\subsection{Proof of Theorem~\ref{thm:minimax_exp}}
\label{section:minimax_exp}
\begin{proof}
We continue to use the notations introduced at the beginning of this Appendix.
Start by choosing a collection of points $x_1,\dots, x_d$ shattered by $\Hyp$,
and introduce the family of sets $\Z_N$ parametrized by the vectors
$(i_1,\dots,i_d)$ and $(b_1,\dots,b_d)$.
Then,
\[
\mathcal{M}_{N,m}^{\mathrm{I}}(\Hyp)
\geq
\inf_{h_m}
\sup_{\{i_j, b_j\}}
\E_{\pi} \left[
\frac{1}{u}\sum_{(x,y)\in \Z_u^{\pi}} \mathbbm{1}\{h_m(x)\neq y\}
-
\inf_{h\in \Hyp}
\frac{1}{u}\sum_{(x,y)\in \Z_u^{\pi}} \mathbbm{1}\{h(x)\neq y\}
\right].
\]
Since $x_1, \ldots, x_d$ are shattered by $\Hyp$, the risk of the best
predictor in $\Hyp$ is equal to zero.  Then, lower bound the supremum over
$\{b_j\}_{j=1}^d$ by the expectation over $B$, distributed uniformly in
$\{0,1\}^d$, and obtain 
\[
\mathcal{M}_{N,m}^{\mathrm{I}}(\Hyp)
\geq
\inf_{h_m}
\sup_{\{i_j\}}
\E_B
\E_{\pi} \left[\frac{1}{u}\sum_{(x,y)\in \Z_u^{\pi}} \mathbbm{1}\{h_m(x)\neq y\}
\right].
\]
The previous expression is equivalent to
\begin{align*}
\mathcal{M}_{N,m}^{\mathrm{I}}(\Hyp)
&\geq
\inf_{h_m}
\sup_{\{i_j\}}
\E_B
\E_{\pi} \left[
\frac{1}{u}
\sum_{j=1}^{d-1} i_j\mathbbm{1}\{k_j(\pi) = i_j\} \mathbbm{1}\{h_m(x_j)\neq B_j\}
\right].
\end{align*}
Fix any $\pi \in \Sigma_N$. If $k_j(\pi) = i_j$ for some $j\in\{1,\dots,d-1\}$,
then $h_m$ does not depend on $B_j$, since the learning algorithm did not see
$B_j$ during the training phase.  Consequently, for such $j$ we have $\E_B
\left[ \mathbbm{1}\{h_m(x_j)\neq B_j\}\right] = 1/2$.  (We used this same
argument in Section~\ref{sec:simplifying}.) Therefore, we conclude that
\[
\mathcal{M}_{N,m}^{\mathrm{I}}(\Hyp)
\geq
\sup_{\{i_j\}}
\frac{1}{2u}\sum_{j = 1}^{d-1} i_j \mathbb{P}\left\{
k_j(\pi) = i_j
\right\}.
\]
As usual, and for every $j\in \{1,\dots,d-1\}$, the quantity $k_j(\pi)$ is a
random variable following a hypergeometric distribution with parameters $(N,
i_j, u)$, so 
\begin{equation*}
\mathbb{P}\left\{
k_j(\pi) = i_j
\right\} 
=
\frac{{i_j \choose i_j}{N - i_j \choose u - i_j}}{{N \choose u}},
\end{equation*}
and
\begin{equation}
\label{eq:eminimax_basic}
\mathcal{M}_{N,m}^{\mathrm{I}}(\Hyp)
\geq
\sup_{\{i_j\}}
\frac{1}{2u}\sum_{j = 1}^{d-1} i_j \frac{{N - i_j \choose m}}{{N \choose m}}.
\end{equation}

We will now consider the assignment
\[
(i_1,i_2\dots,i_d) = \left( \left\lfloor \frac{N}{m} \right\rfloor, \dots, \left\lfloor \frac{N}{m} \right\rfloor, N - \left\lfloor \frac{N}{m} \right\rfloor(d - 1)\right).
\]
Since $m \geq d - 1$ we obviously have $i_j \geq 1$ for $j=1,\dots, d-1$ and $i_d \geq 0$.
Therefore, using this choice, Equation~\ref{eq:eminimax_basic} can be rewritten as
\begin{align}
\mathcal{M}_{N,m}^{\mathrm{I}}(\Hyp)\nonumber
&\geq
\frac{d - 1}{2u}\left\lfloor \frac{N}{m} \right\rfloor \frac{{N - \lfloor N/m \rfloor \choose m}}{{N \choose m}}\nonumber\\
&\geq
\frac{d - 1}{2(N-m)}\left( \frac{N}{m} - 1\right) \frac{{N - \lfloor N/m \rfloor \choose m}}{{N \choose m}}\nonumber\\
&=
\frac{d - 1}{2m}\frac{{N - \lfloor N/m \rfloor \choose u - \lfloor N/m \rfloor}}{{N \choose m}}.\label{eq:last-binomial}
\end{align}
Since $u\geq N/2$ and $m \geq 9$, we have
\[
\lfloor N / m \rfloor
\leq
N/m
\leq
2u/9
< u.
\]
Applying this fact and \eqref{eq:proof-d-lower-bound} to \eqref{eq:last-binomial} yields
\begin{align*}
\mathcal{M}_{N,m}^{\mathrm{I}}(\Hyp)
&\geq
\frac{d-1}{16m}.
\end{align*}
\end{proof}

\section{Proofs of upper bounds for TLSII}

\subsection{upper bounds for TLSI lead to upper bounds for TLSII}
\label{subsec:II-to-I}
It is well known that upper bounds for TLSI lead to upper bounds for TLSII 
\cite[Theorem 8.1]{Vap98}. 
This is illustrated in the next result.
\begin{theorem}
\label{thm:erm-upper-setting2}
Consider TLSII.
Let $\Hyp$ be a set of classifiers with VC dimension $2\leq d<\infty$.
Assume that $u \geq 4$ and $u \geq m \geq d - 1$.
Assume the existence of $h^{\star} \in \Hyp$, such that $h^{\star}(X) = Y$ with probability 1 for $(X,Y)\sim P$.
Then for any $\delta \in (0,1)$ with probability at least $1-\delta$ (over the random choices of samples $\Z_m$ and $\Z_u$) for ERM $\hat{h}_m$ it holds that
\[
\err(\hat{h}_m, \Z_u)
\leq
2\frac{d\log(Ne/d) + \log\frac{1}{\delta}}{m}.
\]
\end{theorem}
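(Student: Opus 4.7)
The plan is to reduce TLSII to TLSI via a conditioning argument, following the classical recipe of \cite[Theorem 8.1]{Vap98}, and then invoke the already-established Theorem~\ref{thm:erm-upper-setting1} as a black box.

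First, I would fix a realization of the unordered combined multiset $\Z_N := \Z_m \cup \Z_u$ of size $N = m+u$. The key observation is that under the iid sampling of TLSII, the conditional distribution of the ordered pair $(\Z_m, \Z_u)$ given $\Z_N$ is uniform over all $\binom{N}{m}$ ways of choosing $m$ elements (without replacement) out of $\Z_N$ to form the training set, with the remaining $u$ elements forming the test set. This is precisely the sampling mechanism of TLSI with $\Z_N$ playing the role of the fixed population set. Moreover, the ERM $\hat{h}_m$ only depends on $\Z_m$, so conditionally on $\Z_N$ it coincides exactly with the ERM that one would run on the same training data in TLSI.

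Second, I would check that realizability transfers: if $h^\star(X) = Y$ holds with probability $1$ under $P$, then with probability $1$ the realized population $\Z_N$ satisfies $h^\star(x) = y$ for every $(x,y) \in \Z_N$, so the realizability hypothesis of Theorem~\ref{thm:erm-upper-setting1} holds on an event of full probability. The structural assumptions $u \geq 4$ and $u \geq m \geq d-1$ carry over unchanged. Applying Theorem~\ref{thm:erm-upper-setting1} conditionally on $\Z_N$ therefore gives
\[
\Prob\left\{ \err(\hat{h}_m, \Z_u) \leq \frac{2(d\log(Ne/d) + \log\frac{1}{\delta})}{m} \,\Big|\, \Z_N \right\} \geq 1 - \delta
\]
for almost every $\Z_N$. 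Since the bound on the right-hand side does not depend on $\Z_N$, integrating over $\Z_N$ via the tower property gives the same unconditional lower bound on the probability, which is the claim.

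There is essentially no obstacle beyond bookkeeping: the entire content of the proof is the exchangeability observation that conditioning the iid sample on its unordered union yields the TLSI uniform-without-replacement split. The main thing to be careful about is that both the ERM computation and the test-error functional $\err(\cdot, \Z_u)$ are invariant under this relabelling, which is immediate since both act symmetrically on the multiset structure. No concentration inequalities, VC arguments, or additional probabilistic tools beyond Theorem~\ref{thm:erm-upper-setting1} itself are required.
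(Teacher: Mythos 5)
Your proposal is correct and follows essentially the same route as the paper: the paper's proof also reduces TLSII to TLSI by exchangeability (phrased there as averaging over a uniformly random permutation $\pi\in\Sigma_N$ of the realized sample, which is the same as your conditioning on the unordered union $\Z_N$), applies Theorem~\ref{thm:erm-upper-setting1} conditionally, and integrates out the iid data. The only cosmetic difference is that the paper works with the tail probability $\Prob_\pi\{\err(\hat{h}_m^\pi,\Z_u^\pi)\geq\epsilon\}\leq (Ne/d)^d e^{-\epsilon m/2}$ rather than with the high-probability statement directly, which is equivalent.
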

\begin{proof}
\begin{align*}
\Prob\left\{
\err(\hat{h}_m, \Z_u)
\geq
\epsilon
\right\}
&=
\E\left[
\mathbbm{1}\left\{\err(\hat{h}_m, \Z_u)
\geq
\epsilon\right\}
\right]\\
&=
\E\left[
\frac{1}{N!}\sum_{\pi\in\Sigma_N}
\mathbbm{1}\left\{\err(\hat{h}_m^\pi, \Z_u^\pi)
\geq
\epsilon\right\}
\right]\\
&=
\E\left[
\Prob_\pi\left\{\err(\hat{h}_m^\pi, \Z_u^\pi)
\geq
\epsilon\right\}
\right]\\
&\leq
\left(\frac{Ne}{d}\right)^d e^{-\epsilon m / 2},
\end{align*}
where the last step is due Theorem \ref{thm:erm-upper-setting1}.
\end{proof}
Unfortunately, the tail bound of Theorem~\ref{thm:erm-upper-setting2}, as well as its in-expectation counterpart, are worse than the ones provided by the direct analysis of Theorems \ref{thm:erm-upper-setting2-better} and~\ref{thm:erm-upper-expect-setting2}. 
In particular, we pay a $\log(N)/\log(m)$ factor.

\subsection{Proof of Theorem~\ref{thm:erm-upper-setting2-better}}\label{proof:erm-upper-setting2-better}

Recall that the empirical risk minimizer $\hat{h}_m$ is built without making
use of $\Z_u$. Then, when conditioning on $\Z_m$, the test error
$\err(\hat{h}_m, \Z_u)$ is a sum of i.i.d.\:Bernoulli random variables taking
values in $\{0,1/u\}$. The variance of this sum is
\[
g(\Z_m):=\mathbb{V}\left[ \mathbbm{1}\{\hat{h}_m(X) \neq Y\} \big| \Z_m\right] 
\leq
\Prob\{\hat{h}_m(X) \neq Y \big | \Z_m\}=: f(\Z_m),
\]
On the other hand, the expectation of this sum is
\[
\E\left[\err(\hat{h}_m, \Z_u) \big| \Z_m\right] 
=
\E\left[\frac{1}{u}\sum_{(x,y)\in\Z_u} \mathbbm{1}\{\hat{h}_m(x)\neq y\} \bigg| \Z_m\right] 
=
f(\Z_m).
\]
Using Bernstein's inequality \cite[Theorem 2.10]{BLM13} together with the
previous expectation and variance, we obtain
\begin{align*}
&\Prob\left\{
\err(\hat{h}_m, \Z_u)
\geq
f(\Z_m)
+ \sqrt{\frac{2f(\Z_m)\log\frac{1}{\delta}}{u}} + \frac{2\log\frac{1}{\delta}}{3u}
\right\}\\
&=
\int_{\Z_m}
\Prob\left\{
\err(\hat{h}_m, \Z_u)
\geq
f(\Z_m)
+ \sqrt{\frac{2f(\Z_m)\log\frac{1}{\delta}}{u}} + \frac{2\log\frac{1}{\delta}}{3u} \bigg| \Z_m
\right\}
dP(\Z_m)\\
&\leq
\int_{\Z_m}
\Prob\left\{
\err(\hat{h}_m, \Z_u)
\geq
f(\Z_m)
+ \sqrt{\frac{2g(\Z_m) \log\frac{1}{\delta}}{u}} + \frac{2\log\frac{1}{\delta}}{3u} \bigg| \Z_m
\right\}
dP(\Z_m)\\
&\leq
\int_{\Z_m} \delta\, dP(\Z_m) = \delta.
\end{align*}
On the other hand, using an upper bound of \cite[Problem 12.9]{DGL96}, we get
\[
\Prob\left\{
f(\Z_m)
\geq
\frac{2d\log(m) + \log 2 + \log(1/\delta)}{m}
\right\}
\leq
\delta.
\]
Now we guarantee the success of the previous two events using the union bound.
Then, with probability at least $1-\delta$, it follows that
\begin{align*}
\err(\hat{h}_m, \Z_u)
&\leq
\frac{2d\log(m)  + \log 2 + \log\frac{2}{\delta}}{m}
+ \sqrt{\frac{2 \log\frac{2}{\delta}}{u}\left(\frac{2d\log(m) + \log 2 + \log\frac{2}{\delta}}{m}\right)} + \frac{2\log\frac{2}{\delta}}{3u}\\
&\leq
\frac{6d\log(m) + 3\log 2 + 3\log\frac{2}{\delta}}{2m}
+
\frac{5\log\frac{2}{\delta}}{3u},
\end{align*}
where the last inequality is due $\sqrt{ab}\leq\frac{a+b}{2}$ for all $a,b,\in\mathbb{R}$.
We can repeat the same argument for $\tilde{h}_m$ and use the upper bound presented in \cite[Theorem 2]{H15}.

\section{Proofs of lower bounds for TLSII}
\subsection{Proof of Theorem \ref{thm:lower-exp-setting2}}
\label{section:lower-exp-setting2}
We start by observing that we can lower bound the minimax value
\eqref{eq:second_line} by restricting the family of distributions running over
the supremum. 
In particular, we choose $P$ such that $\Z_N:=\{(X_i,Y_i)\}_{i=1}^m \cup
\{(X_i,Y_i)\}_{i=m+1}^{m+u}$ will concentrate around the datasets designed in
Section~\ref{sec:proofs-lower-i}.
Namely, choose a set of distinct points $x_1,\dots,x_d$ shattered by $\Hyp$, pair such set to any
binary string $b:=(b_1,\dots,b_d)$, 
assume $m \geq d-1$,
and construct 
\begin{equation*}
P_0(X,Y) = \begin{cases}
  1/m &\mbox{if } (X,Y) = (x_i, b_i) \mbox{ for some } i=1,\ldots,d-1,\\ 
  1 - \frac{d-1}{m} & \mbox{if } (X,Y) = (x_d,b_d),\\
  0 & \mbox{otherwise.}
\end{cases} 
\end{equation*}

Since $x_1, \ldots, x_d$ are shattered by $\Hyp$, the inner infimum from
\eqref{eq:second_line} is equal to zero. Then,
\begin{align*}
\mathcal{M}^{\mathrm{II}}_{N,m}(\Hyp)
\geq
\inf_{h_m}\sup_b
\int_{\X\times\Y}
\frac{1}{u}\sum_{i=m+1}^{m+u} \mathbbm{1}\{ h_m(X_i) \neq Y_i \}\; d\,P_0(X_1,Y_1)\dots d\,P_0(X_{m+u},Y_{m+u}),
\end{align*}
where now $(X_1,Y_1),\dots,(X_{m+u}, Y_{m+u}) \sim P_0$.
Since the distribution of
\[
\sum_{i=m+1}^{m+u} \mathbbm{1}\{ h_m(X_i) \neq Y_i \}
\]
do not change over random permutations of the set $\{(X_i,Y_i)\}_{i=1}^{m+u}$, it follows that
\begin{align*}
&\mathcal{M}^{\mathrm{II}}_{N,m}(\Hyp)\\
&\geq
\inf_{h_m}
\sup_b
\int_{\X\times\Y}
\frac{1}{u}\sum_{i=m+1}^{m+u} \mathbbm{1}\{ h_m(X_i) \neq Y_i \}\; d\,P_0(X_1,Y_1)\dots d\,P_0(X_{m+u},Y_{m+u})\\
&=
\inf_{h_m}
\sup_b
\int_{\X\times\Y}
\E_{\pi}\left[
\frac{1}{u}\sum_{i=m+1}^{m+u} \mathbbm{1}\{ h_m^{\pi}(X_{\pi(i)}) \neq Y_{\pi(i)} \}\right]
d\,P_0(X_1,Y_1)\dots d\,P_0(X_{m+u},Y_{m+u})
\end{align*}
where 
\[
h_m^{\pi}:= h_m\left( \{(X_{\pi(i)},Y_{\pi(i)})\}_{i=1}^m,\{X_{\pi(j)}\}_{j=m+1}^{u+m} \right) \in \Hyp
\]
and $\pi$ is uniformly distributed over $\Sigma_N$. Let $B$ be uniformly
distributed over $\{0,1\}^d$.  We use the probabilistic method to lower bound
the supremum over $b$ with the average over $B$:
\begin{align*}
&\mathcal{M}^{\mathrm{II}}_{N,m}(\Hyp)\\
&\geq
\inf_{h_m}
\E_B\left[
\int_{\X\times\Y}
\E_{\pi}\left[
\frac{1}{u}\sum_{i=m+1}^{m+u} \mathbbm{1}\{ h_m^{\pi}(X_{\pi(i)}) \neq Y_{\pi(i)} \}\right]
d\,P_0(X_1,Y_1)\dots d\,P_0(X_{m+u},Y_{m+u})\right].
\end{align*}

For $\X_N:=\{X_i\}_{i=1}^N$ and  $j\in \{1,\ldots,d\}$ write $i_j(\X_N)$ to
denote the number of times that $x_j$ appears in $\X_N$, and write
$k_j(\X_N,\pi)$ to denote the number of times that $x_j$ appears in
$\{X_{\pi(i)}\}_{i=m+1}^{m+u}$.
In words, $i_j(\X_N)$ is the number of times that the input $x_j$ appears in
the union $\X_N$ of the training and test sets as a result of sampling from
distribution $P_0$, and $k_j(\X_N,\pi)$ is the number of times that the same
input appears in the test subset of $\X_N$, as specified by the permutation
$\pi$.

Using the previous notations, and for any fixed sample
$\{(X_i,Y_i)\}_{i=1}^{m+u}$, permutation $\pi\in\Sigma_N$, and binary string
$B\in\{0,1\}^d$, write
\begin{align*}
\sum_{i=m+1}^{m+u} \mathbbm{1}\{ h_m^{\pi}(X_{\pi(i)}) \neq Y_{\pi(i)} \}
&=
\sum_{i=1}^d k_j(\X_N,\pi) \cdot \mathbbm{1}\{h_m^{\pi}(x_i) \neq B_i\}\\
&\geq
\sum_{i=1}^d i_j(\X_N) \cdot \mathbbm{1}\{k_j(\X_N,\pi) = i_j(\X_N)\} \cdot \mathbbm{1}\{h_m^{\pi}(x_i) \neq B_i\}
\end{align*}
and consequently 
\begin{align*}
\mathcal{M}^{\mathrm{II}}_{N,m}(\Hyp)
&\geq
\inf_{h_m}
\E_B
\E_{\X_N}
\E_{\pi}\left[
\frac{1}{u}\sum_{j=1}^{d}i_j(\X_N) \cdot \mathbbm{1}\{i_j(\X_N) = k_j(\X_N,\pi)\} \cdot \mathbbm{1}\{ h_m^{\pi}(x_j) \neq B_j \}\right],
\end{align*}
where $\X_N \sim P_0^{m+u}$. Rearranging expectations yields
\begin{align*}
\mathcal{M}^{\mathrm{II}}_{N,m}(\Hyp)
&\geq
\inf_{h_m}
\E_{\X_N}
\E_{\pi}\left[
\frac{1}{u}\sum_{j=1}^{d}\mathbbm{1}\{i_j(\X_N) = k_j(\X_N,\pi)\} \cdot i_j(\X_N) \cdot
\E_B
\left[
\mathbbm{1}\{ h_m^{\pi}(x_j) \neq B_j \}
\right]\right]\\
&=
\frac{1}{2u}\sum_{j=1}^{d}
\E_{\X_N}
\E_{\pi}\left[
\mathbbm{1}\{i_j(\X_N) = k_j(\X_N,\pi)\} \cdot i_j(\X_N)
\right],
\end{align*}
because $h_m^{\pi}$ is independent from $B_j$ if $k_j(\X_N,\pi) = i_j(\X_N)$.
Also, since $i_j(\X_N)$ is independent from $\pi$, we have that 
\begin{align*}
\mathcal{M}^{\mathrm{II}}_{N,m}(\Hyp)
&\geq
\frac{1}{2u}\sum_{j=1}^{d}
\E_{\X_N}\left[
i_j(\X_N) \cdot
\Prob_{\pi}\bigl\{
i_j(\X_N) = k_j(\X_N,\pi)
\bigr\}
\right]\\
&=
\frac{1}{2u}\sum_{j=1}^{d}
\E_{\X_N}\left[
i_j(\X_N) \cdot
\frac{ {N - i_j(\X_N) \choose u - i_j(\X_N)}}{{N\choose u}}
\right],
\end{align*}
where the identity is due $k_j(\X_N, \pi)$ being a random variable following
a hypergeometric distribution with parameters $\bigl(N, i_j(\X_N), u\bigr)$.
By realizing that $\bigl( i_1(\X_N), \dots, i_d(\X_N)\bigr)$ follows a
multinomial distribution with parameters $\bigl(N, 1/m,\dots,1/m, 1 -
(d-1)/m\bigr)$, we obtain
\begin{align*}
\mathcal{M}^{\mathrm{II}}_{N,m}(\Hyp)
&\geq
\frac{1}{2u}\sum_{j=1}^{d}
\E_{\X_N}\left[
i_j(\X_N) \cdot
\frac{ {N - i_j(\X_N) \choose m}}{{N\choose m}}
\right]\\
&=
\frac{1}{2u}\sum_{j=1}^{d-1}
\sum_{k=0}^N {N \choose k}\left(\frac1m\right)^k \left(1 - \frac1m\right)^{N-k} k \frac{ {N - k \choose m}}{{N\choose m}}\\
&+
\frac{1}{2u}
\sum_{k=0}^N {N \choose k}\left(1 - \frac{d-1}{m}\right)^k \left(\frac{d-1}{m}\right)^{N-k} k \frac{ {N - k \choose m}}{{N\choose m}}
\\
&\geq
\frac{1}{2u}\sum_{j=1}^{d-1}
\sum_{k=0}^u k {u \choose k}\left(\frac1m\right)^k \left(1 - \frac1m\right)^{N-k} \\
&=
\frac{d-1}{2u}\left(1 - \frac{1}{m}\right)^m\frac{u}{m}\\
&\geq
\frac{d-1}{2em}\left(1 - \frac{1}{m}\right).
\end{align*}
where the equality follows from expanding the expectation formula of the
Binomial distribution with parameters $(N, 1/m)$ for $j=1,\ldots,d-1$ and
parameters $(N, 1-(d-1)/m)$ for $j=d$, the second inequality follows from
discarding the last term of the sum, simplifying the binomial coefficients, and
truncating the sum.  The last equality is due applying the expected value $u/m$
of $(d-1)$ $\text{Binomial}(u,1/m)$ random variables, yielding an extra
$(1-1/m)^m$ extra factor. The last factor is finally lower bounded by
$e^{-1}$.

\subsection{Proof of Theorem~\ref{thm:minimax_hp-II}}
\label{sect:proof-minimax_hp-II}
Let $x_1,\dots,x_d$ be a set of distinct points shattered by $\Hyp$, and let $\vec
b\in\{0,1\}^d$ be a binary string.
Fix a positive $p \leq 1/(d-1)$. Define the probability distribution
\begin{equation*}
  P(X,Y) = \begin{cases}
  p &\mbox{if } (X,Y) = (x_i, b_i) \mbox{ for some } i=1,\ldots,d-1,\\ 
  1 - (d-1)p & \mbox{if } (X,Y) = (x_d,b_d),\\
  0 & \mbox{otherwise.}
\end{cases} 
\end{equation*}

We will denote $\X_m:= \{X_1,\dots, X_m\}$ and $\X_u:=\{X_{m+1},\dots,X_{m+u}\}$.
Recall that $\{(X_i,Y_i)\}_{i=1}^{m+u}$ is an i.\,i.\,d.\:sample from $P$.
For any $\X_u$ and $i\in \{1,\dots, d\}$ let $k_i(\X_u)$ count a number of times an input $x_i$ appeared in $\X_u$.
Then, we have
\begin{align*}
\inf_{h_m}
\sup_P
\Prob\left\{
\err(h_m,\Z_u) \geq \epsilon
\right\}
&\geq
\inf_{h_m}
\sup_{p,\vec b}
\Prob\left\{
\err(h_m,\Z_u) \geq \epsilon
\right\}
\\
&=
\inf_{h_m}
\sup_{p,\vec b}
\Prob\left\{ \sum_{(x,y)\in \Z_u} \mathbbm{1}\{h_m(x)\neq y\} \geq \epsilon u \right\}\\
&=
\inf_{h_m}
\sup_{p,\vec b}
\Prob\left\{
\sum_{i=1}^d \mathbbm{1}\{h_m(x_i)\neq b_i\} k_i(\X_u)
\geq \epsilon u
\right\}\\
&\geq
\inf_{h_m}
\sup_{p}
\Prob\left\{
\sum_{i=1}^d \mathbbm{1}\{h_m(x_i)\neq B_i\} k_i(\X_u)
\geq \epsilon u
\right\},
\end{align*}
where the last inequality lower bounds the supremum over $\vec b$ with the
expected value over $(B_1,\dots,B_d)$, and $B$ is a random binary string
uniformly distributed on $\{0,1\}^d$. 
Throwing away summands for which $x_i \in \X_m$ we arrive at the following lower bound:
\begin{align}
\label{eq:s2-proof-general}
&
\inf_{h_m}
\sup_P
\Prob\left\{
\err(h_m,\Z_u) \geq \epsilon
\right\}
\geq
\inf_{h_m}
\sup_p
\Prob\left\{
\sum_{i=1}^d 
\mathbbm{1}\{h_m(x_i) \neq B_i\}
\mathbbm{1}\{ x_i \not\in \X_m\}
k_i(\X_u)
\geq
\epsilon u
\right\}.
\end{align}
Equation~\ref{eq:s2-proof-general} is the starting point to prove the two
separate statements comprising our result.

\subsubsection{Statement \ref{item:st1-s2}}
Let $\epsilon m \leq \frac{d-1}{21}$ and
further lower bound \eqref{eq:s2-proof-general} by ignoring the term corresponding to $i=d$:
\[
\inf_{h_m}
\sup_P
\Prob\left\{
\err(h_m,\Z_u) \geq \epsilon
\right\}
\geq
\inf_{h_m}
\sup_p
\Prob\left\{
\sum_{i=1}^{d-1} 
\mathbbm{1}\{h_m(x_i) \neq B_i\}
\mathbbm{1}\{ x_i \not\in \X_m\}
k_i(\X_u)
\geq
\epsilon u
\right\}.
\]
Denote
\[
Z=\sum_{i=1}^{d-1}
\mathbbm{1}\{h_m(x_i) \neq B_i\}
\mathbbm{1}\{ x_i \not\in \X_m\}
k_i(\X_u).
\]
Then,
\begin{align*}
\E\left[
Z
\right]
&=
\sum_{i=1}^{d-1} 
\Prob\left\{
h_m(x_i) \neq B_i
\cap
x_i \not\in \X_m
\right\}
\E\left[k_i(\X_u)
\right]\\
&=
\sum_{i=1}^{d-1} 
\Prob\left\{
h_m(x_i) \neq B_i
\big|
x_i \not\in \X_m
\right\}
\Prob\left\{
x_i \not\in \X_m
\right\}
u p\\
&=
\sum_{i=1}^{d-1} 
\frac{1}{2}
(1 - p)^m
u p
=
(d-1)\frac{up}{2}(1-p)^m,
\end{align*}
where the previous follows because $\X_u$ and $\X_m$ are independent, and
$\bigl(k_1(\X_u),\dots, k_d(\X_u)\bigr)$ follows a multinomial distribution of
$u$ trials and probabilities $\bigl(p,\dots,p,1-(d-1)p\bigr)$.
We can rewrite
\begin{align*}
\Prob\left\{
Z
\geq
\epsilon u
\right\}
&=
1
-
\Prob\left\{
-Z
+
\E[Z]
>
\E[Z]-\epsilon u
\right\}
\\
&=
1
-
\Prob\left\{
-Z+\E[Z]
>
(d-1)\frac{up}{2}(1-p)^m-\epsilon u
\right\}\\
&\geq
1
-
\Prob\left\{
-Z
+\E[Z]
>
(d-1)\frac{up}{2}(1-p)^m- \frac{(d-1)u}{21m}
\right\}.
\end{align*}

Next, we apply the Chebyshev-Cantelli inequality \cite[Theorem A.17]{DGL96} to
lower bound the previous expression. First, we simplify the probability
threshold used in the inequality. To this end, set $p = \frac{1}{2m}$, and
assume $m \geq \max\{(d-1)/2, 10\}$. In particular, this choice guarantees
$p\leq 1/(d-1)$, and provides
\begin{align*}
(d-1)\frac{up}{2}(1-p)^m- \frac{(d-1)u}{21m}
&=
\frac{u(d-1)}{4m}\left(1-\frac{1}{2m}\right)^m - \frac{(d-1)u}{21m}\\
&=
\frac{u(d-1)}{4m}\left(\left(1-\frac{1}{2m}\right)^{2m-1}\left(1-\frac{1}{2m}\right)\right)^{\frac{1}{2}} - \frac{(d-1)u}{21m}\\
&\geq
\frac{u(d-1)}{4m}\sqrt{\frac{19}{20e}} - \frac{(d-1)u}{21m} 
=  
C_0\frac{u(d-1)}{m}> 0,
\end{align*}
where the last inequality uses $m\geq 10$, $(1 - 1/x)^{x-1} \geq e^{-1}$, valid for all
$x\geq 1$, and introduces the notation 
\[
C_0 := \frac{1}{4}\sqrt{\frac{19}{20e}} - \frac{1}{21} > 0.
\]
In order to apply Chebyshev-Cantelli inequality we also need to upper bound the variance $\mathbb{V}[Z]$:
\begin{align*}
\mathbb{V}[Z]
&\leq
\E\left[
\left(
\sum_{i=1}^{d-1} 
\mathbbm{1}\{h_m(x_i) \neq B_i\}
\mathbbm{1}\{ x_i \not\in \X_m\}
k_i(\X_u)
\right)^2\right]\\
&\leq
\E\left[
\left(
\sum_{i=1}^{d-1} 
k_i(\X_u)
\right)^2\right]
=
\mathbb{V}\left[
\sum_{i=1}^{d-1} 
k_i(\X_u)
\right]
+
\left(\E\left[
\sum_{i=1}^{d-1} 
k_i(\X_u)
\right]\right)^2\\
&=
u(d-1)p\bigl(1 - (d-1)p\bigr)
+
u^2(d-1)^2p^2,
\end{align*}
where the previous follows because 
\[
\sum_{i=1}^{d-1} 
k_i(\X_u)
\sim
\mathrm{Binom}\bigl(u, (d-1)p\bigr).
\]
Using the previous probability threshold and variance, we apply the Chebyshev-Cantelli inequality as
\begin{align*}
\Prob\left\{
Z
\geq
\epsilon u
\right\}
&\geq
1
-
\frac{\mathbb{V}[Z]}{\mathbb{V}[Z]+ C_0^2\frac{u^2(d-1)^2}{m^2}}\\
&\geq
1
-
\frac{\frac{m}{2u(d-1)}\left(1 - \frac{d-1}{2m}\right)+\frac{1}{4}}
{\frac{m}{2u(d-1)}\left(1 - \frac{d-1}{2m}\right)+\frac{1}{4}+ C_0^2}\\
&\geq
1
-
\frac{\frac{1}{2(d-1)}+\frac{1}{4}}
{\frac{1}{2(d-1)}+\frac{1}{4}+ C_0^2}\\
&\geq
1
-
\frac{\frac{3}{4}}
{\frac{3}{4}+ C_0^2}
\geq
\frac{1}{80},
\end{align*}
where we used $u\geq m$, $d\geq 2$, and the numerical value of $C_0$.
This concludes the proof of the first statement.

\subsubsection{Statement \ref{item:st2-s2}}
Note that if $\epsilon \neq 0$, then we can assume $\epsilon \geq 1/u$, because
$\err(h_m,\Z_u)$ can not take values in $(0,1/u)$.  We start by rewriting
\eqref{eq:s2-proof-general} as 
\begin{align*}
\sup_p
\sum_{K=0}^{d-1}
\Prob\left\{
\sum_{j\in J(\X_m)}
B_j k_j(\X_u)
\geq
\epsilon u
\bigg|
|J(\X_m)| = K
\right\}
\Prob\{|J(\X_m)| = K \}.
\end{align*}
This expression calls for four remarks.
First, $J(\X_m):=\{j=1,\dots,d\colon x_j \not \in \X_m\}$ are the indices of
the inputs not appearing in the training set $\Z_m$.
Second, the upper limit of the previous sum is $d-1$, since at least one of the
$d$ inputs $x_1,\dots,x_d$ appears in $\X_m$ and also we assumed $m \geq d -1$.
Third, for any $j\in J(\X_m)$, the random variable $\mathbbm{1}\{h_m(x_j)\neq
B_j\}$ follows a Bernoulli distribution with parameter $1/2$.
Fourth, for any two different $i,j\in J(\X_m)$, the random variables
$\mathbbm{1}\{h_m(x_i)\neq B_i\}$ and $\mathbbm{1}\{h_m(x_j)\neq B_j\}$ are
independent (for more details, revisit the proof of Theorem
\ref{thm:minimax_hp}).
Then, the sum $\sum_{j\in J(\X_m)} \mathbbm{1}\{h_m(x_j) \neq B_j\}k_j(\X_u)$
is a sum of $|J(\X_m)|$ independent $1/2$ Bernoulli random variables, where the $j$th of them is weighted by $k_j(\X_u)$. 

Next, we specify which $K$ inputs $\{x_{i_1},\dots,x_{i_K}\} \subset
\{x_1,\dots,x_d\}$ do not appear in the training set $\Z_m$.  For any set of
indices $I\subseteq\{1,\dots,d\}$, let $E(I)$ denote all sets of inputs $\X_m$
satisfying $x_i\not \in \X_m$ if $i\in I$, and $x_j\in\X_m$ if
$j\in\{1,\dots,d\}\setminus I$.  Then, for any two subsets
$I_1,I_2\subseteq\{1,\dots,d-1\}$ of equal cardinality $|I_1| = |I_2|$, it
follows that 
\[
\Prob_{\X_m}\{ E(I_1)\}
=
\Prob_{\X_m}\{ E(I_2)\},
\]
since inputs $x_1,\dots,x_{d-1}$ are equiprobable for our choice of distribution $P$.
By ignoring the cases where $x_d$ does not appear in the training set, we get 
\begin{align*}
&
\inf_{h_m}
\sup_P
\Prob\left\{
\err(h_m,\Z_u) \geq \epsilon
\right\}\\
&\geq
\sup_p
\sum_{K=0}^{d-1}
{d-1 \choose K}
\Prob\left\{
Z\bigl(K,\vec k(\X_u)\bigr)
\geq
\epsilon u
\right\}
\Prob\bigl\{
E(\{1,\dots,K\})
\bigr\},
\end{align*}
where $\vec k(\X_u) := \bigl(k_1(\X_u),\dots,k_d(\X_u)\bigr)$, and $Z(K,\vec a)
= \sum_{j=1}^K \vec a_j B_j$ is a weighted sum of i.i.d. Bernoulli random variables
$\{B_i\}_{i=1}^d$ with parameter $1/2$, for some $\vec a \in \R^d_+$.  The
binomial coefficient ${d-1 \choose K}$ accounts for the number of subsets 
of $\{x_1,\dots,x_{d-1}\}$ with $K$ elements.

Note that  
\begin{align}
\Prob\bigl\{
E(\{1,\dots,K\})
\bigr\}
&\geq
p^{d-K-1}
\bigl( 1 - (d-1)p\bigr)
(1 - Kp)^{m-d+K}\label{eq:three-factors}\\
&\geq
p^{d-K-1}
\bigl( 1 - (d-1)p\bigr)^{m-d+K+1},\nonumber
\end{align}
holds because $K\leq d -1$, each of the inputs
$x_{K+1},\dots,x_d$ appears at least once in $\X_m$ (see the first two factors
of \eqref{eq:three-factors}), and none of the inputs $x_1,\dots,x_K$ appears in
$\X_m$ (see the third factor in \eqref{eq:three-factors}).
Using this expression, our lower bound becomes 
\[
\sup_p
\sum_{K=0}^{d-1}
{d-1 \choose K}
\Prob\left\{
Z\bigl(K,\vec k(\X_u)\bigr)
\geq
\epsilon u
\right\}
p^{d-K-1}
\bigl( 1 - (d-1)p\bigr)^{m-d+K+1}.
\]

We further lower bound by truncating the start of the sum, as in
\begin{equation}
\sup_p
\sum_{K=\lceil(d-1)/2\rceil}^{d-1}
{d-1 \choose K}
\Prob\left\{
Z\bigl(K,\vec k(\X_u)\bigr)
\geq
\epsilon u
\right\}
p^{d-K-1}
\bigl( 1 - (d-1)p\bigr)^{m-d+K+1}.
\label{eq:intermediate3}
\end{equation}

Next, we are interested in applying the  Chebyshev-Cantelli inequality
\cite[Theorem A.17]{DGL96} to the random variable $Z\bigl(K,\vec k(\Z_u)\bigr)$
in \eqref{eq:intermediate3}. To this end, we must first compute its expectation
and variance. We start by noticing that the random variable
\[
\bigl(k_1(\X_u),\dots, k_d(\X_u)\bigr)\]
follows a multinomial distribution of $u$ trials and probabilities
$\bigl(p,\dots,p,1-(d-1)p\bigr)$.  This implies
\begin{equation}
\E\bigl[Z\bigl(K,\vec k(\Z_u)\bigr)\bigr] = \frac{K}{2} up,\label{eq:expectation3}
\end{equation}
and by definition we have
\begin{align*}
\mathbb{V}\bigl[Z\bigl(K,\vec k(\Z_u)\bigr)\bigr]
&=
\E\bigl[Z^2\bigl(K,\vec k(\Z_u)\bigr)\bigr]
-
\frac{K^2}{4}u^2 p^2.
\end{align*}
Since $Z$ depends on the Bernoulli random variables $B := B_1,\dots,B_d$,
conditioning on $B$ produces 
\[
\E\bigl[Z^2\bigl(K,\vec k(\Z_u)\bigr)\bigr]
=
\E\left[
\E\left[Z^2\bigl(K,\vec k(\Z_u)\bigr)\bigg| \sum_{i=1}^K B_i\right]
\right].
\]
For any index set $I\subseteq \{1,\dots,d-1\}$, it follows from the properties of multinomial distribution that
\[
\sum_{i\in I}  k_i(\X_u)
\sim
\mathrm{Binom}(u, |I|p).
\]
Let $V = \sum_{i=1}^K B_i$. Then,
\begin{align*}
\E\left[
\E\left[Z^2\bigl(K,\vec k(\Z_u)\bigr)\big| V\right]
\right]
&=
\E\left[
\E\left[
\bigl(\mathrm{Binom}(u, Vp)\bigr)^2
\big| V\right]
\right]\\
&=
\E\left[
\mathbb{V}\left[\mathrm{Binom}(u, Vp)
\big| V\right] + \left(\E\left[\mathrm{Binom}(u, Vp)
\big| V\right]\right)^2
\right]
\\
&=
\E\left[
uVp(1-Vp) + \left(\E\left[\mathrm{Binom}(u, Vp)
\big| V\right]\right)^2
\right]\\
&=
\E\left[
uVp(1-Vp) + u^2V^2p^2
\right].
\end{align*}
Noting that
\[
\E\left[V^2\right] = \E\left[ \left(\sum_{i=1}^K B_i\right)^2\right]
=
K + \frac{K^2 - K}{4} = \frac{K(K+1)}{4}
\]
we get
\begin{align}
\mathbb{V}\bigl[Z\bigl(K,\vec k(\Z_u)\bigr)\bigr]
&=
up\frac{K}{2}
-
up^2\frac{K(K+1)}{4}
+
u^2p^2\frac{K(K+1)}{4}
-u^2p^2\frac{K^2}{4}\nonumber\\
&=
up\frac{K}{2}
-
up^2\frac{K(K+1)}{4}
+
u^2p^2\frac{K}{4}\nonumber\\
&=
\frac{upK}{2}\left(
1 - p\frac{K+1}{2} + \frac{up}{2}\label{eq:variance3}
\right).
\end{align}
We are now ready to apply the Chebyshev-Cantelli inequality \cite[Theorem
A.17]{DGL96} using the expectation \eqref{eq:expectation3} and the variance
\eqref{eq:variance3}. In particular, 
\begin{align*}
\Prob\left\{
Z\bigl(K,\vec k(\Z_u)\bigr)
\geq
\epsilon u
\right\}
&=
1 - \Prob\left\{
-Z\bigl(K,\vec k(\Z_u)\bigr) + \frac{K}{2}up
>
\frac{K}{2}up - 
\epsilon u
\right\}\\
&\geq
1
-
\frac{\frac{upK}{2}\left(
1 - p\frac{K+1}{2} + \frac{up}{2}
\right)}
{\frac{upK}{2}\left(
1 - p\frac{K+1}{2} + \frac{up}{2}
\right) + 
\left(\frac{K}{2}up - 
\epsilon u\right)^2}\\
&=
1
-
\frac{\frac{pK}{2}\left(
1 - p\frac{K+1}{2} + \frac{up}{2}
\right)}
{\frac{pK}{2}\left(
1 - p\frac{K+1}{2} + \frac{up}{2}
\right) + 
\left(\frac{K}{2}p - 
\epsilon\right)^2u}
\end{align*}
as long as
\[
\frac{K}{2}p  \geq
\epsilon.
\]

To guarantee this, set $p = \frac{16 \epsilon}{d-1}$, and $\epsilon\leq
1/16$ (which was also needed to satisfy $p\leq 1/(d-1)$):
\[
\frac{K}{2}p = \frac{KD\epsilon}{2(d-1)} \geq \frac{16\epsilon}{4} = 4\epsilon > \epsilon.
\]
Using this choice, continue lower bounding as 
\begin{align*}
\Prob\left\{
Z\bigl(K,\vec k(\Z_u)\bigr)
\geq
\epsilon u
\right\}
&\geq
1
-
\frac{\frac{16\epsilon K}{2(d-1)}\left(
1 - \frac{16\epsilon(K+1)}{2(d-1)} + \frac{16u\epsilon}{2(d-1)}
\right)}
{\frac{16\epsilon K}{2(d-1)}\left(
1 - \frac{16\epsilon(K+1)}{2(d-1)} + \frac{16u\epsilon}{2(d-1)}
\right) + 
\left(3\epsilon\right)^2u}\\
&=
1
-
\frac{\frac{8K}{d-1}\left(
1 - \frac{8\epsilon(K+1)}{d-1} + \frac{8u\epsilon}{d-1}
\right)}
{\frac{8 K}{d-1}\left(
1 - \frac{8\epsilon(K+1)}{d-1} + \frac{8u\epsilon}{d-1}
\right) + 
9u\epsilon}\\
&\geq
1
-
\frac{8\left(
1 - 4\epsilon + \frac{8u\epsilon}{d-1}
\right)}
{8\left(
1 - 4\epsilon + \frac{8u\epsilon}{d-1}
\right) + 
9u\epsilon}\\
&=
1
-
\frac{8\left(
\frac{1}{\epsilon u} - \frac{4}{u} + \frac{8}{d-1}
\right)}
{8\left(
\frac{1}{\epsilon u} - \frac{4}{u} + \frac{8}{d-1}
\right) + 
9},
\end{align*}
where the last inequality is due to $\lceil (d-1)/2\rceil \leq K\leq d-1$, and
the fact that $x\mapsto\frac{x}{x + a}$ is an increasing function for $x,a\geq
0$.  By noting that $1/(\epsilon u) \leq 1$, we get
\[
\Prob\left\{
Z\bigl(K,\vec k(\Z_u)\bigr)
\geq
\epsilon u
\right\}
\geq
1
-
\frac{8\left(
1 + 8
\right)}
{8\left(
1 + 8
\right) + 
9}
=
\frac{1}
{9}.
\]

Plugging this constant into \eqref{eq:intermediate3} yields
\begin{align}
&\inf_{h_m}
\sup_P
\Prob\left\{
\err(h_m,\Z_u) \geq \epsilon
\right\}\nonumber\\
&\geq
\frac{1}{9} \sum_{K=\lceil(d-1)/2\rceil}^{d-1}
{d-1 \choose K}
\left(\frac{16\epsilon}{d-1}\right)^{d-K-1}
\bigl( 1 - 16\epsilon\bigr)^{m-d+K+1}\nonumber\\
&\geq
\frac{1}{9} \sum_{K=\lceil(d-1)/2\rceil}^{d-1}
{d-1 \choose K}
\left(\frac{16\epsilon}{d-1}\right)^{d-K-1}
\bigl( 1 - 16\epsilon\bigr)^{m}\nonumber\\
&\geq
\frac{1}{9} e^{-\frac{16m\epsilon}{1-16\epsilon}}
\left(\frac{16\epsilon}{d-1}\right)^{d-1}
\sum_{K=\lceil(d-1)/2\rceil}^{d-1}
{d-1 \choose K}
\left(\frac{d-1}{16\epsilon}\right)^{K}\nonumber\\
&\geq
\frac{1}{9} e^{-32m\epsilon}
\left(\frac{16\epsilon}{d-1}\right)^{d-1}
\sum_{K=\lceil(d-1)/2\rceil}^{d-1}
{d-1 \choose K}
\left(\frac{d-1}{16\epsilon}\right)^{K},\label{eq:intermediate4}
\end{align}
where we lower-bounded exponents, and the third inequality is due to $1-x\geq
e^{-x/(1-x)}$, $\epsilon \leq 1/32$.
Note that $(d-1)/(16\epsilon) \geq 1$ and that 
\[
d - 1 - K \leq K
\]
holds for $K\in\{\lceil(d-1)/2\rceil,\dots, d-1\}$. Then,
\begin{align*}
\sum_{K=\lceil(d-1)/2\rceil}^{d-1}
{d-1 \choose K}
\left(\frac{d-1}{16\epsilon}\right)^{K}
&=
\sum_{K=\lceil(d-1)/2\rceil}^{d-1}
{d-1 \choose d-1 - K}
\left(\frac{d-1}{16\epsilon}\right)^{K}\\
&\geq
\sum_{K=\lceil(d-1)/2\rceil}^{d-1}
{d-1 \choose d-1 - K}
\left(\frac{d-1}{16\epsilon}\right)^{d-1-K}\\
&=
\sum_{K=0}^{d-1-\lceil(d-1)/2\rceil}
{d-1 \choose K}
\left(\frac{d-1}{16\epsilon}\right)^{K}\\
&\geq
\sum_{K=0}^{\lceil(d-1)/2\rceil-1}
{d-1 \choose K}
\left(\frac{d-1}{16\epsilon}\right)^{K},
\end{align*}
where the last inequality uses the fact that, for any integer $d\geq 2$, it
follows that 
\[
d - \left\lceil \frac{d-1}{2}\right\rceil
\geq
\left\lceil \frac{d-1}{2}\right\rceil.
\]
Next, we apply the Binomial theorem
\[
\sum_{K=0}^{d-1}
{d-1 \choose K}
\left(\frac{d-1}{16\epsilon}\right)^{K}
=
\left( 1 + \frac{d-1}{16\epsilon}\right)^{d-1}
\]
to obtain
\[
\sum_{K=\lceil(d-1)/2\rceil}^{d-1}
{d-1 \choose K}
\left(\frac{d-1}{16\epsilon}\right)^{K}
\geq
\frac{1}{2}\left( 1 + \frac{d-1}{16\epsilon}\right)^{d-1}.
\]
Plugging this last result into \eqref{eq:intermediate4} produces 
\begin{align*}
\inf_{h_m}
\sup_P
\Prob\left\{
\err(h_m,\Z_u) \geq \epsilon
\right\}
&\geq
\frac{1}{18} e^{-32m\epsilon}
\left(\frac{16\epsilon}{d-1}\right)^{d-1}
\left( 1 + \frac{d-1}{16\epsilon}\right)^{d-1}\\
&=
\frac{1}{18} e^{-32m\epsilon}
\left( 1 + \frac{16\epsilon}{d-1}\right)^{d-1}\\
&\geq
\frac{1}{18} e^{-32m\epsilon}.
\end{align*}

\section{Proofs from Section \ref{subsection:relations}}
Recall that $h_m$ is used to denote learning algorithms based both on labeled training sample $\Z_m$ and unlabeled points $\X_u$, while $h^0_m$ denotes supervised learning algorithms based only on $\Z_m$.
\label{proof:relations}
\subsection{Proof of Theorem \ref{thm:relation-II-SL}}
\label{proof:relation-II-SL}
First we will prove the first inequality of \eqref{eq:relations-expect}.
We have
\begin{align*}
\mathcal{M}^{\mathrm{II}}_{N,m}(\Hyp)&:=\inf_{h_m}
\sup_{P}
\E\left[
\err(h_m,\Z_u)
\right]\\
&=
\inf_{h_m}
\sup_{P}
\Bigl[
\E\left[
\err(h_m,\Z_u) - L(h_m)
\right]
+
\E\left[
L(h_m)
\right]
\Bigr]\\
&\leq
\inf_{h_m}
\sup_{P}
\E\left[
\err(h_m,\Z_u) - L(h_m)
\right]
+
\inf_{h_m}
\sup_{P}
\E\left[
L(h_m)
\right],
\end{align*}
where we used $\sup(a+b) \leq \sup a + \sup b$.
Obviously, 
\begin{align*}
\inf_{h_m}
\sup_{P}
\E\left[
\err(h_m,\Z_u) - L(h_m)
\right]
&\stackrel{(i)}{\leq}
\inf_{h_m^0}
\sup_{P}
\E\left[
\err(h_m^0,\Z_u) - L(h_m^0)
\right]\\
&\stackrel{(ii)}{=}
\inf_{h_m^0}
\sup_{P}
\E\left[
\E\left[
\err(h_m^0,\Z_u) - L(h_m^0)
\Big| \Z_m
\right]
\right] = 0,
\end{align*}
where (i) is because $h_m$ is allowed to ignore $\X_u$ and
(ii) is uses the fact that, when conditioned on $\Z_m$, $\err(h^0_m, \Z_u)$ is an average of i.i.d.\:Bernoulli random variables with parameters $L(h_m^0)$.
We conclude that
\[
\mathcal{M}^{\mathrm{II}}_{N,m}(\Hyp)
\leq
\inf_{h_m}
\sup_{P}
\E\left[
L(h_m)
\right]
=
\mathcal{M}^{\mathrm{SSL}}_{N,m}(\Hyp).
\]

For the second inequality of \eqref{eq:relations-expect} we notice that
\[
\inf_{h_m} 
\sup_P
\E\left[
L( h_m)\right]
\leq
\inf_{h^0_m} 
\sup_P
\E\left[
L( h^0_m)\right].
\]

Next we turn to the first inequality of \eqref{eq:relations-prob}.
\begin{align}
\notag
\mathcal{M}^{\mathrm{II}}_{\epsilon,N,m}(\Hyp)
&:=
\inf_{h_m}
\sup_{P}
\Prob\left\{
\err(h_m,\Z_u)
\geq \epsilon \right\}\\
\notag
&=\inf_{h_m}
\sup_{P}
\Prob\left\{
\err(h_m,\Z_u) - L(h_m)  + L(h_m)
\geq \epsilon \right\}\\
\notag
&\stackrel{(i)}{\leq}
\inf_{h_m}
\sup_{P}\Bigl[
\Prob\left\{
\err(h_m,\Z_u) - L(h_m) \geq \epsilon/2\right\} 
+ \Prob\left\{L(h_m) \geq \epsilon/2 \right\}\Bigr]\\
\label{eq:proof-relations-II-SSL}
&\stackrel{(ii)}{\leq}
\inf_{h_m}
\sup_{P}
\Prob\left\{
\err(h_m,\Z_u) - L(h_m)  \geq \epsilon/2\right\} 
+ \inf_{h_m}
\sup_{P}\Prob\left\{L(h_m) \geq \epsilon/2 \right\},
\end{align}
where in (i) we used the fact that for any $a,b$, and $\epsilon$ 
if $a + b\geq \epsilon$ then 
either $a \geq \epsilon/2$ or $b\geq \epsilon/2$ holds true and combined it with the union bound $\Prob\{A \cup B\} \leq \Prob\{A\} + \Prob\{B\}$
and (ii) uses $\sup(a+b) \leq \sup a + \sup b$.
Next we write
\begin{align*}
&\inf_{h_m}
\sup_{P}
\Prob\left\{
\err(h_m,\Z_u) - \Prob_{(X,Y)\sim P}\{h_m(X) \neq Y \} 
\geq 
\epsilon/2
\right\}\\
&\leq
\inf_{h_m^0}
\sup_{P}
\Prob\left\{
\err(h_m^0,\Z_u) - \Prob_{(X,Y)\sim P}\{h_m^0(X) \neq Y \} 
\geq 
\epsilon/2
\right\}.
\end{align*}
Since conditioning on $\Z_m$ turns $\err(h^0_m, \Z_u)$ into an average of iid Bernoulli random variables with parameters $L(h_m^0)$,
we use Hoeffding's inequality \cite[Theorem 2.8]{BLM13} and obtain
\begin{align*}
&\Prob\left\{
\err(h_m^0,\Z_u) - \Prob_{(X,Y)\sim P} \{ h_m^0(X) \neq Y\}
\geq \epsilon/2 \right\}\\
&=
\int_{\Z_m}\Prob\left\{
\err(h_m^0,\Z_u) - L(h_m^0)
\geq \epsilon/2  \Big| \Z_m\right\} dP(\Z_m)\\
&\leq
\int_{\Z_m} e^{- u \epsilon^2/2} dP(\Z_m) = e^{- u \epsilon^2/2}.
\end{align*}
Together with \eqref{eq:proof-relations-II-SSL}, this proves the first inequality of \eqref{eq:relations-prob}.
For the second inequality of \eqref{eq:relations-prob}, write
\[
\mathcal{M}^{\mathrm{SSL}}_{\epsilon,m}(\Hyp) = \inf_{h_m} 
\sup_P
\Prob\bigl\{
L( h_m)
\geq \epsilon
\bigr\}
\leq
\inf_{h^0_m} 
\sup_P
\Prob\bigl\{
L( h^0_m)
\geq \epsilon
\bigr\}
=
\mathcal{M}^{\mathrm{SL}}_{\epsilon,m}(\Hyp).
\]

\section{Auxiliary Results}

\begin{lemma}
\label{lemma:binomial-ratio}
Let $n,k,i$ be three non-negative integers such that $i \leq k\leq n$. Then,
\begin{align*}
\frac{{n - i \choose k - i}}{{n \choose k}} 
&\geq
\max\left\{
\left(1 - \frac{n - k}{n-i+1}\right)^i
,
\left(1 - \frac{i}{k+1}\right)^{n-k}
\right\}
\\
&\geq
\exp\left(
-\frac{(n-k)i}{k-i+1}
\right),
\end{align*}
and
\begin{align*}
\frac{{n - i \choose k - i}}{{n \choose k}} 
&\leq
\min\left\{
\left(1 - \frac{n - k}{n}\right)^i
,
\left(1 - \frac{i}{n}\right)^{n-k}
\right\}.
\end{align*}
\end{lemma}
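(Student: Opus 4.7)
The plan is to start by rewriting the ratio $\binom{n-i}{k-i}/\binom{n}{k}$ in two equivalent telescoping product forms, and then bound each factor by its extreme value. A direct cancellation of factorials gives
\[
\frac{\binom{n-i}{k-i}}{\binom{n}{k}}
= \frac{(n-i)!\,k!}{n!\,(k-i)!}
= \prod_{j=0}^{i-1}\frac{k-j}{n-j}
= \prod_{j=0}^{i-1}\left(1 - \frac{n-k}{n-j}\right),
\]
where the last step uses $\frac{k-j}{n-j} = 1 - \frac{n-k}{n-j}$. Applying the symmetry $\binom{n-i}{k-i}=\binom{n-i}{n-k}$ and the same cancellation, the ratio equals the alternative product
\[
\prod_{j=0}^{n-k-1}\frac{n-i-j}{n-j}
= \prod_{j=0}^{n-k-1}\left(1 - \frac{i}{n-j}\right).
\]

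For the first lower bound, I would observe that in the product $\prod_{j=0}^{i-1}(1 - \frac{n-k}{n-j})$, each factor is minimized at $j=i-1$, producing $1 - \frac{n-k}{n-i+1}$; hence the product is at least $\bigl(1-\frac{n-k}{n-i+1}\bigr)^i$. Symmetrically, in $\prod_{j=0}^{n-k-1}(1 - \frac{i}{n-j})$ each factor is smallest at $j = n-k-1$, yielding $1 - \frac{i}{k+1}$, which gives the bound $\bigl(1-\frac{i}{k+1}\bigr)^{n-k}$. Taking the maximum of the two yields the claimed two-term lower bound.

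To obtain the exponential lower bound, I would start from the bound $\bigl(1-\frac{i}{k+1}\bigr)^{n-k}$ and apply the elementary inequality $1 - x \geq e^{-x/(1-x)}$ valid for $x\in[0,1)$ with $x = i/(k+1)$, which gives
\[
\left(1-\frac{i}{k+1}\right)^{n-k}
\geq
\exp\left(-\,\frac{(n-k)\,i/(k+1)}{1 - i/(k+1)}\right)
= \exp\left(-\,\frac{(n-k)i}{k-i+1}\right).
\]

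The upper bounds are completely symmetric: in each of the two product representations, every factor $1 - \frac{n-k}{n-j}$ (resp.\ $1 - \frac{i}{n-j}$) is largest at $j=0$, giving $1 - \frac{n-k}{n}$ (resp.\ $1 - \frac{i}{n}$); thus the product is at most $\bigl(1-\frac{n-k}{n}\bigr)^i$ and $\bigl(1-\frac{i}{n}\bigr)^{n-k}$ respectively, and the minimum of the two gives the stated upper bound. No step here is a real obstacle: the whole argument is a careful bookkeeping exercise, and the only mild subtlety is choosing the right index ($j=0$ vs.\ $j=i-1$ or $j=n-k-1$) at which the extreme factor appears in each product.
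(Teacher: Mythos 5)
Your proof is correct and follows essentially the same route as the paper: the same two factorial cancellations into telescoping products, with each product bounded by its extreme factor to obtain the two lower bounds and (from the other endpoint) the two upper bounds. The only cosmetic difference is that you derive the exponential bound from $\bigl(1-\tfrac{i}{k+1}\bigr)^{n-k}$ via $1-x\geq e^{-x/(1-x)}$, whereas the paper derives it from $\bigl(1-\tfrac{n-k}{n-i+1}\bigr)^{i}$; both give the identical exponent $-\tfrac{(n-k)i}{k-i+1}$, so this is an equivalent step rather than a different argument.
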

\begin{proof}
To show the first part of the maximum, write
\begin{align*}
\frac{{n - i \choose k - i}}{{n \choose k}} 
=
\frac{(n-i)!k!}{(k-i)!n!}
&= 
\frac{(k - i + 1) \cdots (k-1) k}{(n - i + 1) \cdots (n-1) n}\\
&=
\left(1 - \frac{n - k}{n-i+1}\right)
\left(1 - \frac{n - k}{n-i+2}\right)
\cdots
\left(1 - \frac{n - k}{n}\right)\\
&\geq
\left(1 - \frac{n - k}{n-i+1}\right)^i\\
&\geq
\exp\left(
-\frac{(n-k)i}{k-i+1}
\right),
\end{align*}
where the last inequality follows because $(1 - 1/x)^{x-1}$ monotonically decreases to $e^{-1}$ for $x \geq 1$.

To show the second part of the maximum, write
\begin{align*}
\frac{{n - i \choose k - i}}{{n \choose k}} 
&= 
\frac{(k-i+1)(k-i+2)\cdots(n-i)}{(k+1)(k+2)\cdots n}\\
&=
\left(1 - \frac{i}{k+1}\right)\left(1 - \frac{i}{k+2}\right)\cdots \left(1 - \frac{i}{n}\right)
\geq
\left(1 - \frac{i}{k+1}\right)^{n-k}.
\end{align*}
The upper bounds follow from the same expressions.
\end{proof}
\end{document}